\providecommand{\keywords}[1]{\textbf{\textit{Keywords}} #1}
\newcommand\independent{\protect\mathpalette{\protect\independenT}{\perp}}\def\independenT#1#2{\mathrel{\rlap{$#1#2$}\mkern2mu{#1#2}}}
\newcommand{\tb}[1]{\textbf{#1}}
\newcommand{\mf}[1]{\mathbf{#1}}
\newtheorem{lem}{Lemma}
\newtheorem{thm}{Theorem}
\newtheorem{prop}{Proposition}
\newtheorem{defn}{Definition}
\newcommand{\orcid}[1]{\href{#1}}
\begin{document}
\title{Adjustment Criteria for Recovering Causal Effects from Missing Data}
%

%

\author[1]{Mojdeh Saadati 0000-0002-9092-8464}
\author[2]{Jin Tian 0000-0001-5313-1600}
\affil[1,2]{Department of Computer Science, Iowa State University} 
\affil[1,2]{\{msaadati, jtian\}@iastate.edu}


\maketitle 
\setcounter{footnote}{0}

\begin{abstract}
    Confounding bias, missing data, and selection bias are three common obstacles to valid causal inference in the data sciences. Covariate adjustment is the most pervasive technique for recovering casual effects from confounding bias. In this paper we introduce a covariate adjustment formulation for controlling confounding bias in the presence of missing-not-at-random data and develop a necessary and sufficient condition for recovering causal effects using the adjustment. We also introduce an adjustment formulation for controlling both confounding and selection biases in the presence of missing data and develop a necessary and sufficient condition for valid adjustment. Furthermore, we present an algorithm that lists all valid adjustment sets and an algorithm that finds a valid adjustment set containing the minimum number of variables, which are useful for researchers interested in selecting adjustment sets with desired properties.

\keywords{missing data, missing not at random, causal effect, adjustment,  selection bias.}

\end{abstract}
\section{Introduction}

	Discovering causal relationships from observational data has been an important task in empirical sciences, for example, assessing the effect of a drug on curing diabetes, a fertilizer on growing agricultural products, and an advertisement on the success of a political party. 
	One major challenge to estimating the effect of a treatment on an outcome from observational data is the existence of 
	\emph{confounding bias} - i.e., the lack of control on the effect of spurious variables on the outcome. 
	This issue is formally addressed as the \emph{identifiability problem} in \cite{Pearl:2009:CMR:1642718}, which concerns with computing the effect of a set of treatment variables ($\tb{X}$) on a set of outcome variables ($\tb{Y}$), denoted by $P(\tb{y}\mid do(\tb{x}))$, given observed probability distribution $P(\tb{V})$ and a causal graph $G$, where  $P(\tb{V})$ corresponds to the observational data and $G$ is a directed acyclic graph (DAG) representing qualitative causal relationship assumptions between variables in the domain. The effect $P(\tb{y} \mid do(\tb{x}))$ may not be equal to its probabilistic counterpart $P(\tb{y}\mid\tb{x})$ due to the existence of variables, called \emph{covariates}, that affect both the treatments and outcomes, and the difference is known as confounding bias. For example, Fig.~\ref{fig-1}(a) shows a causal graph where variable $Z$ is a covariate for estimating the effect of $X$ on $Y$. 
	
	Confounding bias problem has been studied extensively in the field. In principle the identifiability problem can be solved using a set of causal inference rules called \emph{do-calculus} \cite{pearl1995causal}, and complete identification algorithms have been developed \cite{tian2002general,Huang:2006:ICB:1597348.1597371,shpitser2006identification}. 
	In practice, however, the most widely used method for controlling the confounding bias is the following ``adjustment formula'' 
	\begin{equation*}
	P(\tb{y} \mid do(\tb{x})) = \sum_{\tb{z}}^{} P(\tb{y} \mid \tb{x}, \tb{Z} = \tb{z})P(\tb{Z} = \tb{z}), 
	\end{equation*}
which dictates that the causal effect $P(\tb{y} \mid do(\tb{x}))$ can be computed by \emph{controlling} for a set of covariates $\tb{Z}$. Pearl provided a back-door criterion under which a set $\tb{Z}$ makes the adjustment formula hold \cite{pearl1995causal}.  

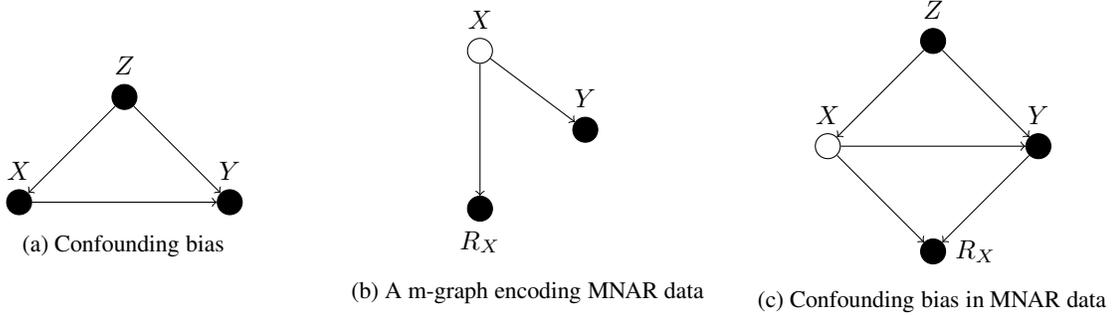
\begin{figure}[H] 
	\begin{center}
		\begin{tabular}{ccc}
	\begin{subfigure}[normal]{0.3\linewidth}
	\begin{tikzpicture}[scale = 0.7]
	\node[circle,draw , fill = black, ,label=above:{$X$}] (X) at (0,3) {};
	\node[circle,draw, fill = black, ,label=above:{$Y$}] (Y) at (4,3) {};
	\node[circle,draw, fill = black, ,label=above:{$Z$}] (Z) at (2,5) {};
	\draw[->] (X) -- (Y);
	\draw[->] (Z) -- (X);
	\draw[->] (Z) -- (Y);
	\end{tikzpicture}
    \centering
	\caption{Confounding bias}
	\end{subfigure}
		&
	\begin{subfigure}[normal]{0.3\linewidth}
	\begin{tikzpicture}[scale = 0.7]
	\node[circle,draw,label=above:{$X$}] (X) at (12,5) {};
	\node[circle,draw,,fill = black, ,label=above:{$Y$}] (Y) at (14,3.5) {};
	\node[circle,draw, ,fill = black, ,label=below:{$R_X$}] (RX) at (12,2) {};
	\draw[->] (X) -- (Y);
	\draw[->] (X) -- (RX);
	\end{tikzpicture}
	\centering
	\caption{A m-graph encoding MNAR data} 
\end{subfigure}
	&
	\begin{subfigure}[normal]{0.3\linewidth}
	\begin{tikzpicture}[scale = 0.7]
	\node[circle,draw,label=above:{$X$}] (X) at (21,3) {};
	\node[circle,draw,fill = black, ,label=above:{$Y$}] (Y) at (25,3) {};
	\node[circle,draw,,fill = black, ,label=above:{$Z$}] (Z) at (23,5) {};
	\node[circle,draw,,fill = black, ,label=right:{$R_X$}] (RX) at (23,1) {};
	\draw[->] (X) -- (Y);
	\draw[->] (Z) -- (X);
	\draw[->] (Z) -- (Y);
	\draw[->] (X) -- (RX);
	\draw[->] (Y) -- (RX);
	
	\end{tikzpicture}
	\centering
		\caption{Confounding bias in MNAR data} 
\end{subfigure}
		\end{tabular}
		\caption{Examples for confounding bias in MNAR data}
		\label{fig-1}
	\end{center}

\end{figure}  
			
	Another major challenge to valid causal inference is the missing data problem, which occurs when some variable values are missing from observed data. Missing data is a common problem in empirical sciences. Indeed there is a large literature on dealing with missing data in diverse disciplines including statistics, economics, social sciences, and machine learning. To analyze data with missing values, it is imperative to understand the mechanisms that lead to missing data. The seminal work by Rubin  
	\cite{rubin1976inference} classifies missing data mechanisms into three categories: \emph{missing completely at random (MCAR)}, \emph{missing at random (MAR)}, and \emph{missing not at random (MNAR)}. Roughly speaking, the mechanism is MCAR if whether variable values are missing is completely independent of the values of variables in the data set; the mechanism is MAR when missingness is independent of the missing values given the observed values; and the mechanism is MNAR if it is neither MCAR nor MAR. For example, assume that in a study of the effect of family income ($FI$) and parent's education level ($PE$) on the quality of child's education ($CE$), some respondents chose not to reveal their child's education quality for various reasons. Fig.~\ref{fig-2} shows causal graphs representing the three missing data mechanisms where $R_{CE}$ is an indicator variable such that $R_{CE}=0$ if the $CE$ value is missing and $R_{CE}=1$ otherwise. In these graphs solid circles represent always-observed variables and hollow circles represent variables that could have missing values. The model in Fig.~\ref{fig-2}(a) is MCAR, e.g., respondents decide to reveal the child's education quality based on coin-flips. The model in Fig.~\ref{fig-2}(b) is MAR, where respondents with higher family income have a higher chance of revealing the child's education quality; however whether the $CE$ values are missing is independent of the actual values of $CE$ given the $FI$ value. The model in Fig.~\ref{fig-2}(c) is MNAR, where respondents with higher child's education quality have a higher chance of revealing it, i.e., whether the $CE$ values are missing depends on the actual values of $CE$.

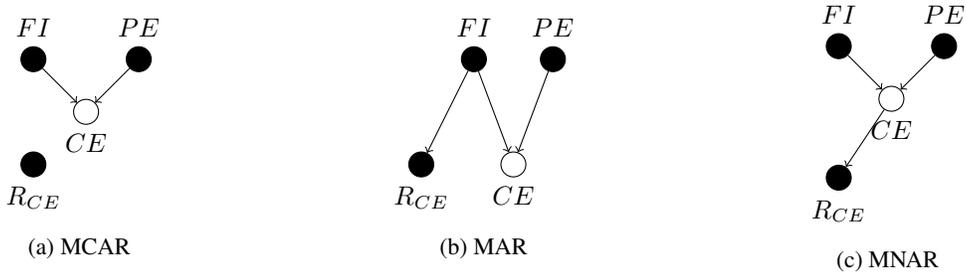
\begin{figure} [H]
	\begin{center}
		\begin{tabular}{ccc}
	\begin{subfigure}[normal]{0.3\linewidth}\
		\centering
		\begin{tikzpicture}[scale = 0.7]
		\node[circle,draw,,fill = black, ,label=above:{$FI$}] (FI) at (0,3) {};
		\node[circle,draw,,label=below:{$CE$}] (CE) at (1,2) {};
		\node[circle,draw,,fill = black, ,label=above:{$PE$}] (PE) at (2,3) {};
		\node[circle,draw,fill = black, ,label=below:{$R_{CE}$}] (R) at (0,1) {};
		\draw[->] (FI) -- (CE);
		\draw[->] (PE) -- (CE);
	\end{tikzpicture}
	\caption{MCAR}  
	\end{subfigure}
		&
	\begin{subfigure}[normal]{0.3\linewidth}
		\begin{tikzpicture}[scale = 0.7]
		\centering
		\node[circle,draw,,fill = black, ,label=above:{$FI$}] (FI) at (1,3) {};
		\node[circle,draw,,label=below:{$CE$}] (CE) at (1.75,1) {};
		\node[circle,draw,,fill = black, ,label=above:{$PE$}] (PE) at (2.5,3) {};
		\node[circle,draw,,fill = black, ,label=below:{$R_{CE}$}] (R) at (0,1) {};
		\draw[->] (FI) -- (CE);
		\draw[->] (PE) -- (CE);
		\draw[->] (FI) -- (R);	
		\end{tikzpicture}
	\centering	
	\caption{MAR}   
\end{subfigure}
	&
	\begin{subfigure}[normal]{0.3\linewidth}
		\begin{tikzpicture}[scale = 0.7]
		\centering
		\node[circle,draw,,fill = black, ,label=above:{$FI$}] (FI) at (3,3) {};
		\node[circle,draw,label=below:{$CE$}] (CE) at (4,2) {};
		\node[circle,draw,,fill = black, ,label=above:{$PE$}] (PE) at (5,3) {};
		\node[circle,draw,,fill = black, ,label=below:{$R_{CE}$}] (R) at (3,0.5) {};
		\draw[->] (FI) -- (CE);
		\draw[->] (PE) -- (CE);
		\draw[->] (CE) -- (R);	
		\end{tikzpicture}
		\centering
		\caption{MNAR}   
\end{subfigure}
		\end{tabular}
	    \centering
		\caption{Three types of missing data mechanisms}
		\label{fig-2}
	\end{center}
\end{figure}

It is known that when the data is MAR, the underlying distribution is estimable from observed data with missing values. Then a causal effect is estimable if it is identifiable from the observed distribution \cite{mohan2014graphical}. However, if the data is MNAR, whether a probabilistic distribution or a causal effect is estimable from missing data depends closely on both the query and the exact missing data mechanisms. For example, in the MNAR model in Fig.~\ref{fig-1}(b), $P(X)$ cannot be estimated consistently even if infinite amount of data are collected, while $P(y|do(x))=P(y|x)=P(y|x, R_X=1)$ is estimable from missing data. On the other hand, in the MNAR model in Fig.~\ref{fig-1}(c), $P(y|do(x))$ is not estimable. In the MNAR model in Fig.~\ref{fig-2}(c), neither $P(CE)$ nor $P(CE \mid do(FI))$ can be estimated from observed data with missing values.

	Various techniques have been developed to deal with missing data in statistical inference, e.g., listwise deletion \cite{Little:1986:SAM:21412}, which requires data to be MCAR to obtain unbiased estimates, and multiple imputation \cite{rubin1978multiple}, which requires MAR. Most of the work in machine learning makes MAR assumption and use maximum likelihood based methods (e.g. EM algorithms) \cite{koller2009probabilistic} , with a few work explicitly incorporates missing data mechanism into the model 
\cite{koller2009probabilistic,DBLP:conf/ijcai/MarlinZRS11,DBLP:conf/uai/MarlinZRS07}.
	
The use of graphical models called \emph{m-graphs} for inference with missing data was more recent \cite{mohan2013graphical}. M-graphs provide a general framework for inference with arbitrary types of missing data mechanisms including MNAR. Sufficient conditions for determining whether probabilistic queries (e.g., $P(\tb{y} \mid \tb{x})$ or $P(\tb{x},\tb{y})$) are estimable from missing data are provided in \cite{mohan2013graphical,mohan2014graphical}. General algorithms for identifying the joint distribution have been developed in \cite{Shpitser:2015:MDC:3020847.3020930,tian2017recovering}.

The problem of identifying causal effects $P(\tb{y} \mid do(\tb{x}))$ from missing data in the causal graphical model settings has not been well studied. To the best of our knowledge the only results are the sufficient conditions given in \cite{mohan2014graphical}. The goal of this paper is to provide general conditions under which the causal effects can be identified from missing data using the covariate adjustment formula, which is the most pervasive method in practice for causal effect estimation under confounding bias.   

We will also extend our results to cope with another common obstacles to valid causal inference - \emph{selection bias}. 
Selection bias may happen due to preferential exclusion of part of the population from sampling. To illustrate, consider a study of the effect of diet on blood sugar. If individuals that are healthy and consume less sugar than average population are less likely to participate in the study, then the data gathered is not a faithful representation of the population and biased results will be produced. This bias cannot be removed by sampling more examples or controlling for confounding bias. 
Note that, in some sense, selection bias could be considered as a very special case of missing data mechanisms, where values of all of the variables are either all observed or all missing simultaneously.  Missing data problem allows much richer missingness patterns such that in any particular observation, some of the variables could be observed and others could be missing. Missing data is modeled by introducing individual missingness indicators for each variable (such that $R_X=0$ if $X$ value is missing), while selection bias is typically modeled by introducing a single selection indicator variable ($S$) representing whether a unit is included in the sample or not (that is, if $S=0$ then values of all variables are missing). 

Identifying causal effects from selection bias has been studied in the literature  \cite{bareinboim2014recovering,DBLP:conf/aaai/BareinboimT15}. 
Adjustment formulas for recovering causal effects under selection bias have been introduced and complete graphical criteria have been developed \cite{correa2017causal,correa2018generalized}. However these results are not applicable to the missing data problems which have much richer missingness patterns than could be modeled by selection bias.  
To the best of our knowledge, using adjustment for causal effect identification when the observed data suffers from missing values or both selection bias and missing values has not been studied in the causal graphical model settings. In this paper we will provide a characterization for these tasks.


Specifically, the contributions of this paper are:
	\begin{itemize}
	
		\item We introduce a covariate adjustment formulation for recovering causal effects from missing data, and provide a necessary and sufficient graphical condition for when a set of covariates are valid for adjustment. 
		\item We introduce a covariate adjustment formulation for causal effects identification when the observed data suffer from both selection bias and missing values, and provide a necessary and sufficient graphical condition for the validity of a set of covariates for adjustment.
		\item We develop an algorithm that lists \emph{all} valid adjustment sets in polynomial delay time, and an algorithm that finds a valid adjustment set containing the minimum number of variables. The algorithms are useful for scientists to select adjustment sets with desired properties (e.g. low measurement cost). 
		\end{itemize}
\section{Definitions and Related Work}Each variable will be represented with a capital letter ($X$) and its realized value with the small letter ($x$). We will use bold letters ($\mf{X}$) to denote sets of variables.\\

\noindent{\bf{Structural Causal Models.}} The systematic analysis of confounding bias, missing data mechanisms, and selection bias requires a formal language where the characterization of the underlying data-generating model can be encoded explicitly. We use the language of Structural Causal Models (SCM) \cite{Pearl:2009:CMR:1642718}. 
 In SCMs, performing an action/intervention of setting $\tb{X} {=} \tb{x}$ is represented through the do-operator, $do({\tb{X} {=} \tb{x}})$, which 
induces an experimental distribution $P(\tb{y}|do(\tb{x}))$, known as the causal effect of $\tb{X}$ on $\tb{Y}$. We will  use do-calculus to derive causal expressions from other causal quantities. For a detailed discussion of SCMs and do-calculus, we refer readers to \cite{Pearl:2009:CMR:1642718}. 

Each SCM $M$ has a causal graph $G$ associated to it, with directed arrows encoding direct causal relationships and dashed-bidirected arrows encoding the existence of an unobserved common causes (e.g., see Fig.~\ref{Fig1}). We use typical graph-theoretic terminology $Pa(\mf{C}),Ch(\mf{C}), De(\mf{C}), An(\mf{C})$ representing the union of $\mf{C}$ and respectively the parents, children, descendants, and ancestors of $\mf{C}$. 
 We use $G_{\overline{\mf{C}_1}\underline{\mf{C}_2}}$ to denote the graph resulting from deleting all incoming edges to $\mf{C}_1$ and all outgoing edges from $\mf{C}_2$ in $G$. The expression $(\mf{X} \independent \mf{Y} \mid \mf{Z})_G$ denotes that $\mf{X}$ is d-separated from $\mf{Y}$ given $\mf{Z}$ in the corresponding causal graph $G$ \cite{Pearl:2009:CMR:1642718}(subscript $G$ may be omitted).

\noindent{\bf{Missing Data and M-graphs.}}
To deal with missing data, we use \emph{m-graphs} introduced in \cite{mohan2013graphical} to represent both the data generation model and the missing data mechanisms. M-graphs enhance the causal graph $G$ by introducing a set $\tb{R}$ of binary missingness indicator variables. We will also partition the set of observable variables $\tb{V}$ into $\tb{V}_o$ and $\tb{V}_m$ such that $\tb{V}_o$ is the set of variables that will be observed in all data cases and $\tb{V}_m$ is the set of variables that are missing in some data cases and observed in other cases. 
Every variable $V_i\in \tb{V}_m$ is associated with a variable $R_{V_i}\in \tb{R}$ such that, in any observed data case, $R_{V_i}=0$ if the value of corresponding $V_i$ is missing and $R_{V_i}=1$ if $V_i$ is observed. We assume that $\tb{R}$ variables may not be parents of variables in $\tb{V}$, since $\tb{R}$ variables are missingness indicator variables and we assume that the data generation process over $\tb{V}$ variables does not depend on the missingness mechanisms. For any set $\tb{C}\subseteq \tb{V}_m$, let $\tb{R}_{\tb{C}}$ represent the set of $\tb{R}$ variables corresponding to variables in $\tb{C}$.
See Fig.~\ref{fig-2} for examples of m-graphs, in which we use solid circles to represent always observed variables in $\tb{V}_o$ and $\tb{R}$, and hollow circles to represent  partially observed variables in $\tb{V}_m$. \\

	
\noindent{\bf{Causal Effect Identification by Adjustment.}}
	Covariate adjustment is the most widely used technique for identifying causal effects from observational data. Formally, 
\begin{defn}[Adjustment Formula \cite{Pearl:2009:CMR:1642718}]\label{def-adj}
	Given a causal graph $G$ over a set of variables $\tb{V}$, a set $\tb{Z}$ is called \emph{covariate adjustment} (or adjustment for short) for estimating the causal effect of $\tb{X}$ on $\tb{Y}$, if,  for any distribution $P(\tb{V})$ compatible with $G$, it holds that 
	\begin{equation}\label{eq-adj}
	P(\tb{y} \mid do(\tb{x})) = \sum_{\tb{z}} P(\tb{y} \mid \tb{x},\tb{z})P(\tb{z}).
	\end{equation}
\end{defn}
Pearl developed the celebrated ``Backdoor Criterion'' to determine whether a set is admissible for adjustment \cite{pearl1995causal} given in the following:   
\begin{defn}[Backdoor Criterion] \label{defBackDoor}
	A set of variables $\tb{Z}$ satisfies the backdoor criterion relative to a pair of variables $(\tb{X}, \tb{Y})$ in a causal graph $G$ if:
	\begin{enumerate}[a)]
		\item 
		No node in $\tb{Z}$ is a descendant of $\tb{X}$, and 
		\item
		$\tb{Z}$ blocks every path between $\tb{X}$ and $\tb{Y}$ that contains an arrow into $\tb{X}$.
	\end{enumerate} 	
\end{defn}
Complete graphical conditions have been derived for determining whether a set is admissible for adjustment 
\cite{Shpitser:2010:VCA:3023549.3023612,vanderZander:2014:CSA:3020751.3020845,perkovic2017complete} as follows. 
	\begin{defn}[Proper Causal Path] \label{defProper}
	A proper causal path from a node $X \in \tb{X}$ to a node $Y \in \tb{Y}$ is a causal path (i.e., a directed path) which does not intersect $\tb{X}$ except at the beginning of the path.
\end{defn} 
	
\begin{defn}[Adjustment Criterion \cite{Shpitser:2010:VCA:3023549.3023612}]\label{def-adj-cri}
	A set of variables $\tb{Z}$ satisfies the adjustment criterion relative to a pair of variables $(\tb{X}, \tb{Y})$ in a causal graph $G$ if:
\begin{enumerate}[a)]
	\item 
	No element of \tb{Z} is a descendant in $G_{\overline{X}}$ of any $W \notin \tb{X} $ which lies on a proper causal path from $\tb{X}$ to $\tb{Y}$. 
	\item
	All non-causal paths between $\tb{X}$ and \tb{Y} in $G$ are blocked by \tb{Z}. 
\end{enumerate} 
\end{defn}
A set $\tb{Z}$ is an admissible adjustment for estimating the causal effect of $\tb{X}$ on $\tb{Y}$ by the adjustment formula if and only if it satisfies the adjustment criterion.




	\section{Adjustment for Recovering Causal Effects from Missing Data \label{sec-missing}}
In this section we address the task of recovering a causal effect 	$P(\tb{y} \mid do(\tb{x}))$ from missing data given a m-graph $G$ over observed variables $\tb{V}=\tb{V}_o\cup \tb{V}_m$ and missingness indicators $\tb{R}$. The main difference with the well studied identifiability problem \cite{Pearl:2009:CMR:1642718}, where we attempt to identify $P(\tb{y} \mid do(\tb{x}))$ from the joint distribution $P(\tb{V})$, lies in that, given data corrupted by missing values, $P(\tb{V})$ itself may not be recoverable. Instead, a distribution like $P(\tb{V}_o, \tb{V}_m, \tb{R}=1)$ is assumed to be estimable from observed data cases in which all variables in $\tb{V}$ are observed (i.e., complete data cases). In general, in the context of missing data, the probability distributions in the form of $P(\tb{V}_o, \tb{W}, \tb{R}_{\tb{W}}=1)$ for any $\tb{W}\subseteq \tb{V}_m$, called \emph{manifest distributions}, are assumed to be estimable from observed data cases in which all variables in $\tb{W}$ are observed (values of variables in $\tb{V}_m\setminus \tb{W}$ are possibly missing).
The problem of recovering probabilistic queries from the manifest distributions has been studied in \cite{mohan2013graphical,mohan2014graphical,Shpitser:2015:MDC:3020847.3020930,tian2017recovering}.

We will extend the adjustment formula for identifying causal effects to the context of missing data based on the following observation which is stated in Theorem 1 in \cite{mohan2013graphical}:
\begin{lem}\label{lm-rec} 
For any $\tb{W}_o, \tb{Z}_o \in \tb{V}_o$ and $\tb{W}_m, \tb{Z}_m \in \tb{V}_m$, $P(\tb{W}_o, \tb{W}_m \mid \tb{Z}_o, \tb{Z}_m, \tb{R}_{\tb{W}_m\cup \tb{Z}_m} =1)$ is recoverable.
\end{lem}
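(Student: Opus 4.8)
The plan is to recover the target conditional by writing it as a ratio of two joint probabilities and then exhibiting each joint as a marginal of a single manifest distribution that is estimable by assumption. Throughout, I would set $\tb{U} = \tb{W}_m \cup \tb{Z}_m \subseteq \tb{V}_m$, so that the conditioning event is $\tb{R}_{\tb{U}} = 1$, and recall that the manifest distribution $P(\tb{V}_o, \tb{U}, \tb{R}_{\tb{U}} = 1)$ is assumed estimable from exactly those data cases in which every variable of $\tb{U}$ is observed.

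First I would expand the target using the definition of conditional probability,
\begin{equation*}
P(\tb{W}_o, \tb{W}_m \mid \tb{Z}_o, \tb{Z}_m, \tb{R}_{\tb{U}} = 1) = \frac{P(\tb{W}_o, \tb{W}_m, \tb{Z}_o, \tb{Z}_m, \tb{R}_{\tb{U}} = 1)}{P(\tb{Z}_o, \tb{Z}_m, \tb{R}_{\tb{U}} = 1)}.
\end{equation*}
The crucial feature is that the indicator event $\tb{R}_{\tb{U}} = 1$ appears in both numerator and denominator, so the whole quantity lives inside the regime in which all of $\tb{U}$ is observed, i.e., precisely the regime captured by the manifest distribution.

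Next I would show the numerator is estimable. Because $\tb{W}_o \cup \tb{Z}_o \subseteq \tb{V}_o$ and $\tb{W}_m \cup \tb{Z}_m = \tb{U}$, the numerator is obtained from the manifest distribution by marginalizing out the always-observed variables outside $\tb{W}_o \cup \tb{Z}_o$,
\begin{equation*}
P(\tb{W}_o, \tb{W}_m, \tb{Z}_o, \tb{Z}_m, \tb{R}_{\tb{U}} = 1) = \sum_{\tb{V}_o\setminus(\tb{W}_o\cup\tb{Z}_o)} P(\tb{V}_o, \tb{U}, \tb{R}_{\tb{U}} = 1).
\end{equation*}
I would then obtain the denominator by marginalizing this (already estimable) numerator over $\tb{W}_o$ and $\tb{W}_m$, which leaves the event $\tb{R}_{\tb{U}} = 1$ intact even though $\tb{W}_m$ is summed out. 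Both pieces are thus recoverable, and therefore so is their ratio.

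The argument is essentially direct: no d-separation or graphical reasoning is required, and the entire content is the observation that conditioning on $\tb{R}_{\tb{U}} = 1$ confines every computation to complete-enough data cases. The one point I would be most careful about — the only genuine subtlety — is making the missingness event propagate correctly under marginalization, namely ensuring the denominator retains $\tb{R}_{\tb{W}_m} = 1$ despite $\tb{W}_m$ being marginalized; deriving the denominator \emph{from} the numerator, rather than defining it independently, handles this cleanly. Finally, a standard positivity assumption, $P(\tb{Z}_o, \tb{Z}_m, \tb{R}_{\tb{U}} = 1) > 0$, is needed for the ratio to be well defined.
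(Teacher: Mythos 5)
Your proof is correct, and it is essentially the standard argument: the paper itself gives no proof of this lemma, importing it directly as Theorem 1 of Mohan et al.\ (2013), whose content is exactly your observation that both the numerator $P(\tb{W}_o, \tb{W}_m, \tb{Z}_o, \tb{Z}_m, \tb{R}_{\tb{W}_m\cup\tb{Z}_m}=1)$ and the denominator are marginals of the single manifest distribution $P(\tb{V}_o, \tb{W}_m\cup\tb{Z}_m, \tb{R}_{\tb{W}_m\cup\tb{Z}_m}=1)$, which is estimable from the data cases in which all of $\tb{W}_m\cup\tb{Z}_m$ is observed. Your care in deriving the denominator by marginalizing the numerator (so that the event $\tb{R}_{\tb{W}_m}=1$ survives the summation over $\tb{W}_m$) and your explicit positivity caveat are both appropriate and consistent with how that result is established in the cited source.
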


Formally, we introduce the adjustment formula for recovering causal effects from missing data by extending Eq.~(\ref{eq-adj}) as follows.
\begin{defn}[M-Adjustment Formula] \label{defMadjustment} Given a m-graph $G$ over observed variables $\tb{V}=\tb{V}_o\cup \tb{V}_m$ and missingness indicators $\tb{R}$, a set $\tb{Z}\subseteq \tb{V}$ is called a m-adjustment (adjustment under missing data) set for estimating the causal effect of \tb{X} on \tb{Y}, if, for every model compatible with $G$, it holds that 
		\begin{align}\label{eq-m-adj}
		P(\tb{y} \mid do(\tb{x})) = \sum_{\tb{z}} P(\tb{y} \mid \tb{x}, \tb{z} ,\tb{R}_{\tb{W}} = 1)P(\tb{z} \mid \tb{R}_{\tb{W}} = 1),
		\end{align}
		where $\tb{W} = \tb{\tb{V}}_m \cap (\tb{X}\cup\tb{Y}\cup\tb{Z})$.
	\end{defn}
In the above formulation, we allow that the treatments $\tb{X}$, outcomes $\tb{Y}$, and covariates $\tb{Z}$ all could contain $\tb{V}_m$ variables that have missing values. Both terms on the right-hand-side of Eq.~(\ref{eq-m-adj}) are recoverable based on Lemma~\ref{lm-rec}. Therefore the causal effect 	$P(\tb{y} \mid do(\tb{x}))$ is recoverable if it can be expressed in the form of m-adjustment.

We look for conditions under which a set $\tb{Z}$ is admissible as m-adjustment. In principle this can be derived using do-calculus. As an example, consider the m-graph in   
 Fig.~\ref{Fig1} where $R_1, R_2, R_3$ are missingness indicators for $Z_{m1},Z_{m2},Z_{m3}$ respectively. 
We show that $\{Z_{m1}\}$ is m-adjustment admissible for recovering $P(y \mid do(x_1, x_2))$ using do-calculus derivation as follows:   
	\begin{align}
	&P(y \mid do(x_1, x_2))\nonumber
	\\&= P(y \mid do(x_1, x_2), R_1=1)
	\\&= \sum_{Z_{m1}}P(y \mid do(x_1, x_2),Z_{m1}, R_1 = 1)P(Z_{m1} \mid R_1 =1, do(x_1, x_2) )
	\\&= \sum_{Z_{m1}}P(y \mid do(x_1, x_2),Z_{m1}, R_1 =1)P(Z_{m1} \mid R_1 =1)
	\\& =\sum_{Z_{m1}}P(y \mid x_1, x_2 ,Z_{m1}, R_1 =1)P(Z_{m1} \mid R_1 =1).
	\end{align}

	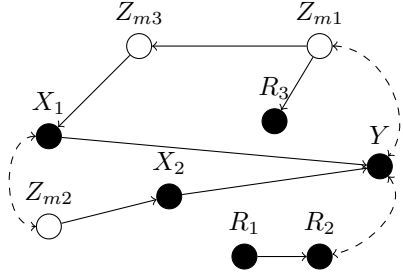
\begin{figure}
		\centering
		\begin{tikzpicture}[scale = 0.40]
		\node[circle,draw,fill = black, ,label=above:{$X_1$}] (X1) at (0,7) {};
		\node[circle,draw,fill = black, ,label=above:{$X_2$}] (X2) at (4,5) {};
		\node[circle,draw,,label=above:{$Z_{m1}$}] (Z1) at (9,10) {};
		\node[circle,draw,,label=above:{$Z_{m3}$}] (Z3) at (3,10) {};
		\node[circle,draw,fill = black,label=above:{$R_3$}] (R3) at (7.5,7.5) {};
		\node[circle,draw,fill = black, ,label=above:{$Y$}] (Y) at (11,6) {};
		\node[circle,draw,,label=above:{$Z_{m2}$}] (Z2) at (0,4) {};
		\node[circle,draw,fill = black,label=above:{$R_1$}] (R1) at (6.5,3) {};
		\node[circle,draw,fill = black,label=above:{$R_2$}] (R2) at (9,3) {};

		\draw[dashed, <->] (X1)to [out=-180,in= -180] (Z2);
		\draw[->] (Z3)-- (X1);
		\draw[->] (X1)-- (Y);   
		\draw[->] (X2)-- (Y); 
		\draw[ ->] (Z2)-- (X2); 	
		\draw[dashed, <->] (Z1)to [out=0,in=50] (Y);
		\draw[dashed,<->] (Y) to [out=-50,in=10] (R2);
		\draw[->] (Z1)-- (R3);
		\draw[->] (Z1)-- (Z3);
		\draw[ ->] (R1)-- (R2);

		\end{tikzpicture}
		\centering
		\caption[loftitle]{An example of m-adjustment in a MNAR model}  
		\label{Fig1}              
	\end{figure}
	
In general using do-calculus to recover causal effects is difficult due to many possible ways of applying do-calculus rules in every stage of the derivation. Intuitively, we can start with the adjustment formula~(\ref{eq-adj}), consider an adjustment set as a candidate m-adjustment set, and then check for needed conditional independence relations. Based on this intuition, we obtain  		
	a straightforward sufficient condition for a set \tb{Z} to be a m-adjustment set as follows. 
	\begin{prop}
	\label{prop-adj}
	A set $\tb{Z}$ is a m-adjustment set for estimating the causal effect of \tb{X} on \tb{Y} if, letting $\tb{W} = \tb{\tb{V}}_m \cap (\tb{X}\cup\tb{Y}\cup\tb{Z})$, 
	\begin{enumerate}[a)]
	\item 
	\tb{Z} satisfies the adjustment criterion (Def.~\ref{def-adj-cri}),   	
	\item 
	$\tb{R}_{\tb{W}}$ is d-separated from \tb{Y} given \tb{X}, \tb{Z}, i.e., $(\tb{Y} \independent \tb{R}_{\tb{W}} \mid \tb{X, Z})$, 
	\item 
	 $\tb{Z}$ is d-separated from $\tb{R}_{\tb{W}}$, i.e., $( \tb{Z} \independent {\tb{R}_{\tb{W}}})$.  
\end{enumerate}
\end{prop}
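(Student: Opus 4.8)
The plan is to derive the m-adjustment formula~(\ref{eq-m-adj}) directly from the ordinary adjustment formula~(\ref{eq-adj}) by inserting the event $\tb{R}_{\tb{W}} = 1$ into each of the two factors appearing on the right-hand side. Condition (a) guarantees, via the completeness of the adjustment criterion stated immediately after Def.~\ref{def-adj-cri}, that $\tb{Z}$ is an admissible adjustment set relative to $(\tb{X}, \tb{Y})$ in the m-graph $G$ (with d-separations evaluated in the full graph over $\tb{V}\cup\tb{R}$), so that
\begin{equation*}
P(\tb{y} \mid do(\tb{x})) = \sum_{\tb{z}} P(\tb{y} \mid \tb{x}, \tb{z}) P(\tb{z}).
\end{equation*}
It then suffices to show that each summand here equals the corresponding summand in~(\ref{eq-m-adj}).

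I would handle the two factors separately. For the factor $P(\tb{z})$: condition (c) asserts the d-separation $(\tb{Z} \independent \tb{R}_{\tb{W}})_G$, which by the global Markov property of $G$ yields the marginal independence $\tb{Z} \independent \tb{R}_{\tb{W}}$ in every distribution compatible with $G$, whence $P(\tb{z}) = P(\tb{z} \mid \tb{R}_{\tb{W}} = 1)$. For the factor $P(\tb{y} \mid \tb{x}, \tb{z})$: condition (b) asserts $(\tb{Y} \independent \tb{R}_{\tb{W}} \mid \tb{X}, \tb{Z})_G$, so again by the Markov property $P(\tb{y} \mid \tb{x}, \tb{z}) = P(\tb{y} \mid \tb{x}, \tb{z}, \tb{R}_{\tb{W}} = 1)$. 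Substituting both rewritings into the displayed adjustment formula reproduces~(\ref{eq-m-adj}) term by term, which establishes that $\tb{Z}$ is a m-adjustment set.

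The core of the argument is thus a two-step substitution, and I do not anticipate a serious technical obstacle in the main chain of equalities. The point requiring the most care is the well-definedness of the conditioning: rewriting $P(\tb{z})$ as $P(\tb{z} \mid \tb{R}_{\tb{W}} = 1)$ and $P(\tb{y} \mid \tb{x}, \tb{z})$ as $P(\tb{y} \mid \tb{x}, \tb{z}, \tb{R}_{\tb{W}} = 1)$ presupposes that the conditioning events carry positive probability, i.e.\ that complete cases on $\tb{W}$ occur so that $P(\tb{R}_{\tb{W}} = 1) > 0$ (and likewise $P(\tb{x}, \tb{z}, \tb{R}_{\tb{W}} = 1) > 0$); I would note that this positivity is precisely the standing assumption under which the manifest distributions of Lemma~\ref{lm-rec} are estimable. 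Finally, I would remark that both factors in the resulting expression are of the manifest form covered by Lemma~\ref{lm-rec}, so the recovered effect is genuinely estimable from the missing data, which is the purpose of casting it in m-adjustment form.
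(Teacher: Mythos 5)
Your proposal is correct and follows essentially the same route as the paper's own (very terse) proof: condition (a) licenses the ordinary adjustment formula~(\ref{eq-adj}), and conditions (b) and (c) let you insert $\tb{R}_{\tb{W}}=1$ into the factors $P(\tb{y}\mid\tb{x},\tb{z})$ and $P(\tb{z})$ respectively, yielding Eq.~(\ref{eq-m-adj}). Your additional remarks on positivity of the conditioning events and on recoverability via Lemma~\ref{lm-rec} are sensible elaborations of details the paper leaves implicit, not a different argument.
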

\begin{proof}\let\qed\relax
Condition (a) makes sure that the causal effect can be identified in terms of the adjustment formula ~(\ref{eq-adj}). Then given Conditions (b) and (c),  Eq.~(\ref{eq-adj}) is equal to Eq.~(\ref{eq-m-adj}). 
\end{proof}
For example, in Fig.~\ref{Fig1}, $\{Z_{m1}\}$, $\{Z_{m3}\}$, and $\{Z_{m1},Z_{m3}\}$ all satisfy the back-door criterion (and therefore the adjustment criterion), however only $\{Z_{m1}\}$ satisfies the conditions in Proposition~\ref{prop-adj} ($\{Z_{m3}\}$, and  $\{Z_{m1},Z_{m3}\}$ do not satisfy Condition (c) because $Z_{m3}$ is not d-separated from $R_3$).

However this straightforward criterion 
in Proposition~\ref{prop-adj} is not necessary. To witness, consider the set $\{V_{m1},V_{m2}\}$ in Fig.~\ref{Fig3} which satisfies the back-door criterion but not the conditions in Proposition~\ref{prop-adj} because $V_{m2}$ is not d-separated from $R_2$. Still, it can be shown that $\{V_{m1},V_{m2}\}$ is a m-adjustment set by do-calculus derivation as follows:
\begin{align}
&P(y \mid do(x)) \nonumber\\
&= P(y \mid do(x), R_1=1, R_2 = 1) 
\\&= \sum_{v_{m1}, v_{m2}}P(y \mid do(x),v_{m1},v_{m2}, R_1 =1, R_2 = 1)P(v_{m1},v_{m2} \mid R_1 =1, R_2 = 1, do(x) )
\\&= \sum_{v_{m1}, v_{m2}} P(y \mid do(x),v_{m1},v_{m2}, R_1 =1,R_2 =1)P(v_{m1} ,v_{m2}\mid R_1 =1, R_2 = 1)
\\&= \sum_{v_{m1}, v_{m2}} P(y \mid x,v_{m1},v_{m2}, R_1 =1,R_2 =1)P(v_{m1} ,v_{m2}\mid R_1 =1, R_2 = 1).\label{eq-eg-madj}
\end{align}

\begin{figure}
	\centering
	\begin{tikzpicture}
	\node[circle,draw,fill = black,label=above:{$R_1$}] (R1) at (1,3) {};
	\node[circle,draw,label=above:{$V_{m2}$}] (V2) at (2,5) {};
	\node[circle,draw,fill = black,label=above:{$R_2$}] (R2) at (3,3) {};
	\node[circle,draw,fill = black,label=above:{X}] (X) at (5,3) {};
	\node[circle,draw,label=above:{$V_{m1}$}] (V1) at (7,6) {};
	\node[circle,draw,fill = black,label=above:{$Y$}] (Y) at (9,3) {};
	
	\draw[->] (V2)-- (R1);
	\draw[->] (V2)-- (R2);   
	\draw[->] (V2)-- (X); 
	\draw[ ->] (V1)-- (X); 	
	\draw[->] (V1)-- (Y);
	\draw[->] (X)-- (Y);
	\end{tikzpicture}
	\centering
	\caption[loftitle]{In this m-graph $V_{m2}$ is not d-separated from $R_2$. However, \{$V_{m2}$, $V_{m1}$\} is an admissible m-adjustment set.}             
	\label{Fig3}              
\end{figure}
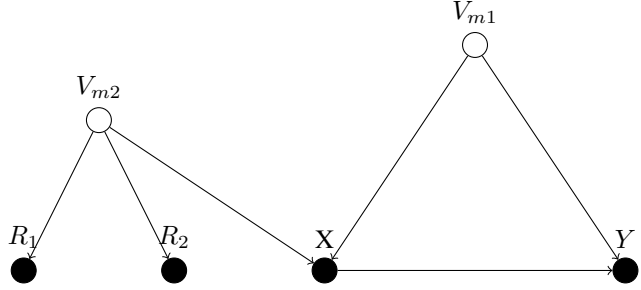

Next we introduce a complete criterion to determine whether a covariate set is admissible as m-adjustment to recover causal effects from missing data, extending the existing work on adjustment \cite{Shpitser:2010:VCA:3023549.3023612,vanderZander:2014:CSA:3020751.3020845,correa2017causal,correa2018generalized,perkovic2017complete}.	
	\begin{defn}[M-Adjustment Criterion]\label{defMcriterion} Given a m-graph $G$ over observed variables $\tb{V}=\tb{V}_o\cup \tb{V}_m$ and missingness indicators $\tb{R}$, and disjoint sets of variables $\tb{X, Y, Z}\subseteq \tb{V}$, letting $\tb{W} = \tb{V}_m \cap (\tb{X}\cup\tb{Y}\cup\tb{Z})$, $\tb{Z}$ satisfies the m-adjustment criterion relative to the pair ($\tb{X}, \tb{Y}$) if  
		\begin{enumerate}[a)]
			\item 
			No element of \tb{Z} is a descendant in $G_{\overline{X}}$ of any $W \notin \tb{X} $ which lies on a proper causal path from \tb{X} to \tb{Y}. 
			\item
			All non-causal paths between \tb{X} and \tb{Y} in $G$ are blocked by \tb{Z} and $\tb{R}_{\tb{W}}$. 	
			\item 
			$\tb{R}_{\tb{W}}$ is d-separated from \tb{Y} given \tb{X} under the intervention of do(\tb{x}), i.e., $(\tb{Y} \independent \tb{R}_{\tb{W}} \mid \tb{X})_{G_{\overline{X}}}$.	
			\item 
			Every $X \in \tb{X} $ is either a non-ancestor of $\tb{R}_{\tb{W}}$ or it is d-separated from \tb{Y} in $G_{\underline{X}}$, i.e., $\forall X \in \tb{X} \cap An(\tb{R}_{\tb{W}}), (X \independent \tb{Y})_{G_{\underline{X}}}$.  
		\end{enumerate}
	\end{defn}
\begin{thm}[M-Adjustment] \label{thm-Madj-com} A set \tb{Z} is a m-adjustment set for recovering causal effect of \tb{X} on \tb{Y} by the m-adjustment formula in Def.~\ref{defMadjustment} if and only if it satisfies the m-adjustment criterion in Def.~\ref{defMcriterion}.  
\end{thm}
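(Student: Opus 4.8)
The plan is to prove the biconditional by a do-calculus derivation for the ``if'' direction and a counterexample (completeness) argument for the ``only if'' direction, exploiting the structural analogy between fixing $\tb{R}_{\tb{W}}=1$ and conditioning on a selection event $S=1$ in the selection-bias adjustment literature. The four criterion conditions split naturally: (a) is the ordinary forbidden-descendant condition, (b) is its back-door analogue with $\tb{R}_{\tb{W}}$ adjoined to the blocking set, and (c), (d) are the ``selection''-type conditions governing the indicators.

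For sufficiency, I would generalize the two worked derivations for Fig.~\ref{Fig1} and Fig.~\ref{Fig3}. Starting from $P(\tb{y}\mid do(\tb{x}))$, condition (c), $(\tb{Y}\independent\tb{R}_{\tb{W}}\mid\tb{X})_{G_{\overline{X}}}$, is exactly the premise of do-calculus Rule~1 that licenses inserting $\tb{R}_{\tb{W}}=1$, yielding $P(\tb{y}\mid do(\tb{x}),\tb{R}_{\tb{W}}=1)$. Expanding by total probability over $\tb{z}$ produces the product $P(\tb{y}\mid do(\tb{x}),\tb{z},\tb{R}_{\tb{W}}=1)\,P(\tb{z}\mid do(\tb{x}),\tb{R}_{\tb{W}}=1)$. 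It then remains to delete $do(\tb{x})$ from each factor: Rule~2 on the first requires $(\tb{Y}\independent\tb{X}\mid\tb{Z},\tb{R}_{\tb{W}})_{G_{\underline{X}}}$, and Rule~3 on the second requires $(\tb{Z}\independent\tb{X}\mid\tb{R}_{\tb{W}})_{G_{\overline{X^*}}}$ with $X^*=\tb{X}\setminus An(\tb{R}_{\tb{W}})_{G_{\overline{X}}}$. The technical heart of this direction is a graph-theoretic lemma showing these two separations follow jointly from (a), (b), (d): in $G_{\underline{X}}$ and $G_{\overline{X^*}}$ every surviving $\tb{X}$--$\tb{Y}$ (resp. $\tb{X}$--$\tb{Z}$) path is non-causal, condition (b) blocks the ordinary back-door paths via $\tb{Z}\cup\tb{R}_{\tb{W}}$, condition (a) prevents a forbidden descendant in $\tb{Z}$ from reopening a proper causal path, and condition (d) eliminates precisely the collider paths that conditioning on a descendant $R\in\tb{R}_{\tb{W}}$ of a treatment $X\in\tb{X}$ would otherwise open. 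I expect to build this lemma on top of the completeness of the ordinary adjustment criterion (Def.~\ref{def-adj-cri}), treating $\tb{R}_{\tb{W}}$ as part of the conditioning set.

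For necessity I would argue the contrapositive: if any of (a)--(d) fails I construct an SCM compatible with $G$ (with an explicit parametrization respecting the missingness semantics, i.e. $\tb{R}$ never a parent of $\tb{V}$ and $R=1$ meaning ``observed'') for which Eq.~(\ref{eq-m-adj}) is violated. Failures of (a) and (b) reduce to the completeness of the ordinary adjustment criterion, since a forbidden descendant in $\tb{Z}$ or an unblocked non-causal $\tb{X}$--$\tb{Y}$ path induces bias that persists after fixing $\tb{R}_{\tb{W}}=1$. A failure of (c) gives a model in which selecting $\tb{R}_{\tb{W}}=1$ shifts the interventional outcome distribution, so that $P(\tb{y}\mid do(\tb{x}))\neq P(\tb{y}\mid do(\tb{x}),\tb{R}_{\tb{W}}=1)$ and the very first step breaks; a failure of (d) gives a model in which a treatment $X$ that causes some indicator in $\tb{R}_{\tb{W}}$ has an open back-door path to $\tb{Y}$, so conditioning on $\tb{R}_{\tb{W}}=1$ manufactures a non-causal dependence between $\tb{X}$ and $\tb{Y}$ that no admissible $\tb{Z}$ can remove. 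A cleaner alternative I would try first is to route necessity through the completeness of the generalized adjustment criterion under selection bias by adjoining a dummy selection node $S$ that is a common child of $\tb{R}_{\tb{W}}$ with $S=1\iff\tb{R}_{\tb{W}}=1$, and verifying that the m-adjustment criterion coincides with the selection-adjustment criterion on the augmented graph.

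The main obstacle I anticipate is the necessity direction for conditions (c) and (d). Unlike ordinary selection bias, the ``selection'' set $\tb{R}_{\tb{W}}$ is not fixed in advance: it is determined by the candidate $\tb{Z}$ through $\tb{W}=\tb{V}_m\cap(\tb{X}\cup\tb{Y}\cup\tb{Z})$, and its indicators are forced to sit at graph sinks. I must therefore check that each counterexample respects these constraints and that the induced bias cannot be cancelled by a fortuitous choice of parameters, which typically means exhibiting a concrete model with a provably non-zero bias term, or two models agreeing on all manifest distributions yet disagreeing on $P(\tb{y}\mid do(\tb{x}))$. A secondary difficulty is the sufficiency graph-lemma itself: carefully tracking which paths survive in $G_{\underline{X}}$ and $G_{\overline{X^*}}$ once the sink indicators $\tb{R}_{\tb{W}}$ enter the conditioning set, and confirming that (a), (b), (d) jointly block all of them, in particular that the $\tb{X}$--$\tb{Z}$ separation needed for Rule~3 is not lost along back-door paths through the kept-incoming treatments $\tb{X}\cap An(\tb{R}_{\tb{W}})$.
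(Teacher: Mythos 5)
Your necessity plan (two models that agree on every manifest distribution under $\tb{R}_{\tb{W}}=1$ yet disagree on $P(\tb{y}\mid do(\tb{x}))$) is essentially the paper's route, which carries it out through an explicit case analysis with concrete parametrizations adapted from the selection-bias literature. The genuine gap is in your sufficiency argument. After inserting $\tb{R}_{\tb{W}}=1$ by Rule~1 (which condition (c) does license) and expanding over $\tb{z}$, you propose to strip $do(\tb{x})$ from the factor $P(\tb{z}\mid do(\tb{x}),\tb{R}_{\tb{W}}=1)$ by a single application of Rule~3, asserting that the required separation $(\tb{Z}\independent\tb{X}\mid\tb{R}_{\tb{W}})_{G_{\overline{X^*}}}$ follows from (a), (b), (d). It does not: the m-adjustment criterion, like the ordinary adjustment criterion it extends, permits $\tb{Z}$ to contain descendants of $\tb{X}$ so long as they are not descendants of nodes on proper causal paths, and for such sets the separation fails and the intermediate identity $P(\tb{z}\mid do(\tb{x}),\tb{R}_{\tb{W}}=1)=P(\tb{z}\mid\tb{R}_{\tb{W}}=1)$ is simply false. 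Take $G$ with edges $X\rightarrow Y$ and $X\rightarrow Z$ with $Z\in\tb{V}_o$, so $\tb{W}=\emptyset$: the set $\{Z\}$ satisfies all four conditions of Def.~\ref{defMcriterion} and the adjustment formula holds (because $P(y\mid x,z)=P(y\mid x)$), yet $P(z\mid do(x))\neq P(z)$ in general, so your Rule~3 step breaks while the theorem remains true --- the discrepancies in the two factors cancel only inside the sum.

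This is exactly why the paper does not argue factor-by-factor on the whole of $\tb{Z}$. Its proof (Appendix B, via the MS-adjustment theorem with $S$ dropped) first partitions $\tb{Z}$ into six blocks according to whether each covariate is a descendant of $\tb{X}$ and which conditional independencies it satisfies, then introduces the blocks into the two factors one at a time in a fixed order, applying Rule~3 only to those blocks for which the needed separation is actually established in the accompanying lemma, and absorbing the blocks that are independent of $\tb{Y}$ into the first factor for free. Your Rule~2 step on the first factor is fine --- $(\tb{Y}\independent\tb{X}\mid\tb{Z},\tb{R}_{\tb{W}})_{G_{\underline{X}}}$ does follow from (a) and (b) --- but to close the sufficiency direction you would have to replace the single Rule~3 application and the accompanying (false) graph lemma with a decomposition of this kind, or with some other device that lets the contributions of the $\tb{X}$-descendants inside $\tb{Z}$ cancel.
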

The proof of Theorem~\ref{thm-Madj-com} is presented in the Appendix B.
	Conditions (a) and (b) in Def.~\ref{defMcriterion} echo the adjustment criterion in Def.~\ref{def-adj-cri} and it can be shown that if $\tb{Z}$ satisfies the m-adjustment criterion then it satisfies the adjustment criterion (using the fact that no variables in $\tb{R}$ can be parents of variables in $\tb{V}$). In other words, we only need to look for m-adjustment sets from admissible adjustment sets.
	
	As an example consider Fig. \ref{Fig3}. Both $\{V_{m1}\}$ and $\{V_{m1},V_{m2}\}$ satisfy the m-adjustment criterion (and the adjustment criterion too). According to Theorem~\ref{thm-Madj-com}, $P(y \mid do(x))$ can be recovered from missing data by m-adjustment given in Eq.~(\ref{eq-eg-madj}), and can also by recovered as
	\begin{align}
	  P(y\mid do(x)) = \sum_{v_{m1}} p(y \mid x,v_{m1}, R_1 =1)P(v_{m1}\mid R_1 =1).
	\end{align}
	
\subsection{Estimating m-adjustment}
Covariate adjustment is arguably the most widely used method for causal effect estimation in practice. 
A naive approach to estimating $P(\tb{y} \mid do(\tb{x}))$ is directly using Eq.~(\ref{eq-adj}) and estimating the conditional probability distribution of $\tb{Y}$ given $\tb{X}=\tb{x}$ for each possible value of $\tb{Z}$. However, this approach faces computational and sample complexity challenges when $\tb{Z}$ is high dimensional. The number of different values of $\tb{Z}$ grows exponentially in the cardinality of $\tb{Z}$, and the number of samples falling under each $\tb{Z}$ value may be too small to provide a reliable estimate of the conditional distribution.

Robust weighting-based statistical estimation procedures have been developed for estimating the adjustment formula, such as the inverse-probability or stabilized weighting (IPW, SW) \cite{robins2000marginal}, 
 to circumvent these issues with great practical success. These procedures are based on the following rewriting of the adjustment formula  
\begin{equation}
P(\tb{y} \mid do(\tb{x})) = \sum_{\tb{z}} \frac{P(\tb{y},\tb{x},\tb{z})}{P(\tb{x} \mid \tb{z})}.    
\end{equation}
If a reliable estimate of the conditional distribution $P(\tb{x} \mid \tb{z})$ could be obtained, known as the ``propensity score'' \cite{Pearl2016CausalPrimer}, then the causal effect could be estimated by ``weighting'' every observed sample by the factor $1/P(\tb{x} \mid \tb{z})$, leading to the widely used ``inverse probability weighed (IPW) estimator'' \cite{robins2000marginal}.
 In practice, $P(\tb{x} \mid \tb{z})$ is estimated from data by assuming some parametric model (often a logistic regression model).

Next, we show that IPW style estimator could be constructed in the presence of missing data if the causal effect can be estimated using the m-adjustment formula. We rewrite the m-adjustment formula ~(\ref{eq-m-adj}) as follows:
\begin{align}\label{eq-IPW}
&P(\tb{y} \mid do(\tb{x})) \nonumber 
\\&= \sum_{\tb{z}} P(\tb{y} \mid \tb{x}, \tb{z} ,\tb{R}_{\tb{W}} = 1)P(\tb{z} \mid \tb{R}_{\tb{W}} = 1)
\\&= \sum_{\tb{z}} \frac{P(\tb{y} , \tb{x}, \tb{z} \mid \tb{R}_{\tb{W}} = 1)}{P(\tb{x}, \tb{z} \mid \tb{R}_{\tb{W}} = 1))}P(\tb{z} \mid \tb{R}_{\tb{W}} = 1)
\\&= \sum_{\tb{z}} \frac{P(\tb{y} , \tb{x}, \tb{z} \mid \tb{R}_{\tb{W}} = 1)}{P(\tb{x} \mid \tb{z}, \tb{R}_{\tb{W}} = 1)P(\tb{z} \mid \tb{R}_{\tb{W}} = 1)}P(\tb{z} \mid \tb{R}_{\tb{W}} = 1)
\\&= \sum_{\tb{z}} \frac{P(\tb{y} , \tb{x}, \tb{z} \mid \tb{R}_{\tb{W}} = 1)}{P(\tb{x} \mid \tb{z}, \tb{R}_{\tb{W}} = 1)} \label{eq-mipw}
\end{align}

Based on Eq. (\ref{eq-mipw}), all the weighting-based techniques developed for estimating the adjustment formula could be directly extended to estimate the m-adjustment formula by using data cases where all the variables in $\tb{X}\cup\tb{Y}\cup\tb{Z}$ are observed. 

In fact, a popular method to deal with missing data in practice is to use IPW techniques while only using data cases for which all the relevant variables are observed assuming the covariates are admissible for adjustment. However, this practice may lead to biased estimation if the covariates are not admissible for m-adjustment. As implied by the necessary and sufficient result stated in Theorem~\ref{thm-Madj-com}, this method is justified only if the covariates are admissible for m-adjustment which is often a stronger requirement than admissible for adjustment.

	\section{Listing M-Adjustment Sets \label{sec-list}}
	In the previous section we provided a criterion under which a set of variables \tb{Z} is an admissible m-adjustment set for recovering a causal effect. It is natural to ask how to find an admissible set. In reality, it is common that more than one set of variables are admissible. In such situations it is possible that some m-adjustment sets might be preferable over others based on various aspects such as feasibility, difficulty, and cost of collecting variables. Next we first present an algorithm that systematically lists all m-adjustment sets and then present an algorithm that finds a minimum m-adjustment set. These algorithms provide flexibility for researchers to choose their preferred adjustment set based on their needs and assumptions.    
  
	\subsection{Listing all admissible sets}
	It turns out in general there may exist exponential number of m-adjustment sets. To illustrate, we look for possible m-adjustment sets in the m-graph in 
      Fig.~\ref{Fig5} for recovering the causal effect $p(\tb{y}\mid do(\tb{x}))$ (this graph is adapted from a graph in \cite{correa2018generalized}). A valid m-adjustment set $\tb{Z}$ needs to close all the $k$ non-causal paths from $X$ to $Y$. $\tb{Z}$ must contain at least one variable in $\{V_{i1}, V_{i2}, V_{i3}\}$ for each $i=1, \ldots, k$. Therefore, to close each path, there are 7 possible $\tb{Z}$ sets, and for $k$ paths, we have total $7^k$ $\tb{Z}$ sets as potential m-adjustment sets. For each of them, Conditions (c) and (d) in Def.~\ref{defMcriterion} are satisfied because $ (\tb{R} \independent Y \mid X)_{G_{\overline{X}}}$ and $X$ is not an ancestor of any $\tb{R}$ variables. We obtain that there are at least $7^k$ number of m-adjustment sets. \\ 
	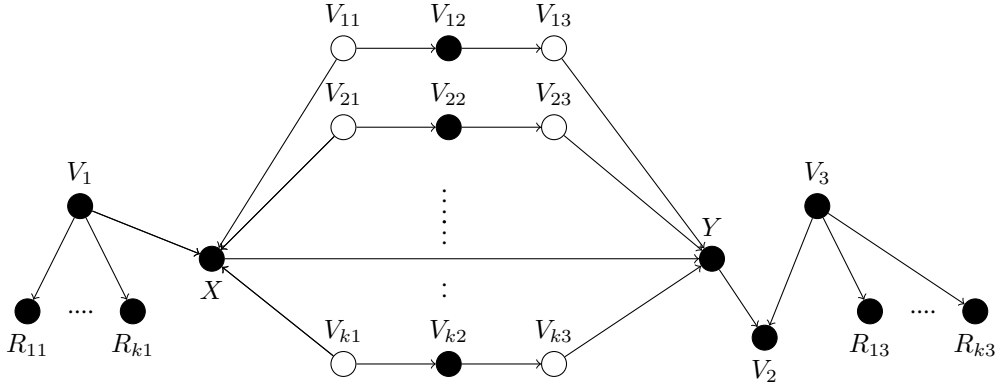
\begin{figure}[H]
		\centering
		\hspace*{2cm}
		\begin{tikzpicture}[scale = 0.7]

		\node[circle,draw,,fill = black,label=below:{$R_{11}$}] (R11) at (1,2) {};
		\node[text width=3cm] at (3.9,2) {....};
		\node[circle,draw,fill = black,label=below:{$R_{k1}$}] (R31) at (3,2) {};

		\node[circle,draw,fill = black,label=above:{$V_1$}] (V1) at (2,4) {};
		\node[circle,draw,fill = black,label=below:{$X$}] (X) at (4.5,3) {};
		\node[circle,draw,,label=above:{$V_{11}$}] (V11) at (7,7) {};
		\node[circle,draw,,fill = black,label=above:{$V_{12}$}] (V12) at  (9,7) {};
		\node[circle,draw,label=above:{$V_{13}$}] (V13) at (11,7)  {};
		\node[circle,draw,label=above:{$V_{21}$}] (V21) at  (7,5.5) {};
		\node[circle,draw,,fill = black,label=above:{$V_{22}$}] (V22) at  (9,5.5) {};
		\node[circle,draw,label=above:{$V_{23}$}] (V23) at  (11,5.5)  {};
		\node[circle,draw,fill = black,label=above:{$Y$}] (Y) at  (14,3) {};
		\node[circle,draw,fill = black,label=below:{$V_2$}] (V3) at  (15,1.5) {};
		\node[circle,draw,fill = black,label=above:{$V_3$}] (V4) at  (16,4) {};
		\node[circle,draw,,fill = black,label=below:{$R_{13}$}] (R13) at (17,2) {};
		\node[text width=3cm] at (19.9,2) {....};	
		\node[circle,draw,,fill = black,label=below:{$R_{k3}$}] (R33) at (19,2) {};		
		\node[text width=3cm] at (11,4.3) {.};
		\node[text width=3cm] at (11,4.1) {.};
		\node[text width=3cm] at (11,3.9) {.};
		\node[text width=3cm] at (11,3.7) {.};
		\node[text width=3cm] at (11,3.5) {.};
		\node[text width=3cm] at (11,3.3) {.};
		\node[text width=3cm] at (11,3.7) {.};
		\node[text width=3cm] at (11,2.5) {.};
		\node[text width=3cm] at (11,2.3) {.};
		\node[circle,draw,label=above:{$V_{k1}$}] (Vk1) at  (7,1)   {};
		\node[circle,draw,,fill = black,label=above:{$V_{k2}$}] (Vk2) at  (9,1) {};
		\node[circle,draw,label=above:{$V_{k3}$}] (Vk3) at  (11,1)  {};

		\draw[->] (V1)-- (R11);
		\draw[->] (V1)-- (R31);
		\draw[->] (V1)-- (X);

		\draw[->] (V11)-- (X);
		\draw[->] (V11)-- (V12);
		\draw[->] (V12)-- (V13);
		\draw[->] (V1)-- (X);
		\draw[->] (V13)-- (Y);
		\draw[->] (Y)-- (V3);
		\draw[->] (V4)-- (V3);
		\draw[->] (V4)-- (R13);
		\draw[->] (V4)-- (R33);

		\draw[->] (V21) -- (X);
		\draw[->] (V21)-- (V22);
		\draw[->] (V22)-- (V23);
		\draw[->] (V21)-- (X);
		\draw[->] (V23)-- (Y);

		\draw[->] (Vk1) -- (X);
		\draw[->] (Vk1)-- (Vk2);
		\draw[->] (Vk2)-- (Vk3);
		\draw[->] (Vk1)-- (X);
		\draw[->] (Vk3)-- (Y);
		\draw[->] (X)-- (Y);

		\end{tikzpicture}
		\centering
		\caption[loftitle]{An example of exponential number of m-adjustment sets}  
		\label{Fig5}              
	\end{figure}
	
	The above example demonstrates that any algorithm that lists all m-adjustment sets will be exponential time complexity. To deal with this issue, we will provide an algorithm with polynomial delay complexity \cite{takata2010space}. Polynomial delay algorithms require polynomial time to generate the first output (or indicate failure) and the time between any two consecutive outputs is polynomial as well.
	
	To facilitate the construction of a listing algorithm, we introduce a graph transformation called \emph{Proper Backdoor Graph} originally introduced by Van der Zander, Liskiewicz, and Textor (2014).
\begin{defn}[Proper Backdoor Graph \cite{vanderZander:2014:CSA:3020751.3020845}]
\label{def:gpbd}
Let $G$ be a causal graph, and $\tb{X},\tb{Y}$ be disjoint subsets of variables. The proper backdoor graph, denoted as $G_{\tb{X}, \tb{Y}}^{pbd}$, is obtained from $G$ by removing the first edge of every proper causal path from $\tb{X}$ to $\tb{Y}$.
\end{defn}
	
Next we present an alternative equivalent formulation of the m-adjustment criterion in Def.~\ref{defMcriterion} that will be useful in constructing a listing algorithm.
	\begin{defn}[M-Adjustment Criterion, Math. Version]\label{defMathcriterion}
Given a m-graph $G$ over observed variables $\tb{V}=\tb{V}_o\cup \tb{V}_m$ and missingness indicators $\tb{R}$, and disjoint sets of variables $\tb{X, Y, Z}\subseteq \tb{V}$, letting $\tb{W} = \tb{V}_m \cap (\tb{X}\cup\tb{Y}\cup\tb{Z})$, $\tb{Z}$ satisfies the m-adjustment criterion relative to the pair ($\tb{X}, \tb{Y}$) if  
	\begin{enumerate}[a)]
		\item $\tb{Z} \cap Dpcp(\tb{X}, \tb{Y}) = \phi$
		\item $(\tb{Y} \independent \tb{X} \mid \tb{Z}, \tb{R}_{\tb{W}})_{G_{\tb{X}, \tb{Y}}^{pbd}}$
		\item $(\tb{Y} \independent \tb{R}_{\tb{W}} \mid \tb{X})_{G_{\overline{X}}}$
		\item  $( (\tb{X} \cap An(\tb{R}_{\tb{W}})) \independent \tb{Y})_{G_{\underline{X}}}$ 
	\end{enumerate}
    where 
    \begin{align}
    D_{pcp}(\tb{X},\tb{Y})=De ((De(\tb{X})_{G_{\overline{X}}}\setminus \tb{X} ) \cap An(\tb{Y})_{G_{\overline{X}}}).
    \end{align}
	\end{defn}
In 	Definition~\ref{defMathcriterion}, $D_{pcp}(\tb{X},\tb{Y})$, originally introduced in \cite{vanderZander:2014:CSA:3020751.3020845}, represents the set of descendants of those variables in a proper causal path from $\tb{X}$ to $\tb{Y}$.
	
	\begin{prop}\label{prop-eqv-Mcrtn-Math} Definition \ref{defMathcriterion} and Definition \ref{defMcriterion} are equivalent.
	\end{prop}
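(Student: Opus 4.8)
The plan is to verify the equivalence condition by condition, exploiting that two of the four conditions are essentially the same in both definitions. Condition (c) of Def.~\ref{defMcriterion} and of Def.~\ref{defMathcriterion} are syntactically identical, so there is nothing to prove. The two versions of condition (d) differ only cosmetically: Def.~\ref{defMcriterion}(d) asks that each $X \in \tb{X}\cap An(\tb{R}_{\tb{W}})$ be d-separated from $\tb{Y}$ in $G_{\underline{X}}$, whereas Def.~\ref{defMathcriterion}(d) asks the whole set $\tb{X}\cap An(\tb{R}_{\tb{W}})$ to be d-separated from $\tb{Y}$ in the same deletion graph. Since d-separation of two sets is by definition the conjunction of the single-source d-separations over all members, these are equivalent, and the argument is a one-line appeal to the definition of d-separation between sets. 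Thus the proposition reduces to establishing the equivalence of the two versions of (a) and of (b), which are the missing-data analogues of the reformulations of the ordinary adjustment criterion due to Van der Zander, Liskiewicz, and Textor, now carrying the extra conditioning set $\tb{R}_{\tb{W}}$.

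For condition (a), I would show that $D_{pcp}(\tb{X},\tb{Y})$ is exactly the set of nodes that are descendants in $G_{\overline{X}}$ of some $W\notin\tb{X}$ lying on a proper causal path from $\tb{X}$ to $\tb{Y}$; the equivalence of the two (a) conditions is then immediate, since both simply say $\tb{Z}$ avoids this set. The crux is the inner set $(De(\tb{X})_{G_{\overline{X}}}\setminus\tb{X})\cap An(\tb{Y})_{G_{\overline{X}}}$. I would argue both inclusions directly: a node $W$ in this set is a proper descendant of $\tb{X}$ and an ancestor of $\tb{Y}$ in $G_{\overline{X}}$, so concatenating the two directed paths yields a directed $\tb{X}$-to-$\tb{Y}$ path through $W$ that cannot re-enter $\tb{X}$ (no node of $\tb{X}$ has an incoming edge in $G_{\overline{X}}$), i.e. a proper causal path; conversely, any non-$\tb{X}$ node on a proper causal path sits on directed sub-paths that survive in $G_{\overline{X}}$, placing it in the inner set. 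Taking descendants in $G_{\overline{X}}$ then matches Def.~\ref{defMcriterion}(a). This is the characterization of $D_{pcp}$ already established in the cited work, so here I would mainly reproduce the short set-inclusion argument and rely on their result for the descendant closure.

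The substantive step is condition (b). I would invoke the proper-backdoor-graph lemma: for disjoint endpoint sets and an arbitrary conditioning set $\tb{S}$ disjoint from them, $\tb{S}$ blocks every proper non-causal path from $\tb{X}$ to $\tb{Y}$ in $G$ if and only if $(\tb{X}\independent\tb{Y}\mid\tb{S})_{G^{pbd}_{\tb{X},\tb{Y}}}$. Instantiating $\tb{S}=\tb{Z}\cup\tb{R}_{\tb{W}}$ yields precisely the equivalence of the two (b) conditions, provided the lemma is insensitive to the presence of the $\tb{R}$ variables. This is where the missing-data structure enters: because $\tb{R}$ variables are never parents of $\tb{V}$ variables, no $\tb{R}$ node can be an interior node of a directed path terminating in $\tb{Y}\subseteq\tb{V}$, hence no $\tb{R}$ node lies on any proper causal path from $\tb{X}$ to $\tb{Y}$. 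Consequently the construction of $G^{pbd}_{\tb{X},\tb{Y}}$, which deletes only first edges of proper causal paths, is unaffected by $\tb{R}$, and $\tb{R}_{\tb{W}}$ is disjoint from $\tb{X}\cup\tb{Y}$, so the lemma applies with the augmented conditioning set.

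I expect the main obstacle to be exactly this condition (b): one must confirm that augmenting the conditioning set with $\tb{R}_{\tb{W}}$ does not invalidate the proper-backdoor-graph equivalence, and that ``all non-causal paths'' in Def.~\ref{defMcriterion}(b) may be read as ``all proper non-causal paths'' (the standard reduction to proper paths, which is justified once condition (a) holds). The handling of the $\tb{R}$ variables is itself clean, given the assumption that $\tb{R}$ variables have no children in $\tb{V}$, so the remaining care lies in transcribing the proper-backdoor-graph lemma with the correct endpoint and conditioning partition rather than in any genuinely new graphical argument.
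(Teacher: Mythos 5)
Your handling of conditions (a), (c), and (d) matches the paper's (the paper simply asserts (c) and (d) are the same; your observation that (d) is a set-versus-elementwise d-separation statement is a harmless refinement, and your $D_{pcp}$ characterization for (a) is the standard one). The gap is in condition (b). The lemma you invoke --- that for an \emph{arbitrary} conditioning set $\tb{S}$ disjoint from $\tb{X}\cup\tb{Y}$, blocking all proper non-causal paths in $G$ is equivalent to $(\tb{X}\independent\tb{Y}\mid\tb{S})_{G_{\tb{X},\tb{Y}}^{pbd}}$ --- is false as stated. The correct form of the constructive-backdoor result is conditional on $\tb{S}\cap D_{pcp}(\tb{X},\tb{Y})=\emptyset$: a proper non-causal path whose first edge is the (removed) first edge of a proper causal path is absent from $G_{\tb{X},\tb{Y}}^{pbd}$, so the d-separation statement says nothing about it, and such a path is open exactly when some collider on it is an ancestor of the conditioning set. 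Your only safeguard concerning the $\tb{R}$ variables is that they lie on no proper causal path (true, so $G^{pbd}$ is unchanged), but that does not prevent $\tb{R}_{\tb{W}}$ from containing \emph{descendants} of nodes on proper causal paths, i.e.\ from meeting $D_{pcp}(\tb{X},\tb{Y})$. Concretely, with $X\to A\to Y$, $B\to A$, $B\to Y$, $A\to R_Y$, and $Y\in\tb{V}_m$, the proper non-causal path $X\to A\leftarrow B\to Y$ is opened by conditioning on $R_Y\in\tb{R}_{\tb{W}}$, yet it is invisible in $G^{pbd}_{\tb{X},\tb{Y}}$, so your condition (b) of Def.~\ref{defMathcriterion} holds while condition (b) of Def.~\ref{defMcriterion} fails.

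What closes this hole --- and what your proposal never uses in the (b)-equivalence --- is condition (c). The paper's proof is precisely this argument by contradiction: an open non-causal path leaving $\tb{X}$ along a causal edge must contain a collider ancestral to $\tb{Z}\cup\tb{R}_{\tb{W}}$; condition (a) rules out the $\tb{Z}$ case, and condition (c) rules out the $\tb{R}_{\tb{W}}$ case (such a collider yields an active path between $\tb{R}_{\tb{W}}$ and $\tb{Y}$ given $\tb{X}$ in $G_{\overline{X}}$, as in the example above). Equivalently, you could repair your route by first proving $\tb{R}_{\tb{W}}\cap D_{pcp}(\tb{X},\tb{Y})=\emptyset$ as a consequence of condition (c), and only then invoking the constructive-backdoor lemma with $\tb{S}=\tb{Z}\cup\tb{R}_{\tb{W}}$. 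Either way, the equivalence of the two versions of (b) is not a standalone instantiation of the proper-backdoor-graph lemma; it genuinely requires (a) and (c) together, and your closing claim that no new graphical argument is needed is where the proposal breaks down.
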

Note that all the proofs of the propositions and  theorems in Section~\ref{sec-list} are given in Appendix A.

Finally to help understanding the logic of the algorithm we introduce a definition originally introduced in \cite{correa2018generalized}:
	\begin{defn}[Family of Separators \cite{correa2018generalized}]\label{defFSeprator} For a disjoint set of variables \tb{X}, \tb{Y}, \tb{E} and $\tb{I} \subseteq \tb{E}$, a family of separators is defined as follows:
	\begin{equation}
	{Z}_{G(\tb{X},\tb{Y})} \langle \tb{I}, \tb{E} \rangle := \{\tb{Z} \mid ( \tb{X} \independent \tb{Y} \mid \tb{Z} )_G \mbox{ and } \tb{I} \subseteq \tb{Z} \subseteq \tb{E} \},
	\end{equation}
	which represent the set of all sets that d-separate \tb{X} and \tb{Y} and encompass all variables in set \tb{I} but do not have any variables outside \tb{E}.
	\end{defn}

Algorithm~\ref{alg:1} presents the function ListMAdj that lists all the m-adjustment sets in a given m-graph $G$ for recovering the causal effect of \tb{X} on \tb{Y}. We note that the algorithm uses an external function FindSep described in \cite{vanderZander:2014:CSA:3020751.3020845} (not presented in this paper). FindSep(G, \tb{X}, \tb{Y}, \tb{I}, \tb{E}) will return a set in ${Z}_{G(\tb{X}, \tb{Y})} \langle \tb{I}, \tb{E} \rangle$ if such a set exists; otherwise it returns $\bot$ representing failure. 

\begin{algorithm}
	\caption{Listing all the m-adjustment sets}\label{alg:1}
	\DontPrintSemicolon
	\SetAlgoNoLine
	\Fn{ListMAdj ($G,\tb{X},\tb{Y},\tb{V}_o, \tb{V}_m,\tb{R}$) }{ 
		$G^{pbd}_{\tb{X},\tb{Y}} \gets$ compute proper back-door graph G\;
		$\tb{E} \gets (\tb{V}_o \cup \tb{V}_m \cup \tb{R}) \setminus \{ \tb{X} \cup \tb{Y} \cup D_{pcp}(\tb{X},\tb{Y})\}.$\; 
		
		ListSepConditions($G^{pbd}_{\tb{X},\tb{Y}},\tb{X},\tb{Y},\tb{R},\tb{V}_o, \tb{V}_m, \tb{I} = \{\tb{R}_{\tb{X} \cap \tb{V}_m} \cup \tb{R}_{\tb{Y}\cap \tb{V}_m} \}, \tb{E}$)}
	\Fn{ListSepConditions ($G,\tb{X},\tb{Y},\tb{R},\tb{V}_o, \tb{V}_m,\tb{I},\tb{E}$)}{
		\If{$(\tb{Y} \independent \tb{R}_{{\tb{I}}} \mid \tb{X})_{G_{\overline{\tb{X}}}}$ and $((\tb{X} \cap An(\tb{R}_{{\tb{I}}})) \independent \tb{Y})_{G_{\underline{\tb{X}}}}$ and $FindSep(G,\tb{X},\tb{Y},\tb{I}, \tb{E}) \neq \bot$ }{
			\eIf{$\tb{I} = \tb{E}$}{
				Output($\tb{I} \setminus \tb{R})$\;}{
				W $\gets$ arbitrary variable from $\tb{E} \setminus (\tb{I} \cup \tb{R})$\; 
				\If{$W \in \tb{V}_o$}{
					ListSepConditions($G,\tb{X},\tb{Y},\tb{R},\tb{V}_o, \tb{V}_m,$
					$\tb{I} \cup \{W\}, \tb{E}$)\;	
					ListSepConditions($G,\tb{X},\tb{Y},\tb{R},\tb{V}_o, \tb{V}_m, \tb{I}$
					$ , \tb{E} \setminus \{W\}$)\;	
				}				\If{$W \in \tb{V}_m$ and $\tb{R}_{W} \in \tb{E}$}{ 
					ListSepConditions($G,\tb{X},\tb{Y},\tb{R},\tb{V}_o, \tb{V}_m,$
					$\tb{I} \cup \{W, \tb{R}_{W}\}, \tb{E}$)\;	
					ListSepConditions($G,\tb{X},\tb{Y},\tb{R},\tb{V}_o, \tb{V}_m, \tb{I}$, 
					 $\tb{E} \setminus \{W, \tb{R}_{W}\}$)\;	
			}}
	}}
\end{algorithm}

 Function ListMAdj works by first excluding all variables lying in the proper causal paths from consideration (Line 3) and then calling the function ListSepConditions (Line 4) to return all the m-adjustment sets. The function of ListSepConditions is summarized in the following proposition:
	\begin{prop}[Correctness of ListSepConditions]\label{thmCorAlgListCond} 
		Given a m-graph $G$ 
		and sets of disjoint variables \tb{X}, \tb{\tb{Y}}, and \tb{E} and $\tb{I} \subseteq \tb{E}$,  ListSepConditions lists all \tb{Z} variables such that: \\
		
		$\tb{Z} \in \{\tb{Z} \mid (\tb{X} \independent \tb{Y} \mid \tb{Z},\tb{R}_{\tb{Z}},\tb{R}_{\tb{X} \cap \tb{V}_{m}},\tb{R}_{\tb{Y} \cap \tb{V}_{m}})_{G_{\tb{X}, \tb{Y}}^{pbd}} \mbox{ }\& \mbox{ }(\tb{Y} \independent \tb{R}_{\tb{Z}} \mid \tb{X})_{G_{\overline{X}}}\mbox{ } \& \mbox{ }( (\tb{X} \cap An(\tb{R}_{\tb{Z}})) \independent \tb{Y})_{G_{\underline{X}}}\& \mbox{ }\tb{I} \subseteq \tb{Z} \subseteq \tb{E} \}$
        
        Where $\tb{R}_{\tb{Z}}$ is a shorthand for $\tb{R}_{\tb{Z}\cap \tb{V}_m}$. 
	\end{prop}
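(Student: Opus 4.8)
The plan is to establish the two halves of correctness — \emph{soundness} (everything emitted lies in the displayed collection) and \emph{completeness} (every set of the collection is emitted, and exactly once) — by induction on the size of the undecided pool $|\tb{E}\setminus\tb{I}|$, treating FindSep as the oracle guaranteed in the excerpt: $FindSep(G,\tb{X},\tb{Y},\tb{I},\tb{E})$ returns a member of $Z_{G(\tb{X},\tb{Y})}\langle\tb{I},\tb{E}\rangle$ when that family is nonempty and $\bot$ otherwise, where $G$ is the proper backdoor graph $G^{pbd}_{\tb{X},\tb{Y}}$ passed in by ListMAdj. Before the induction I would record the structural invariant the recursion preserves: $\tb{I}\subseteq\tb{E}$; each $\tb{V}_m$-variable is coupled to its indicator, i.e. for $W\in\tb{V}_m$ one has $W\in\tb{I}\Leftrightarrow \tb{R}_W\in\tb{I}$ and $W\in\tb{E}\Leftrightarrow \tb{R}_W\in\tb{E}$; and the indicators $\tb{R}_{\tb{X}\cap\tb{V}_m}\cup\tb{R}_{\tb{Y}\cap\tb{V}_m}$ seeded into $\tb{I}$ stay in $\tb{I}$ throughout. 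This is immediate from the two branching rules, which always move $W$ and $\tb{R}_W$ together. Its key consequence is that, writing $\tb{Z}=\tb{I}\setminus\tb{R}$, the include-set decomposes as $\tb{I}=\tb{Z}\cup\tb{R}_{\tb{Z}}\cup\tb{R}_{\tb{X}\cap\tb{V}_m}\cup\tb{R}_{\tb{Y}\cap\tb{V}_m}$, so that the guard, which tests only $\tb{R}_{\tb{I}}$ and $FindSep$ on $\tb{I}$, aligns with the conditioning sets in the three d-separation requirements of the proposition, which I abbreviate as (1) the separation in $G^{pbd}_{\tb{X},\tb{Y}}$, (2) $(\tb{Y}\independent\tb{R}_{\tb{Z}}\mid\tb{X})_{G_{\overline{X}}}$, and (3) $((\tb{X}\cap An(\tb{R}_{\tb{Z}}))\independent\tb{Y})_{G_{\underline{X}}}$.

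For soundness I would inspect only the leaves. A set is emitted solely when $\tb{I}=\tb{E}$ and only after the guard has passed. With $\tb{I}=\tb{E}$ the single candidate in $Z_{G(\tb{X},\tb{Y})}\langle\tb{I},\tb{E}\rangle$ is $\tb{I}$ itself, so $FindSep\neq\bot$ forces $\tb{I}$ to d-separate $\tb{X}$ and $\tb{Y}$ in $G^{pbd}_{\tb{X},\tb{Y}}$; by the decomposition of $\tb{I}$ this is exactly requirement (1) for $\tb{Z}=\tb{I}\setminus\tb{R}$. Since at a leaf $\tb{R}_{\tb{I}}=\tb{R}_{\tb{Z}}$, the two remaining guard tests are literally (2) and (3). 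Hence the emitted $\tb{Z}$ satisfies all three separations together with $\tb{I}\subseteq\tb{Z}\subseteq\tb{E}$, i.e. it lies in the displayed collection.

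The crux is completeness, and in particular showing the guard never prunes a subtree that still contains a valid set. I would split the guard into its three conjuncts. Pruning on $FindSep=\bot$ is safe because then no set in $[\tb{I},\tb{E}]$ satisfies (1), so the subtree holds nothing valid. Pruning on failure of (2) or (3) is safe by an \emph{anti-monotonicity} observation: both get strictly harder as the include-set grows. For every $\tb{Z}$ in the subtree, $\tb{R}_{\tb{Z}}\supseteq\tb{R}_{\tb{I}}$, so if $\tb{Y}$ is not d-separated from the smaller set $\tb{R}_{\tb{I}}$ given $\tb{X}$ in $G_{\overline{X}}$ then the active path witnessing this, ending at a node of $\tb{R}_{\tb{I}}\subseteq\tb{R}_{\tb{Z}}$, still witnesses failure for $\tb{R}_{\tb{Z}}$; and since $\tb{X}\cap An(\tb{R}_{\tb{Z}})\supseteq\tb{X}\cap An(\tb{R}_{\tb{I}})$, any active path witnessing failure of (3) at $\tb{I}$ persists for the larger set. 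Thus failure at $\tb{I}$ entails failure for every $\tb{Z}\supseteq\tb{I}$ in the subtree, and nothing is lost. Granting that the guard is conservative, the induction is routine: at $\tb{I}=\tb{E}$ the single candidate is handled by the leaf analysis; otherwise the chosen $W$ (well-defined, since the coupling invariant forces $\tb{E}\setminus(\tb{I}\cup\tb{R})\neq\emptyset$ whenever $\tb{I}\neq\tb{E}$) partitions the valid sets into those containing $W$, covered by the first recursive call (which also fixes $\tb{R}_W$ when $W\in\tb{V}_m$), and those avoiding $W$, covered by the second. As these two ranges are disjoint, each valid set is produced exactly once.

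I expect the anti-monotonicity step to be the delicate part. One must argue at the level of active trails in $G_{\overline{X}}$ and $G_{\underline{X}}$ — rather than merely invoking graphoid axioms — that enlarging $\tb{R}_{\tb{Z}}$, and hence $\tb{X}\cap An(\tb{R}_{\tb{Z}})$, can only destroy and never create the separations (2) and (3); the ancestrality of $D_{pcp}$ and the assumption that no $\tb{R}$ variable is a parent of a $\tb{V}$ variable should be used to keep this argument clean. The second care point is the $\tb{R}$-bookkeeping: one must verify throughout that the working set $\tb{I}$ really decomposes as $\tb{Z}\cup\tb{R}_{\tb{Z}}\cup\tb{R}_{\tb{X}\cap\tb{V}_m}\cup\tb{R}_{\tb{Y}\cap\tb{V}_m}$, so that the guard's test on $\tb{R}_{\tb{I}}$ and $FindSep$ on $\tb{I}$ coincide at every leaf with requirements (1)--(3) for the emitted $\tb{Z}=\tb{I}\setminus\tb{R}$. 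Everything else reduces to the standard include/exclude enumeration of \cite{vanderZander:2014:CSA:3020751.3020845}.
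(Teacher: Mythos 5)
Your proposal is correct and follows essentially the same route as the paper's own proof: soundness is verified at the leaves via the line-7 guard together with FindSep on the proper backdoor graph, and completeness rests on the observation that a guard that fails at $\tb{I}$ cannot be repaired by enlarging $\tb{I}$, so pruning never discards a valid set. Your write-up is in fact more explicit than the paper's, which only asserts in one sentence the $W$/$\tb{R}_W$ coupling and the anti-monotonicity of the independence tests that you isolate as the delicate step.
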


    ListSepConditions, by considering both including and not including each variable, recursively generates all subset of variables in \tb{V} and for each generated set examines whether the conditions (b), (c), and (d) in Def.~\ref{defMathcriterion} holds or not. If those conditions were satisfied, the algorithm will return that candidate set as a m-adjustment set. ListSepConditions generates each potential set by taking advantage of back-track algorithm and at each recursion for a variable $W \in \tb{V}$ examines two cases of having $W$ in candidate set or not. If $W \in \tb{V}_o$, then the algorithm examines having and not having this variable in the m-adjustment set and continues to decide about the rest of the variables in next recursion. If $W \in \tb{V}_m$, then the algorithm includes both $W$ and $R_W$ in the candidate m-adjustment set. Therefore, the algorithm considers both cases of having $ W , R_W$ and not having them in the candidate set. ListSepConditions, at the beginning of each recursion in Line 7, examines whether the candidate m-adjustment set so far satisfies the conditions (b), (c), (d) in Def.~\ref{defMcriterion} or not. If any of them is not satisfied, the recursion stops for that candidate set. The function FindSep examines the existence of a set containing all variables in \tb{I} and not having any of $\tb{V} \setminus \tb{E}$ that d-separates \tb{X} from \tb{Y}. If this set does not exist FindSep returns {$ \bot $}. ListSepConditions utilizes FindSep in order to check satisfaction of condition (b) in Def.~\ref{defMathcriterion} for the candidate set. Since the graph $G$ that is given to FindSep is a proper back-door graph, all paths between \tb{X} and \tb{Y} in this graph is non-causal. Therefore, if a set separates $\tb{X}$ and $\tb{Y}$ in $G^{pbd}$, this set blocks all non-causal paths from $\tb{X}$ to $\tb{Y}$ in $G$.
    
 The following theorem states that ListMAdj lists all the m-adjustment sets in a given m-graph $G$ for recovering the causal effect of \tb{X} on \tb{Y}.  
  		\begin{thm}[Correctness of ListMAdj]\label{thmCorAlgListAdj}
			Given a m-graph $G$ and sets of disjoint variables \tb{X}, \tb{Y}, ListMAdj returns all the sets that satisfy the m-adjustment criterion relative to (\tb{X},\tb{Y}).
		\end{thm}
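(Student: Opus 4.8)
The plan is to build on the two results already established in this section: Proposition~\ref{prop-eqv-Mcrtn-Math}, which shows that the m-adjustment criterion of Definition~\ref{defMcriterion} is equivalent to its mathematical reformulation in Definition~\ref{defMathcriterion}, and Proposition~\ref{thmCorAlgListCond}, which characterizes the family of sets returned by ListSepConditions. By the former it suffices to prove that ListMAdj outputs exactly those $\tb{Z}$ satisfying conditions (a)--(d) of Definition~\ref{defMathcriterion}. The whole argument then reduces to a set equality: comparing, set for set, the predicate of Proposition~\ref{thmCorAlgListCond}---instantiated at the specific arguments that ListMAdj supplies---with conditions (a)--(d).

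First I would instantiate Proposition~\ref{thmCorAlgListCond} at the arguments passed in the call to ListSepConditions, namely the proper back-door graph $G^{pbd}_{\tb{X},\tb{Y}}$, the seed $\tb{I}=\tb{R}_{\tb{X}\cap\tb{V}_m}\cup\tb{R}_{\tb{Y}\cap\tb{V}_m}$, and the bound $\tb{E}=(\tb{V}_o\cup\tb{V}_m\cup\tb{R})\setminus(\tb{X}\cup\tb{Y}\cup D_{pcp}(\tb{X},\tb{Y}))$. The technical heart of the proof is an indicator-bookkeeping lemma: in every candidate set the recursion builds, the missingness-indicator content is exactly $\tb{R}_{\tb{W}}$ with $\tb{W}=\tb{V}_m\cap(\tb{X}\cup\tb{Y}\cup\tb{Z})$. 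This holds because (i) the seed keeps $\tb{R}_{\tb{X}\cap\tb{V}_m}\cup\tb{R}_{\tb{Y}\cap\tb{V}_m}$ present throughout the recursion; (ii) whenever a partially observed variable $W\in\tb{V}_m$ is included the recursion inserts $W$ and $\tb{R}_W$ together, and excludes both together, so $\tb{R}_{\tb{Z}\cap\tb{V}_m}$ is present precisely when $\tb{Z}\cap\tb{V}_m$ is; and (iii) the pivot $W$ is always drawn from $\tb{E}\setminus(\tb{I}\cup\tb{R})$, so no indicator is ever added on its own. Hence the indicator content of each candidate equals $\tb{R}_{\tb{X}\cap\tb{V}_m}\cup\tb{R}_{\tb{Y}\cap\tb{V}_m}\cup\tb{R}_{\tb{Z}\cap\tb{V}_m}=\tb{R}_{\tb{W}}$, and stripping the indicators at the $\tb{I}\setminus\tb{R}$ output step returns the intended covariate set $\tb{Z}$.

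With this identification in hand I would match the conditions one by one. The membership constraint $\tb{Z}\subseteq\tb{E}$, together with the standing disjointness of $\tb{Z}$ from $\tb{X}$ and $\tb{Y}$, is equivalent to $\tb{Z}\cap D_{pcp}(\tb{X},\tb{Y})=\emptyset$, which is condition (a). Once the candidate's indicator content is identified with $\tb{R}_{\tb{W}}$, the separation certified by FindSep in the proper back-door graph reads $(\tb{Y}\independent\tb{X}\mid\tb{Z},\tb{R}_{\tb{W}})_{G^{pbd}_{\tb{X},\tb{Y}}}$---condition (b)---where I would invoke the fact, noted after Proposition~\ref{thmCorAlgListCond}, that every $\tb{X}$--$\tb{Y}$ path in $G^{pbd}_{\tb{X},\tb{Y}}$ is non-causal, so separating $\tb{X}$ from $\tb{Y}$ in this graph is the same as blocking all non-causal paths of $G$. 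Finally, the two d-separation tests guarding each recursive call, evaluated on the full indicator content $\tb{R}_{\tb{W}}$, become $(\tb{Y}\independent\tb{R}_{\tb{W}}\mid\tb{X})_{G_{\overline{X}}}$ and $((\tb{X}\cap An(\tb{R}_{\tb{W}}))\independent\tb{Y})_{G_{\underline{X}}}$, i.e. conditions (c) and (d). Soundness (every output satisfies (a)--(d)) and completeness (every $\tb{Z}$ meeting (a)--(d) is output) then both follow from Proposition~\ref{thmCorAlgListCond}, since the characterizing predicate of ListSepConditions has been shown to coincide, candidate for candidate, with (a)--(d).

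The step I expect to be the main obstacle is precisely the indicator-bookkeeping lemma and its interaction with the separation tests. Definition~\ref{defMathcriterion} is phrased in terms of $\tb{R}_{\tb{W}}$, the indicators of \emph{all} partially observed treatment, outcome, and covariate variables, whereas the recursion manipulates $\tb{Z}$ and its own indicators incrementally and the predicate of Proposition~\ref{thmCorAlgListCond} is naturally written using $\tb{R}_{\tb{Z}}$. Showing that the seeding of $\tb{I}$ with $\tb{R}_{\tb{X}\cap\tb{V}_m}\cup\tb{R}_{\tb{Y}\cap\tb{V}_m}$ and the bundled insertion and removal of $(W,\tb{R}_W)$ force the conditioning set tested at every output to be exactly $\tb{Z}\cup\tb{R}_{\tb{W}}$---so that no indicator of $\tb{R}_{\tb{W}}$ is silently dropped and none outside it is spuriously introduced---is the delicate point on which the correspondence, and hence correctness, rests.
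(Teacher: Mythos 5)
Your proposal follows essentially the same route as the paper's proof: condition (a) is enforced by restricting candidates to $\tb{E}$, which excludes $D_{pcp}(\tb{X},\tb{Y})$, and conditions (b)--(d) are delegated to Proposition~\ref{thmCorAlgListCond} applied to the call on $G^{pbd}_{\tb{X},\tb{Y}}$ with seed $\tb{R}_{\tb{X}\cap\tb{V}_m}\cup\tb{R}_{\tb{Y}\cap\tb{V}_m}$. Your indicator-bookkeeping lemma, which identifies the conditioning set at each output leaf with $\tb{Z}\cup\tb{R}_{\tb{W}}$ and thereby reconciles the $\tb{R}_{\tb{Z}}$ phrasing of Proposition~\ref{thmCorAlgListCond} with the $\tb{R}_{\tb{W}}$ of the criterion, is a useful detail that the paper's much terser argument leaves implicit, but the overall structure is the same.
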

  
The follow results state that Algorithm~\ref{alg:1} is polynomial delay.		
		\begin{prop}[Complexity of ListSepConditions]\label{thmCpxAlg}
			ListSepConditions for a given graph $G $ has a time complexity of $O(n(n+m))$ polynomial delay where $n$ and $m$ are the number of variables and edges in $G$ respectively.
		\end{prop}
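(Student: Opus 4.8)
The plan is to analyze the recursion tree of ListSepConditions and argue two things separately: that the total work per node is polynomial, and that the delay between two consecutive outputs is bounded by a polynomial number of such nodes. First I would observe that each recursive call branches on a single variable $W$ drawn from $\tb{E} \setminus (\tb{I} \cup \tb{R})$, and in each branch either moves $W$ (together with $\tb{R}_W$ when $W \in \tb{V}_m$) into $\tb{I}$ or removes it from $\tb{E}$. Hence along any root-to-leaf path the size of the gap $|\tb{E} \setminus \tb{I}|$ strictly decreases, so the depth of the recursion tree is $O(n)$ where $n = |\tb{V} \cup \tb{R}|$; this bounds the length of any path in the tree.

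Next I would bound the cost of a single node. The expensive operations at each recursion are the two d-separation tests in the guard of Line 7 (conditions (c) and (d) of Def.~\ref{defMathcriterion}) and the single call to $FindSep$ (checking condition (b)). Each d-separation query $(\tb{A} \independent \tb{B} \mid \tb{C})_{G'}$ can be decided by a breadth-first search (e.g.\ the Bayes-ball or reachability algorithm) in time $O(n+m)$ where $m$ is the number of edges, and the graph modifications $G_{\overline{X}}$, $G_{\underline{X}}$ are computed once in $O(n+m)$. The routine $FindSep$ of \cite{vanderZander:2014:CSA:3020751.3020845} likewise runs in $O(n+m)$ time. Therefore the work done at any single node of the recursion tree is $O(n+m)$.

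The key step, and the one I expect to be the main obstacle, is establishing the \emph{polynomial delay} guarantee rather than merely bounding total running time: I must show that the algorithm does not expend super-polynomial effort exploring subtrees that ultimately yield no output. This is exactly what the guard in Line 7 secures. Because $FindSep$ returns $\bot$ precisely when no separator lies between $\tb{I}$ and $\tb{E}$, and because conditions (c) and (d) are monotone in a way compatible with the branching, a node is expanded only if its subtree contains at least one valid m-adjustment set. I would argue that the separation condition (b) checked by $FindSep$ has the property that if it can be satisfied within the bounds $\tb{I} \subseteq \tb{Z} \subseteq \tb{E}$ at a node, then it remains satisfiable in at least one of the two children, so every internal node lies on a path leading to an output. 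Consequently, between two consecutive outputs the algorithm traverses at most $O(n)$ nodes (one root-to-leaf descent), each costing $O(n+m)$, giving delay $O(n(n+m))$ as claimed.

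Finally I would assemble these pieces: the first output (or the determination of failure) is produced after a single descent of depth $O(n)$, and the inter-output delay is bounded the same way, so ListSepConditions is a polynomial-delay algorithm with delay $O(n(n+m))$. The subtle point that I would treat most carefully is verifying that the pruning test is both \emph{sound} (never discarding a subtree containing a valid set) and \emph{complete} enough (never descending into a barren subtree), since the entire delay bound rests on the absence of wasted exploration; I would lean on the corresponding ``no dead end'' property already established for $FindSep$ in \cite{vanderZander:2014:CSA:3020751.3020845} and extend it to accommodate the additional R-variable conditions (c) and (d).
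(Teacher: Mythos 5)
Your proof follows essentially the same route as the paper's: bound the per-node cost by the two d-separation tests plus the single call to FindSep, each $O(n+m)$; bound the recursion depth by $O(n)$ since each call either commits or discards one variable; and conclude a delay of $O(n(n+m))$. You are in fact more explicit than the paper about the crucial ``no barren subtree'' property that the Line-7 guard must guarantee for the delay bound to hold --- the paper merely asserts that moving between consecutive outputs costs at most the depth of the tree --- so your treatment of that step is, if anything, the more careful one.
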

		\begin{thm}[Complexity of ListMAdj]\label{thmListAdj}
			ListMAdj for a given graph $G$ returns all the m-adjustment sets with $O(n(n+m))$ polynomial delay where $n$ and $m$ are the number of variables and edges in $G$ respectively. 
		\end{thm}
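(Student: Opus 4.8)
The plan is to reduce the claim to the polynomial-delay bound already established for ListSepConditions (Proposition~\ref{thmCpxAlg}) and then to show that the one-time preprocessing performed by ListMAdj before it invokes ListSepConditions costs only $O(n+m)$, so it cannot affect the asymptotic delay. Since correctness of the enumerated family is handled separately in Theorem~\ref{thmCorAlgListAdj}, here I only need to account for running time.

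First I would bound the cost of Lines~2--3 of Algorithm~\ref{alg:1}. Computing the proper backdoor graph $G^{pbd}_{\tb{X},\tb{Y}}$ amounts to identifying, for each $X\in\tb{X}$, those out-edges $X\to W$ whose head $W$ lies on a proper causal path to $\tb{Y}$, and deleting them; using the characterization of such heads as $W\in (De(\tb{X})_{G_{\overline{X}}}\setminus\tb{X})\cap An(\tb{Y})_{G_{\overline{X}}}$, this reduces to two reachability computations (a forward traversal from $\tb{X}$ in $G_{\overline{X}}$ and a backward traversal from $\tb{Y}$), each performed by a single BFS/DFS in $O(n+m)$ time. The set $D_{pcp}(\tb{X},\tb{Y})$ is obtained from the same ancestor/descendant sets by one further descendant traversal, again $O(n+m)$. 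Finally, forming $\tb{E}=(\tb{V}_o\cup\tb{V}_m\cup\tb{R})\setminus(\tb{X}\cup\tb{Y}\cup D_{pcp}(\tb{X},\tb{Y}))$ and the initial set $\tb{I}$ are $O(n)$ set operations. Thus the preprocessing runs in $O(n+m)$ time.

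Next I would combine this with Proposition~\ref{thmCpxAlg}. Because all preprocessing completes once, strictly before the recursive enumeration begins, the time from the start of ListMAdj to its first output (or failure) is the preprocessing cost plus the time ListSepConditions takes to its first output, i.e. $O(n+m)+O(n(n+m))=O(n(n+m))$. Between any two consecutive outputs, ListMAdj performs no work beyond that inside ListSepConditions, so the inter-output delay equals that of ListSepConditions, namely $O(n(n+m))$, and the same bound governs the interval from the last output to termination. Hence every gap in the output stream is $O(n(n+m))$, establishing that ListMAdj has $O(n(n+m))$ polynomial delay.

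The main obstacle is not the arithmetic of combining the two bounds, which is routine, but the justification that the preprocessing is genuinely linear: one must argue that $G^{pbd}_{\tb{X},\tb{Y}}$ and $D_{pcp}(\tb{X},\tb{Y})$ can be built from only a constant number of linear-time ancestor/descendant computations in $G$ and its edge-subgraphs $G_{\overline{X}}$, $G_{\underline{X}}$, rather than by enumerating proper causal paths (of which there may be exponentially many, as the family in Fig.~\ref{Fig5} illustrates). I would discharge this point by appealing to the linear-time constructions of these objects due to Van der Zander, Liskiewicz, and Textor, after which the polynomial-delay conclusion follows immediately from Proposition~\ref{thmCpxAlg}.
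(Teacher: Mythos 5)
Your proposal is correct and follows essentially the same route as the paper: bound the one-time preprocessing in ListMAdj and then invoke Proposition~\ref{thmCpxAlg} for the delay of ListSepConditions. You are in fact slightly more careful than the paper's own proof, which only asserts that computing $D_{pcp}(\tb{X},\tb{Y})$ takes ``polynomial time''; your observation that the proper backdoor graph and $D_{pcp}$ are obtainable from a constant number of $O(n+m)$ reachability traversals is exactly what is needed to justify the stated $O(n(n+m))$ bound on the delay to the first output.
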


		\subsection{Finding Minimum M-Adjustment Set}
		
		The problem of finding a m-adjustment set with minimum number of variables is important from several aspects. This can reduce the computational time, while making the result more interpretive. The cost of collecting more variables might be another reason researchers prefer to find a minimum set. Next we present an algorithm that for a given graph $G$ with disjoint sets \tb{X} and \tb{Y} returns a m-adjustment set with the minimum number of variables.

		Function FindMinAdjSet takes a m-graph $G$ as input and returns a m-adjustment set with minimum number of variables. The function works by first removing all variables that violate Conditions (a), (c), and (d) in the m-adjustment criterion Def.~\ref{defMcriterion} in lines 2 to 5, and then calling an external function FinMinCostSep given in \cite{vanderZander:2014:CSA:3020751.3020845} which returns a minimum weight separator. FindMinAdjSet sets all the weights for each variable to be 1 to get a set with minimum size.   
		


		

		\begin{algorithm}
			\caption{Find minimum size m-adjustment set}\label{alg:2}
			\DontPrintSemicolon
			\SetAlgoNoLine
			\Fn{FindMinAdjSet($G,\tb{X},\tb{Y},\tb{V}_o, \tb{V}_m,\tb{R}$)}{ 
			     G\textprime $\gets$ compute proper 
			    back-door graph $G^{pbd}_{X,Y}$\;
				$\tb{E} \gets (\tb{V}_o \cup \tb{V}_m) \setminus \{ \tb{X} \cup \tb{Y} \cup D_{pcp}(\tb{X},\tb{Y})\}.$\; 
				$\tb{E}\textprime \gets \{E \in \tb{E} \mid E \in \tb{V}_o \mbox{ or } E \in \tb{V}_m \mbox{ and } ( \tb{R}_E \independent \tb{Y} \mid \tb{X})_{G\textprime_{\overline{X}}} \}$\;
				
				$\tb{E}\textprime\textprime \gets \{E \in \tb{E}\textprime \mid \tb{E} \in \tb{V}_o \mbox{ or } E \in \tb{V}_m \mbox{ and } ( \tb{X} \cap An(\tb{R}_{\tb{E}}) \independent \tb{Y})_{G\textprime_{\underline{X}}}\}$\;
				$\tb{W} \gets$ 1 for all variables \;
				$\tb{I} \gets$ empty set \;
				\tb{N} $\gets$ FindMinCostSep($G$\textprime, \tb{X}, \tb{Y},     \tb{I}, \tb{E}\textprime\textprime, \tb{W})\;
				return \tb{N} $\cup$ $\tb{R}_N$
			}
		\end{algorithm}
		
		\begin{thm}[Correctness of FindMinAdjSet]\label{thmCorAlgListMin}
			Given a m-graph $G$ and disjoint sets of variables \tb{X}, \tb{Y}, FindMinAdjSet returns a m-adjustment set relative to (\tb{X},\tb{Y}) with minimum number of variables.
		\end{thm}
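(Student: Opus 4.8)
The plan is to recast FindMinAdjSet as an exact minimum-cardinality search over the family of m-adjustment sets, working with the mathematical form of the criterion in Definition~\ref{defMathcriterion} (equivalent to Definition~\ref{defMcriterion} by Proposition~\ref{prop-eqv-Mcrtn-Math}). Concretely, I would prove the single characterization that the feasible region passed to FindMinCostSep---the sets $\tb{Z}$ with $\tb{Z}\subseteq\tb{E}''$ that $d$-separate $\tb{X}$ from $\tb{Y}$ in $G'=G^{pbd}_{\tb{X},\tb{Y}}$---coincides exactly with the collection of m-adjustment sets relative to $(\tb{X},\tb{Y})$. Granting this, correctness is immediate: FindMinCostSep of \cite{vanderZander:2014:CSA:3020751.3020845}, run with all weights equal to $1$, returns a separator of smallest cardinality inside $\tb{E}''$ (with $\tb{I}=\emptyset$), so the returned $\tb{N}$ is a smallest set meeting the criterion; any competing m-adjustment set is a feasible separator of the same instance and hence no smaller, and the appended $\tb{R}_N$ is determined by $\tb{N}$ and contributes no covariate to be counted.

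I would verify the characterization one condition of Definition~\ref{defMathcriterion} at a time. Condition (a) is enforced verbatim: deleting $\tb{X}\cup\tb{Y}\cup D_{pcp}(\tb{X},\tb{Y})$ when forming $\tb{E}$ guarantees $\tb{Z}\cap D_{pcp}(\tb{X},\tb{Y})=\emptyset$ for every $\tb{Z}\subseteq\tb{E}''\subseteq\tb{E}$, and conversely any set disjoint from $D_{pcp}(\tb{X},\tb{Y})$ survives the pruning. Conditions (c) and (d) are what the filters in Lines~4--5 enforce one covariate at a time, so the crux here is a \emph{decomposition lemma}: because the conditioning set is the same for every member of $\tb{R}_{\tb{W}}$, both conditions split over individual missing covariates. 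For (c) the conditioning set is $\tb{X}$, so $(\tb{Y}\independent\tb{R}_{\tb{W}}\mid\tb{X})$ holds iff $(\tb{Y}\independent R_V\mid\tb{X})$ for each $V\in\tb{W}$; for (d) the conditioning set is empty and $\tb{X}\cap An(\tb{R}_{\tb{W}})=\bigcup_{V\in\tb{W}}(\tb{X}\cap An(R_V))$, so $((\tb{X}\cap An(\tb{R}_{\tb{W}}))\independent\tb{Y})$ holds iff $((\tb{X}\cap An(R_V))\independent\tb{Y})$ for each $V\in\tb{W}$. Since $\tb{W}=\tb{V}_m\cap(\tb{X}\cup\tb{Y}\cup\tb{Z})$ splits into a query-fixed part $(\tb{X}\cup\tb{Y})\cap\tb{V}_m$ and the $\tb{Z}$-dependent part $\tb{Z}\cap\tb{V}_m$, the per-covariate tests building $\tb{E}'$ and $\tb{E}''$ retain exactly the missing covariates whose indicator passes the (c)- and (d)-checks. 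I would also record that these tests may be evaluated in $G'_{\overline{\tb{X}}}$ and $G'_{\underline{\tb{X}}}$ rather than $G_{\overline{\tb{X}}}$ and $G_{\underline{\tb{X}}}$: for (d) one has $(G^{pbd}_{\tb{X},\tb{Y}})_{\underline{\tb{X}}}=G_{\underline{\tb{X}}}$ because the removed proper-causal first edges are already outgoing edges of $\tb{X}$, and for (c) the deleted edges are irrelevant since conditioning on the now-parentless $\tb{X}$ already blocks every path through $\tb{X}$.

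The step I expect to be the main obstacle is matching condition (b) with what FindMinCostSep actually certifies. Condition (b) requires $(\tb{Y}\independent\tb{X}\mid\tb{Z},\tb{R}_{\tb{W}})_{G'}$, whereas FindMinCostSep only guarantees the plain separation $(\tb{Y}\independent\tb{X}\mid\tb{Z})_{G'}$ with the $\tb{R}$ variables omitted from both the separator and the conditioning set (they are re-attached only as $\tb{R}_N$ in the return). I therefore need the equivalence, for every $\tb{Z}\subseteq\tb{E}''$, of $(\tb{Y}\independent\tb{X}\mid\tb{Z})_{G'}$ and $(\tb{Y}\independent\tb{X}\mid\tb{Z}\cup\tb{R}_{\tb{W}})_{G'}$. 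The approach is a structural analysis of how an $\tb{X}$--$\tb{Y}$ path (whose endpoints lie in $\tb{V}$) can traverse the $\tb{R}$ region: since no $\tb{R}$ variable is a parent of any $\tb{V}$ variable, each edge joining a maximal $\tb{R}$-segment of the path to its neighbouring $\tb{V}$ nodes points \emph{into} the segment, forcing the two boundary $\tb{R}$-nodes to receive arrowheads from the $\tb{V}$ side and tightly constraining whether a conditioned $R_V$ can act as a newly opened collider. Combining this with the per-covariate conditions (c) and (d) secured through $\tb{E}''$, I would argue that adjoining $\tb{R}_{\tb{W}}$ to the conditioning set neither opens a previously blocked path nor, upon its removal, reopens one---the same structural machinery that underlies the correctness of the listing routine (Proposition~\ref{thmCorAlgListCond} and Theorem~\ref{thmCorAlgListAdj}). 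Finally I would flag the one genuinely query-dependent caveat: the query-fixed parts of (c) and (d), involving $\tb{R}_{(\tb{X}\cup\tb{Y})\cap\tb{V}_m}$, must hold for \emph{any} admissible set, and when they fail no m-adjustment set exists, so the theorem is to be read under their satisfaction. With the characterization established, the optimality of FindMinCostSep under unit weights closes the argument.
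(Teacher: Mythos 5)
Your plan follows the same route as the paper's own proof: prune the candidate pool of variables violating conditions (a), (c), (d), identify the feasible region handed to FindMinCostSep with the family of m-adjustment sets, and invoke the optimality of the minimum-cost separator under unit weights. You are in fact more careful than the paper on two points it dispatches in a single line---the per-covariate decomposition of conditions (c) and (d) that justifies the variable-by-variable filters building $\tb{E}''$, and the equivalence of separation in $G^{pbd}_{\tb{X},\tb{Y}}$ with and without $\tb{R}_{\tb{W}}$ in the conditioning set---so your proposal is sound and, if anything, supplies detail the published argument leaves implicit.
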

		
		\begin{thm}[Time Complexity of FindMinAdjSet] \label{propTimeComListMin}
			FindMinAdjSet has a time complexity of O($n^3$).
		\end{thm}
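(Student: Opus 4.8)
The plan is to bound the running time of each line of \texttt{FindMinAdjSet} (Algorithm~\ref{alg:2}) separately and then sum. Throughout I will write $n$ for the number of vertices of the $m$-graph $G$ and use the worst-case bound $m = O(n^2)$ on the number of edges, so that a single reachability pass, an ancestor/descendant computation, or a d-separation test each runs in $O(n+m)=O(n^2)$ time (the d-separation query being answered by the standard linear-time Bayes-ball traversal). Correctness is already established in Theorem~\ref{thmCorAlgListMin}, so here I only need to account for time.

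First I would dispatch the cheap preprocessing lines. Line~2 builds the proper backdoor graph $G' = G^{pbd}_{\tb{X},\tb{Y}}$; by Definition~\ref{defMathcriterion} this requires $D_{pcp}(\tb{X},\tb{Y}) = De((De(\tb{X})_{G_{\overline{X}}}\setminus \tb{X})\cap An(\tb{Y})_{G_{\overline{X}}})$, which is a constant number of ancestor/descendant computations, each $O(n+m)$, followed by deleting the first edge of each proper causal path; the total is $O(n+m)=O(n^2)$. Line~3 reuses $D_{pcp}$ and performs set differences in $O(n)$. Line~4 computes $\tb{E}'$ by testing, for the $\tb{V}_m$-variables, the Condition~(c) relation $(\tb{R}_E \independent \tb{Y}\mid \tb{X})_{G'_{\overline{X}}}$; since the conditioning set $\tb{X}$ is fixed, a single Bayes-ball pass from $\tb{Y}$ given $\tb{X}$ in $G'_{\overline{X}}$ marks every node d-connected to $\tb{Y}$, and each $\tb{R}_E$ passes iff it is unmarked, so this line costs only $O(n+m)=O(n^2)$.

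The two dominating lines are Line~5 and Line~8, each contributing the claimed $O(n^3)$. Line~5 computes $\tb{E}''$ by checking the Condition~(d) relation $(\tb{X}\cap An(\tb{R}_E) \independent \tb{Y})_{G'_{\underline{X}}}$; here the source set depends on $E$, so I would iterate over the $O(n)$ candidate variables and, for each, compute $An(\tb{R}_E)$ in $G'$ and run one marginal d-separation test, each $O(n+m)$, giving $O(n(n+m)) = O(n^3)$. Line~8 calls the external routine \texttt{FindMinCostSep} of \cite{vanderZander:2014:CSA:3020751.3020845}, which finds a minimum-weight d-separator of $\tb{X}$ and $\tb{Y}$ with $\tb{I}\subseteq \tb{Z}\subseteq \tb{E}''$ via a reduction to a vertex min-cut/max-flow computation; because Line~6 sets all weights to $1$, the cut value is bounded by $n$, so the augmenting-path flow runs in $O(n(n+m)) = O(n^3)$, matching the complexity cited for that routine. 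The final return of $\tb{N}\cup \tb{R}_N$ is $O(n)$. Summing, the total is dominated by the two $O(n^3)$ terms, which proves the bound.

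The main obstacle is not any single calculation but keeping the accounting tight in $n$ alone: I must invoke $m=O(n^2)$ to turn the natural $O(n(n+m))$ bounds of Lines~5 and~8 into $O(n^3)$, confirm that every d-separation query is genuinely answerable in $O(n+m)$ time, and verify that the unit-weight instance of \texttt{FindMinCostSep} indeed achieves $O(n^3)$. The subtlest point is Line~5, where the per-candidate source set $\tb{X}\cap An(\tb{R}_E)$ forbids the single-pass batching that makes Line~4 cheap; I would justify the per-variable loop directly (or, if a sharper constant were desired, precompute $De(X)$ once for each $X\in\tb{X}$ and read off $\tb{X}\cap An(\tb{R}_E)$ from these sets, still within $O(n(n+m))$).
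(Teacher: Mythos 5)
Your proposal is correct and follows essentially the same route as the paper's proof: a line-by-line accounting in which the dominant costs are the $O(n(n+m))$ loop of per-candidate d-separation tests (Lines 4--5) and the $O(n^3)$ call to \texttt{FindMinCostSep}, with $m=O(n^2)$ giving the stated bound. Your version is somewhat more careful than the paper's (it distinguishes the batchable Line~4 test from the per-candidate Line~5 test and justifies the $O(n^3)$ cost of the external min-cut routine rather than merely citing it), but the argument is the same.
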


\section{Adjustment from both Selection Bias and Missing Data}

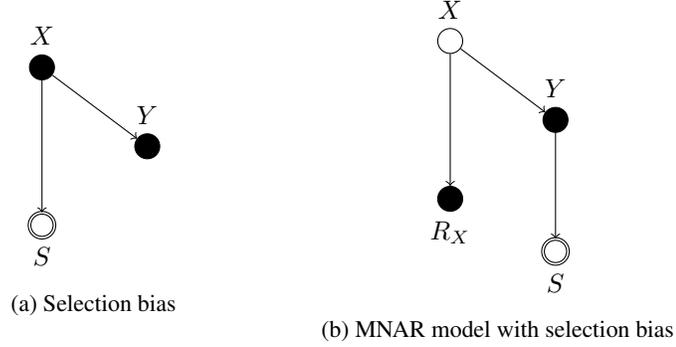
\begin{figure}
	\begin{center}
		\begin{tabular}{cc}
			\begin{subfigure}[normal]{0.3\linewidth}
			\centering
			\begin{tikzpicture}[scale = 0.7]
			\node[circle,draw,fill = black,label=above:{$X$}] (X) at (12,5) {};
			\node[circle,draw,,fill = black, ,label=above:{$Y$}] (Y) at (14,3.5) {};
			\node[circle,draw, ,double ,label=below:{$S$}] (S) at (12,2) {};
			
			\draw[->] (X) -- (Y);
			\draw[->] (X) -- (S);
			\end{tikzpicture}
			\caption{Selection bias} 
		\end{subfigure}
			&
			\begin{subfigure}[normal]{0.3\linewidth}
			\centering
				\begin{tikzpicture}[scale = 0.7]
				\node[circle,draw,label=above:{$X$}] (X) at (12,5) {};
				\node[circle,draw,,fill = black, ,label=above:{$Y$}] (Y) at (14,3.5) {};
				\node[circle,draw, ,fill = black, ,label=below:{$R_X$}] (RX) at (12,2) {};
				\node[circle,draw,double ,label=below:{$S$}] (S) at (14,1) {};
				
				\draw[->] (X) -- (Y);
				\draw[->] (X) -- (RX);
				\draw[->] (Y) -- (S);
				\end{tikzpicture}
				\caption{MNAR model with selection bias} 
			\end{subfigure}
		\end{tabular}
		\caption{Examples of selection bias and MNAR}
		\label{fig-6}
	\end{center}
\end{figure}

In Sections~\ref{sec-missing} and~\ref{sec-list} we have addressed the task of recovering causal effects by adjustment from missing data. In practice another common issue that data scientists face in estimating causal effects is selection bias. Selection bias can be modeled by introducing a binary indicator variable $S$ such that $S=1$ if a unit is included in the sample, and
$S=0$ otherwise \cite{bareinboim2014recovering}. Graphically selection bias is modeled by a special hollow node $S$ (drawn round with double border) that is pointed to by every variable in $\tb{V}$ that affects the process by which an unit is included in the data. In Fig.~\ref{fig-6}(a), for example, selection is affected by the treatment variable.

In the context of selection bias, the observed distribution is $P(\tb{V}\mid S=1)$, collected under seletion bias, instead of $P(\tb{V})$. The goal of inference is to recover the causal effect $P(\tb{y} \mid do(\tb{x}))$ from $P(\tb{V}\mid S=1)$. The use of adjustment for recovering causal effects in this setting has been studied and complete adjustment conditions have been developed in  \cite{correa2017causal,correa2018generalized}. 
\begin{table}
\centering
\caption{ An example of data compatible with Fig.\ref{fig-6}}
\scalebox{1}{
\begin{tabular}{|c @{\hskip 0.3in} c @{\hskip 0.3in}c@{\hskip 0.3in} c@{\hskip 0.3in} c@{\hskip 0.2in} |} 
 \hline
 ID & X & $R_X$ & Y & S\\   
 \hline\hline 
 1 &1 & 1 & 0 & 1 \\ 
 \hline
 2 &0 & 1 & 1 & 1 \\
 \hline
 3 &NA & NA & NA & 0\\
 \hline
 4 &NA & 0 & 1 & 1 \\
 \hline
 5 &NA & NA & NA & 0 \\ 
 \hline
\end{tabular}}

\label{tab-1}
\end{table}

What if the observed data suffer from both selection bias and missing values? In the model in Fig.~\ref{fig-6}(b), for example, whether a unit is included in the sample depends on the value of the outcome. If a unit is included in the sample, the values of treatment $X$ could be missing depending on the actual $X$ values. Table.~\ref{tab-1} indicate a compatible examples with the Fig.~\ref{fig-6}(b) declaring the difference of missing and selection mechanism. To the best of our knowledge, causal inference under this setting has not been formally studied.

In this section, we will characterize the use of adjustment for causal effect identification when the observed data suffer from both selection bias and missing values. First we introduce an adjustment formula called \emph{MS-adjustment} for recovering causal effect under both missing data and selection bias. Then we provide a complete condition under which a set \tb{Z} is valid as MS-adjustment set. We then provide an example to demonstrate its application.

\begin{defn}[MS-Adjustment Formula] Given a m-graph $G$ over observed variables $\tb{V}=\tb{V}_o\cup \tb{V}_m$ and missingness indicators $\tb{R}$ augmented with a selection bias indicator $S$, a set $\tb{Z}\subseteq \tb{V}$ is called a ms-adjustment (adjustment under missing data and selection bias) set for estimating the causal effect of \tb{X} on \tb{Y}, if for every model compatible with $G$ it holds that
	\begin{align}\label{eq-ms-adj}
	P(\tb{y} \mid do(\tb{x})) = \sum_{\tb{z}} P(\tb{y} \mid \tb{x}, \tb{z} ,\tb{R}_{\tb{W}} = 1, S = 1)P(\tb{z} \mid \tb{R}_{\tb{W}} = 1, S = 1),
	\end{align}
	where $\tb{W} = \tb{V}_m \cap (\tb{X}\cup\tb{Y}\cup\tb{Z})$.
\end{defn}
Both terms on the right-hand-side of Eq.~(\ref{eq-ms-adj}) are recoverable from selection biased data in which all variables in $\tb{X}\cup \tb{Y}\cup \tb{Z}$ are observed. Therefore the causal effect 	$P(\tb{y} \mid do(\tb{x}))$ is recoverable if it can be expressed in the form of ms-adjustment.

Next we provide a complete criterion to determine whether a set \tb{Z} is an admissible ms-adjustment. 
	\begin{defn}[MS-Adjustment Criterion]\label{defMScriterion} Given a m-graph $G$ over observed variables $\tb{V}=\tb{V}_o\cup \tb{V}_m$ and missingness indicators $\tb{R}$ augmented with a selection bias indicator $S$, and disjoint sets of variables $\tb{X, Y, Z}$, letting $\tb{W} = \tb{V}_m \cap (\tb{X}\cup\tb{Y}\cup\tb{Z})$, $\tb{Z}$ satisfies the ms-adjustment criterion relative to the pair ($\tb{X}, \tb{Y}$) if  
	\end{defn}

\begin{enumerate}[a)]
		\item 
		No element of \tb{Z} is a descendant in $G_{\overline{X}}$ of any $W \notin \tb{X} $ which lies on a proper causal path from \tb{X} to \tb{Y}. 
		\item
		All non-causal paths between \tb{X} and \tb{Y} in $G$ are blocked by \tb{Z}, $\tb{R}_{\tb{W}}$, and S. 	
		\item 
		$\tb{R}_{\tb{W}} \mbox{ and } S$ are d-separated from \tb{Y} given \tb{X} under the intervention of do(\tb{x}). i.e., $(\tb{Y} \independent (\tb{R}_{\tb{W}} \cup S) \mid \tb{X})_{G_{\overline{X}}}$	
		\item 
		Every $X \in \tb{X} $ is either a non-ancestor of $\{ \tb{R}_{\tb{W}}, S \}$ or it is d-separated from \tb{Y} in $G_{\underline{X}}$. i.e., $\forall X \in \tb{X} \cap An(\tb{R}_{\tb{W}} \cup S), ( X \independent \tb{Y})_{G_{\underline{X}}}$  
	\end{enumerate}

	\begin{thm}[MS-Adjustment] \label{thm-MSadj-com} A set \tb{Z} is a ms-adjustment set for recovering causal effect of \tb{X} on \tb{Y} by the ms-adjustment formula in Definition~\ref{eq-ms-adj} if and only if it satisfies the ms-adjustment criterion in Definition~\ref{defMScriterion}.  
	\end{thm}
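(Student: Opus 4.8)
The plan is to prove Theorem~\ref{thm-MSadj-com} by reducing the combined missing-data-and-selection-bias problem to the pure missing-data problem already solved by Theorem~\ref{thm-Madj-com}. The key observation is that selection bias is structurally analogous to a missingness indicator: both are special nodes that cannot be parents of any $\tb{V}$ variable, and in both cases we condition on the event that the node equals $1$. Concretely, I would treat $S$ as if it were an additional missingness indicator $\tb{R}$ variable whose ``associated'' variable is always in the relevant set, so that the combined conditioning set becomes $\tb{R}_{\tb{W}}\cup\{S\}$ everywhere $\tb{R}_{\tb{W}}$ appeared. Under this correspondence, the four conditions of the ms-adjustment criterion (Def.~\ref{defMScriterion}) are exactly the four conditions of the m-adjustment criterion (Def.~\ref{defMcriterion}) with $\tb{R}_{\tb{W}}$ uniformly replaced by $\tb{R}_{\tb{W}}\cup\{S\}$.

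The forward direction (criterion $\Rightarrow$ valid adjustment) is the constructive part. I would mimic the do-calculus derivation illustrated for Fig.~\ref{Fig1} and Fig.~\ref{Fig3}. Starting from $P(\tb{y}\mid do(\tb{x}))$, Condition (c), which states $(\tb{Y}\independent(\tb{R}_{\tb{W}}\cup S)\mid\tb{X})_{G_{\overline{X}}}$, lets me insert the conditioning event $\tb{R}_{\tb{W}}=1, S=1$ into the interventional distribution. Conditions (a) and (b), which echo the adjustment criterion exactly as in the pure missing-data case, let me write the causal effect as an adjustment sum over $\tb{z}$ with the $do(\tb{x})$ replaced by ordinary conditioning on $\tb{x}$. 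Condition (d) handles the subtlety that when some $X\in\tb{X}$ is an ancestor of $\tb{R}_{\tb{W}}\cup S$, we cannot freely drop $do(\tb{x})$ from the term $P(\tb{z}\mid\tb{R}_{\tb{W}}=1,S=1,do(\tb{x}))$; the d-separation $(X\independent\tb{Y})_{G_{\underline{X}}}$ guarantees that the residual dependence does not affect the outcome term, allowing the $do$ to be removed. Assembling these four steps yields Eq.~(\ref{eq-ms-adj}).

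The reverse direction (valid adjustment $\Rightarrow$ criterion) is where I expect the main obstacle, and I would prove it by contraposition: if any of Conditions (a)--(d) fails, I construct a model compatible with $G$ in which Eq.~(\ref{eq-ms-adj}) is violated. This is the standard completeness argument for adjustment criteria, but the presence of both $\tb{R}_{\tb{W}}$ and $S$ complicates the counterexample construction, because a parameterization violating one condition must still be consistent with the structural constraint that neither $\tb{R}$ nor $S$ variables are parents of $\tb{V}$ variables. For each failing condition I would exhibit an open path (a non-causal path left unblocked for (b), an active path to $\tb{R}_{\tb{W}}\cup S$ for (c), or a descendant-of-proper-causal-path violation for (a)) and then choose the structural functions so that the probability mass flowing along that path makes the right-hand side of Eq.~(\ref{eq-ms-adj}) differ from $P(\tb{y}\mid do(\tb{x}))$.

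Since the ms-adjustment criterion is literally the m-adjustment criterion with $\tb{R}_{\tb{W}}$ replaced by $\tb{R}_{\tb{W}}\cup\{S\}$, and the ms-adjustment formula is the m-adjustment formula with the extra conditioning event $S=1$, the cleanest route is to argue that the entire proof of Theorem~\ref{thm-Madj-com} (given in Appendix~B) goes through verbatim under this substitution. The only facts about $\tb{R}_{\tb{W}}$ used in that proof are (i) that its variables are never parents of $\tb{V}$ variables and (ii) that the target conditioning event fixes them to $1$; both facts hold equally for $S$. Thus I would state the theorem's proof as: \emph{observe that $S$ satisfies the same structural role as an $\tb{R}$ variable, apply the argument of Theorem~\ref{thm-Madj-com} with $\tb{R}_{\tb{W}}$ replaced by $\tb{R}_{\tb{W}}\cup\{S\}$, and note that Lemma~\ref{lm-rec}'s recoverability guarantee extends to conditioning additionally on $S=1$ because selection-biased data provides exactly $P(\cdot\mid S=1)$.} The hard part is verifying rigorously that no step of the original proof exploited a property of $\tb{R}$ that $S$ lacks; I would isolate and check each such step explicitly rather than waving at ``analogy.''
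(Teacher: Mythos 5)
The central move of your proposal---reduce the MS-adjustment theorem to Theorem~\ref{thm-Madj-com} by treating $S$ as one more missingness indicator---is circular relative to this paper's logical structure: the paper proves Theorem~\ref{thm-MSadj-com} \emph{first}, in full detail, and then obtains Theorem~\ref{thm-Madj-com} as the special case ``same proof with $\tb{R}_{\tb{W}}$ in place of $\tb{R}_{\tb{W}}\cup S$.'' There is no independent proof of Theorem~\ref{thm-Madj-com} to reduce to, so your ``cleanest route'' defers to an argument that does not exist except as the very theorem you are trying to prove. Even setting that aside, the embedding of $S$ into the $\tb{R}$ formalism is not immediate: $\tb{R}_{\tb{W}}$ is indexed by $\tb{V}_m\cap(\tb{X}\cup\tb{Y}\cup\tb{Z})$, so forcing $S$ to appear in the conditioning set regardless of $\tb{Z}$ requires a dummy partially-observed variable whose indicator is $S$ and which must be shown not to perturb conditions (a)--(d) or the adjustment sum; you gesture at this but do not construct it.

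The deeper gap is that your sketch of the forward direction omits the technical core that makes it work. One cannot simply ``insert the conditioning event'' and ``write the adjustment sum'': $\tb{Z}$ may contain descendants of $\tb{X}$ and variables entangled with $\tb{Y}$ in ways that block a one-shot application of the do-calculus. The paper's proof rests on a dedicated lemma partitioning $\tb{Z}$ into six subsets ($\tb{Z}^{Y,1}_{nd},\tb{Z}^{X,1}_{nd},\tb{Z}^{Y}_{d},\tb{Z}^{X}_{d},\tb{Z}^{Y,2}_{nd},\tb{Z}^{X,2}_{nd}$), each introduced into the adjustment expression in a specific order, with rule~3 of the do-calculus justified piece by piece via independencies such as $(\tb{Z}^{X}_{d}\independent\tb{X}\mid\tb{Z}^{Y}_{d},\tb{Z}^{Y,1}_{nd},\tb{Z}^{X,1}_{nd},\tb{R}_{\tb{W}},S)$ in suitably mutilated graphs; conditions (c) and (d) enter through the proof of these independencies, not as a single ``remove the do'' step. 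Likewise, for necessity your plan to ``choose structural functions so the formula fails'' is the right strategy but is precisely where the work lies: the paper constructs pairs of models $M_1,M_2$ agreeing on $P(\tb{v}\mid\tb{R}^{\tb{v}}=1,S=1)$ but disagreeing on $P(\tb{y}\mid do(\tb{x}))$, with explicit parameterizations (e.g.\ $\epsilon_i=(1/5)^{k_i}$) and an extensive case analysis over the ways a path can violate conditions (b), (c), or (d) when multiple $\tb{R}$ variables and $S$ interact as colliders. None of that content is supplied or replaced by your reduction, so as it stands the proposal is an outline of the right shape rather than a proof.
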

The proof of Theorem~\ref{thm-MSadj-com} is presented in the Appendix B.

	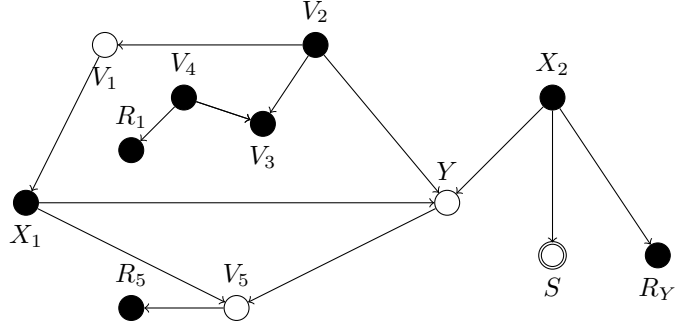
\begin{figure}[H]
		\centering
		\begin{tikzpicture}[scale = 0.7]

		\node[circle,draw,,fill = black,label=below:{$X_1$}] (X1) at (1,4) {};
		\node[circle,draw,label=below:{$V_1$}] (V1) at (2.5,7) {};

		\node[circle,draw,fill = black,label=above:{$V_2$}] (V2) at (6.5,7) {};
		\node[circle,draw,fill = black,label=below:{$V_3$}] (V3) at  (5.5,5.5) {};
		\node[circle,draw,fill = black,label=above:{$V_4$}] (V4) at  (4,6)   {};
		\node[circle,draw,fill = black,label=above:{$R_1$}] (R1) at  (3,5)  {};
		\node[circle,draw,label=above:{$V_5$}] (V5) at  (5,2) {};
		\node[circle,draw,fill = black,label=above:{$R_5$}] (R5) at  (3,2) {};

		\node[circle,draw,label=above:{$Y$}] (Y) at  (9,4)  {};
		\node[circle,draw,fill = black,label=above:{$X_2$}] (X2) at  (11,6) {};
		\node[circle,draw,double,label=below:{$S$}] (S) at  (11,3) {};
		\node[circle,draw,fill = black,label=below:{$R_Y$}] (RY) at  (13,3) {};

		\draw[->] (V1)-- (X1);
		\draw[->] (V2)-- (V1);
		\draw[->] (V2)-- (V3);

		\draw[->] (V4)-- (V3);
		\draw[->] (V4)-- (R1);
		\draw[->] (V2)-- (Y);
		\draw[->] (X1)-- (Y);
		\draw[->] (X2)-- (Y);
		\draw[->] (X2)-- (S);
		\draw[->] (X2)-- (RY);
		\draw[->] (V4)-- (V3);
		\draw[->] (X1)-- (V5);
		\draw[->] (Y)-- (V5);
		\draw[->] (V5)-- (R5);
		
		\end{tikzpicture}
		\centering
		\caption[loftitle]{An example for recovering causal effect under both selection bias and MNAR data}  
		\label{Fig6}              
	\end{figure}

    To demonstrate the application of Theorem~\ref{thm-MSadj-com}, consider the causal graph in Fig.~\ref{Fig6} where $V_1$ ,$V_5$, $Y$ may have missing values and the selection $S$ depends on the values of $X_2$. To recover the causal effect of $\{X_1, X_2\}$ on variable $Y$, $V_1$ satisfies the ms-adjustment criterion. To confirm we derive using do-calculus as follows: 
	\begin{align}
	&P(y \mid do(x_1, x_2)) \nonumber
	\\&= P(y \mid do(x_1, x_2), S = 1, R_y = 1 , R_1 =1) 
	\\&= \sum_{V_{1}}P(y \mid do(x_1, x_2),V_1,S = 1, R_y = 1 , R_1 =1)P( V_1 \mid do(x_1, x_2), S = 1, R_y = 1 , R_1 =1 )
	\\&= \sum_{V_1}P(y \mid do(x_1, x_2), V_1 ,S = 1, R_y = 1 , R_1 =1)P(V_1 \mid S = 1, R_y = 1 , R_1 =1)
	\\& = \sum_{V_1}P(y \mid x_1, x_2 ,V_1, S = 1, R_y = 1 , R_1 =1)P(V_1 \mid S = 1, R_y = 1 , R_1 =1)
	\end{align}

   We note that the two algorithms given in Section \ref{sec-list}, for listing all m-adjustment sets and finding a minimum size m-adjustment set, can be extended to list all ms-adjustment sets and find a minimum ms-adjustment set with minor modifications.

		\section{Conclusion}
		In this paper we introduce a m-adjustment formula for recovering causal effect in the presence of MNAR data and provide a necessary and sufficient graphical condition - m-adjustment criterion for when a set of covariates are valid m-adjustment.  
		We introduce a ms-adjustment formulation for causal effects identification in the presence of both selection bias and MNAR data and provide a necessary and sufficient graphical condition - ms-adjustment criterion for when a set of covariates are valid ms-adjustment. 
		We develop an algorithm that lists all valid m-adjustment or ms-adjustment sets in polynomial delay time, and an algorithm that finds a valid m-adjustment or ms-adjustment set containing the minimum number of variables. The algorithms are useful for data scientists to select adjustment sets with desired properties (e.g. low measurement cost).  
 Adjustment is the most used tool for estimating causal effect in the data sciences. The results in this paper should help to alleviate the problem of missing data and selection bias in a broad range of data-intensive applications. 		

\section*{Acknowledgements}

This research was partially supported by NSF grant IIS-1704352 and ONR grant N000141712140.

%

\bibliographystyle{splncs04}
\bibliography{References}
\newpage

\section*{Appendix A: Proofs in Section 4}
\tb{Proposition} \tb{\ref{prop-eqv-Mcrtn-Math}}. Definition \ref{defMathcriterion} and Definition \ref{defMcriterion} are equivalent. \\
\noindent\tb{Proof}: Condition (c) and (d) in both definitions are the same. Condition (a) in Def.~\ref{defMathcriterion} indicates that \tb{Z} cannot be in Dpcp(\tb{X},\tb{Y}). i.e., \tb{Z} may not be descendant of any variables lies in proper causal path from \tb{X} to \tb{Y}. This is as same as condition (a) in Def.~\ref{defMcriterion}. In order to prove Def.~\ref{defMathcriterion} $\rightarrow$ Def.~\ref{defMcriterion}, it is left to show Def.~\ref{defMathcriterion} leads to condition (b) in Def.\ref{defMcriterion}. By contradiction, assume there is a open non-causal path from a $X\textprime \in \tb{X}$ to $Y\textprime \in \tb{Y}$. Condition (b) in Def.~\ref{defMathcriterion} requires all non-causal proper back-door paths to be blocked. Therefore, This open non-causal path $p$ does not belongs to proper back-door graph. The path $p$ has edges coming out of \tb{X} and belongs to a proper path $q$. Without lose of generality, assume $X\textprime \in \tb{X}$ is the first and only variable in \tb{X} that lies in the path $p$, otherwise, consider part of the path $p$ with only $X\textprime$ at the beginning of it. Let $W$ be the variable on the other side of this edge, and $Y\textprime \in \tb{Y}$ be the last variable in path $p$ and $Y''$ be the last one in $q$. Path $p$ cannot be a direct path from $X\textprime$ to $Y\textprime$ since it is a non-causal path. Therefore, $p$ should have colliders belong to $\tb{Z} \cup \tb{R}_{\tb{W}}$. These colliders cannot belong to \tb{Z} due to condition (a) in Def.~\ref{defMathcriterion}. Consequently, they should belong to $\tb{R}_{\tb{W}}$ which violates condition (c). Therefore, our assumption about the existence of the path $p$ is not true. For the other direction, the closeness of all non-causal paths by $\tb{Z} \cup \tb{R}_{\tb{W}}$ leads to closeness of non-causal proper back-door paths. To prove this theorem, we used some achievements in \cite{correa2018generalized}. \\
\\\tb{Proposition} \tb{\ref{thmCorAlgListCond}} \tb{(Correctness of ListSepCondition)}.
	Given a m-graph $G$ 
		and sets of disjoint variables \tb{X}, \tb{\tb{Y}}, and \tb{E} and $\tb{I} \subseteq \tb{E}$,  ListSepConditions lists all \tb{Z} variables such that: \\
		
		$\tb{Z} \in \{\tb{Z} \mid (\tb{X} \independent \tb{Y} \mid \tb{Z},\tb{R}_{\tb{Z}},\tb{R}_{\tb{X} \cap \tb{V}_{m}},\tb{R}_{\tb{Y} \cap \tb{V}_{m}})_{G_{\tb{X}, \tb{Y}}^{pbd}} \mbox{ }\& \mbox{ }(\tb{Y} \independent \tb{R}_{\tb{Z}} \mid \tb{X})_{G_{\overline{X}}}\mbox{ } \& \mbox{ }( (\tb{X} \cap An(\tb{R}_{\tb{Z}})) \independent \tb{Y})_{G_{\underline{X}}}\& \mbox{ }\tb{I} \subseteq \tb{Z} \subseteq \tb{E} \}$
        
        Where $\tb{R}_{\tb{Z}}$ is a shorthand for $\tb{R}_{\tb{Z}\cap \tb{V}_m}$. 
\\\tb{Proof}: 
The proof for this theorem includes two parts. In the first part, we prove the algorithm returns sound results, and in the second part we prove the algorithm returns all the correct results. \\
\\ \tb{Part 1}: Line 8 is where the algorithm returns the output. To get to line 8, the conditions in line 7 need to be satisfied. The conditions of $(\tb{Y} \independent \tb{R}_{\tb{Z}}  \mid \tb{X})_{G_{\overline{X}}}$ and $((\tb{X} \cap An(\tb{R}_{\tb{Z}})) \independent \tb{Y})_{G_{\underline{X}}}$ are exactly checked in line 7. We explain how  the algorithm makes sure the condition $(\tb{X} \independent \tb{Y} \mid \tb{Z} , \tb{R}_{\tb{Z}}, \tb{R}_{\tb{X} \cap \tb{V}_{m}},\tb{R}_{\tb{Y} \cap \tb{V}_{m}})_{G_{\tb{X}, \tb{Y}}^{pbd}}$ holds. Function FindSep examines if a candidate set is a valid separator for the sets \tb{X} and \tb{Y} in the graph $G$. Note that in our case, we are giving proper back-door graph as an input to this function. Therefore, all paths from \tb{X} to \tb{Y} are non-causal paths, and a set is a separator relative to the graph $G$ and sets \tb{X} and \tb{Y}, if and only if it closes all non-casual paths. If the set closes all non-causal path, the FindSep function returns true. Therefore, all outputs satisfy the three  m-adjustment criterion (b,c,d). \\
\\\tb{Part 2}: We prove the algorithm returns all sets satisfying m-adjustment criterion (b,c,d). The algorithm examines all subsets of \tb{E} as a candidate sets by checking the two potential sets including and excluding $W \in \tb{E}$ in the sets with a backtracking. After selecting $W$, the algorithm evaluates type of $W$ to see whether it belongs to $\tb{V}_m$ or $\tb{V}_o$. If $W \in \tb{V}_o$, the algorithm goes to the two next recursions of having $W$ in the set and not having it. If $W \in \tb{V}_m$, it ensures to include $R_W$ or not include it along variable $W$. Therefore, we evaluate all subsets of $\tb{E}$. It is only necessary for the algorithm to ensure not abort any recursion that is creating a valid m-adjustment sets. The only part of the algorithm that is responsible for aborting the recursion is line 7. ListSepCondition starts with a small set in each recursion path and in each run adds a variable to the set \tb{I}, if any of  independencies $(\tb{Y} \independent \tb{R}_{\tb{I}} \mid \tb{X})_{G_{\overline{X}}}\mbox{ and} \mbox{ }( (\tb{X} \cap An(\tb{R}_{\tb{I}})) \independent \tb{Y})_{G_{\underline{X}}}$ in line 7 do not hold at any step of recursion, it means by adding more variables to \tb{I} the dependency status wont change. Also, if FindSep cannot find an adjustment set for a given \tb{I} and \tb{E}, then there is not any set having \tb{I} as a subset of it that is adjustable. If a m-adjustment set blocks all non-causal paths, which means being separator, the FindSep should not return null for it. Therefore, the algorithm returns all sets \tb{Z} satisfying the conditions and is correct. \\
\\ \tb{Theorem }\tb{\ref{thmCorAlgListAdj}}\tb{(Correctness of ListMAdj)}. Given a m-graph $G$ and sets of disjoint variables \tb{X}, \tb{Y}, ListMAdj returns all the sets that satisfy the m-adjustment criterion relative to (\tb{X},\tb{Y}).
\\\tb{Proof}: ListMAdj function in the first line excludes all variables violating condition (a) in m-adjustment criterion in Def.~\ref{defMcriterion} and then calls ListSepCondition. Based on the theorem \ref{thmCorAlgListCond}, It is proved that ListSepCondition returns all candidates for m-adjustment sets satisfying m-adjustment criterion (b,c,d) in Def.~\ref{defMcriterion}. Therefore, the function returns all sets that satisfying m-adjustment criterion. \\
\\ \tb{Proposition} \tb{\ref{thmCpxAlg}} \tb{(Complexity of ListSepConditions)}.
ListSepConditions for a given graph $G$ has a time complexity of $O(n(n+m))$ polynomial delay where $n$ and $m$ are the number of variables and edges in $G$ respectively.\\
\tb{Proof:} To show the algorithm has a polynomial delay time complexity, we first demonstrate that it has an exponential time complexity and then we show it returns the first output as well as any two consecutive output in polynomial delay time. This algorithm examines all subset of variables in \tb{V} as candidate m-adjustment sets. The number of subsets is exponential to the size of \tb{V}. Therefore, algorithm has an exponential time complexity. Consider the recursion tree of ListSepConditions function.  For each node in this tree, the function checks the two independencies mentioned in line 7 and then calls FindSep function. If all conditions in line 7 satisfy, the algorithm goes to the next node in recursion. Checking the first two conditions requires $O(n+m)$ and FindSep has a time complexity of $O(n + m)$. Therefore the time needed for examining each node is $O(3(n+m)) = O((n+m))$. In order to print an output the recursion needs to reach to the leaf of the tree. Recursion at each step removes a variable from the potential variables that are in the set. The depth of the tree is equal to size of $n = \mid\tb{V}\mid$. Therefore, the time needed to reach to the end of the recursion and return the output is $O(n(n+m))$. For generating the next output, the algorithm needs to goes back from a leaf node to the next leaf node. In the worst case, consider all branched were aborted due to rejection of any of conditions in line 7. In this case, the algorithm needs to check all the nodes from the end of the tree to top of it. This is equal to depth of the tree. A tree can have a depth with a length of at most $n$ nodes. Therefore, generating the next output takes at most $O(n(n+m))$ time. \\
\\\tb{Theorem} \tb{\ref{thmListAdj}} \tb{(Complexity of ListMAdj)}.
	ListMAdj for a given graph $G$ returns all the m-adjustment sets with $O(n(n+m))$ polynomial delay where $n$ and $m$ are the number of variables and edges in $G$ respectively.\\     
\tb{Proof}: This function in the first part computes $D_{pcp}(\tb{X},\tb{Y})$. This can be done in polynomial time. Later, the function calls ListSepCondition function which has exponential time complexity with $O(n(n+m))$ polynomial delay. Therefore, the time complexity of entire algorithm is $O(n(n+m))$ polynomial delay. \\  
\\\tb{Theorem} \tb{ \ref{thmCorAlgListMin}} \tb{(Correctness of FindMinAdjSet)}.
Given a m-graph $G$ and disjoint sets of variables \tb{X}, \tb{Y}, FindMinAdjSet returns a m-adjustment set relative to (\tb{X},\tb{Y}) with minimum number of variables.\\
\tb{Proof}:
To prove FindMinAdjSet works properly, we prove this algorithm returns a valid m-adjustment and this m-adjustment has a minimum size. 
The algorithm excludes all variables that violate condition (a,c,d) in m-adjustment criterion in Def.~\ref{defMcriterion}. This function then find a minimum set \tb{D} in a proper back-door graph $G^{pbd}_{X,Y}$ by using a FindMinCostSep. Since a separator in a $G^{pbd}_{X,Y}$, blocks all non-causal path, the returned set will satisfies the m-adjustment condition (b). Note that it might be thought that adding $\tb{R}_{\tb{D}}$ variables to \tb{D} will open a blocked path. However, no $\tb{R}_{\tb{D}}$ lies on causal and non-causal path to \tb{Y} because of independecy condition between \tb{Y} and $\tb{R}_{\tb{D}}$. Therefore, this situation does not happen. Now we prove FindMinAdjSet returns the minimum size m-adjustment. It is proved that FinMinCostSep \cite{vanderZander:2014:CSA:3020751.3020845} returns a minimum separator in a graph $G$. Finding the minimum weight m-adjustment set in m-graph and causal graph is similar. The only difference between them is that in m-graph we have \tb{R} variables. We explained that no $\tb{R}_{\tb{D}}$ lies  on the path to \tb{Y}. Therefore, a set with \tb{R} variables as m-adjustment set cannot have the minimum size and it won't be returned by FindMinCostSep function.\\
\\\tb{Theorem} \tb{ \ref{propTimeComListMin}}
\tb{(Time Complexity FindMinAjdSet)}.
FindMinAdjSet has a time complexity of O($n^3$).\\
\tb{Proof}:
Consider a given graph $G=(\tb{V},\tb{E})$ with $|V|=n$ and $|E|= m$. Generating proper back-door graph from $G$ can be done in $O(m+n)$. The time complexity of computing $D_{pcp}$ is $O(n(n+m))$ also. Therefore, Line 3 has a complexity of $O(n(n+m))$. Testing the d-separation can be done in $O(n+m)$ and since we are checking d-separation for all nodes the line 4 and 5 has $On(n+m)$ complexity. The time complexity of FindMinCostSep is $O(n^3)$. Therefore, the time complexity of FindMinAjdSet is $O(n^3)$.

\section*{Appendix B: Proofs for Theorems 1 and 6}
\setlength\parindent{0pt}
 In the following section we used some of the proofs in \cite{correa2017causal}.

\begin{lem}
 Let \tb{X}, \tb{Y}, \tb{Z} be three disjoint sets of variables in an m-graph $G$ augmented with selection bias. If a set \tb{Z} satisfies the conditions in Def.~\ref{defMScriterion} for a given set of treatment and outcome \{\tb{X}, \tb{Y}\}, \tb{Z} can be partitioned into the sets bellow:
\end{lem}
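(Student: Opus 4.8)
The plan is to produce the partition explicitly from the ancestral structure of the intervened graph and then verify that each block inherits the separation and ancestry properties we need from the four conditions of the MS-adjustment criterion (Def.~\ref{defMScriterion}). Concretely, I would split $\tb{Z}$ according to whether a variable is causally upstream of the ``targets'' $\tb{Y}\cup\tb{R}_{\tb{W}}\cup\{S\}$ in the mutilated graph $G_{\overline{X}}$. Writing $\tb{Z}_1=\tb{Z}\cap An(\tb{Y}\cup\tb{R}_{\tb{W}}\cup\{S\})_{G_{\overline{X}}}$ and $\tb{Z}_2=\tb{Z}\setminus\tb{Z}_1$ gives the coarse decomposition; if the statement calls for a finer partition, I would further separate $\tb{Z}_1$ by whether a variable blocks a non-causal (back-door) path versus serving only to render $\tb{R}_{\tb{W}}$ and $S$ conditionally independent of $\tb{Y}$. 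Checking that this is genuinely a partition (disjoint and exhaustive) is immediate from the set-difference definitions.

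First I would establish the defining property of the ``irrelevant'' block $\tb{Z}_2$: being a non-ancestor of $\tb{Y}\cup\tb{R}_{\tb{W}}\cup\{S\}$ in $G_{\overline{X}}$, each variable in $\tb{Z}_2$ is d-separated from $\tb{Y}$ given $\tb{X}$ together with the remaining variables, which is the standard fact that non-ancestors of the target-plus-conditioning set can be marginalized without disturbing the relevant conditional. This is precisely what will later justify dropping $\tb{Z}_2$ from the adjustment expression in the proof of Theorem~\ref{thm-MSadj-com}.

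Next I would verify the separation properties of the remaining block(s) directly from the criterion. Condition (a) guarantees that no element of $\tb{Z}$ is a forbidden descendant (a descendant in $G_{\overline{X}}$ of a non-$\tb{X}$ node on a proper causal path), so every $\tb{Z}$-variable sits legitimately on the covariate side rather than the mediator side; condition (b) supplies the blocking of all non-causal paths by $\tb{Z}\cup\tb{R}_{\tb{W}}\cup\{S\}$; and condition (c), $(\tb{Y}\independent(\tb{R}_{\tb{W}}\cup\{S\})\mid\tb{X})_{G_{\overline{X}}}$, is exactly what certifies that the indicators $\tb{R}_{\tb{W}}=1, S=1$ can be attached to the $\tb{Y}$ term, while condition (d) controls those $X\in\tb{X}\cap An(\tb{R}_{\tb{W}}\cup\{S\})$. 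The blocks are chosen so that each of these conditions localizes to a single block, which is the technical content of the lemma.

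The main obstacle will be the path analysis needed to show that the claimed d-separations for each block are not destroyed by the missingness and selection indicators. Because $\tb{R}_{\tb{W}}$ and $S$ are sinks that may act as colliders, including or conditioning on them can open paths; I would control this by repeatedly invoking the structural assumption that $\tb{R}$-variables (and $S$) are never parents of $\tb{V}$-variables, so any newly activated path must \emph{enter} an indicator through an incoming edge and therefore cannot propagate back into the covariate/treatment core. A case split on whether a d-connecting path passes through a collider in $\tb{R}_{\tb{W}}\cup\{S\}$, combined with conditions (c) and (d), should close each case; this collider bookkeeping, rather than any single deep idea, is where the real work lies. I would model the argument on the analogous partition lemmas for selection-bias adjustment in \cite{correa2017causal,correa2018generalized}, adapting them to account for the individual missingness indicators.
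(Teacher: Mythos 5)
Your decomposition is not the one the lemma asserts, and the substitute you propose does not carry the load the lemma must bear. The paper's partition is a six-way, iteratively defined split of $\tb{Z}$: first by membership in $De(\tb{X})$, then by whether each residual block is d-separated from $\tb{Y}$ (given $\tb{X}$, the earlier blocks, $\tb{R}_{\tb{W}}$, $S$ in $G_{\overline{X}}$) or from $\tb{X}$ (in suitably mutilated graphs). That the six sets form a partition is immediate from the set differences in their definitions; the actual content of the lemma, and the only thing its proof argues, is the two derived conditional independencies $(\tb{Z}^{X}_{d} \independent \tb{X} \mid \tb{Z}^{Y}_{d},\tb{Z}^{Y,1}_{nd},\tb{Z}^{X,1}_{nd},\tb{R}_{\tb{W}},S)$ and $(\tb{Z}^{X,2}_{nd} \independent \tb{X} \mid \tb{Z}\setminus\tb{Z}^{X,2}_{nd},\tb{R}_{\tb{W}},S)$ in specific graphs of the form $G_{\overline{X(\cdot)}}$, established by contradiction through a collider/ancestor case analysis that invokes conditions (a)--(d). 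These independencies are exactly what licenses the repeated application of do-calculus rule~3 in the ``if'' direction of Theorem~\ref{thm-MSadj-com}, where each block is inserted into the adjustment expression in turn and $do(\tb{x})$ is stripped from its conditional factor. Your two-way split by ancestry of $\tb{Y}\cup\tb{R}_{\tb{W}}\cup\{S\}$ in $G_{\overline{X}}$ distinguishes neither the blocks that are independent of $\tb{Y}$ from those that are independent of $\tb{X}$, nor descendants of $\tb{X}$ from non-descendants, so it cannot feed that derivation.

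There is also a local error. You claim each $Z'\in\tb{Z}_2$ (a non-ancestor of $\tb{Y}\cup\tb{R}_{\tb{W}}\cup\{S\}$ in $G_{\overline{X}}$) is d-separated from $\tb{Y}$ given $\tb{X}$ ``together with the remaining variables.'' The standard marginalization fact requires $Z'$ to be a non-ancestor of the target \emph{and of the conditioning set}; here the conditioning set contains the rest of $\tb{Z}$, which may include descendants of $Z'$, and conditioning on such a descendant can activate a collider path from $Z'$ to $\tb{Y}$. So even your coarse claim needs the finer bookkeeping you deferred to the end. To salvage the approach you would have to reconstruct essentially the paper's blocks (or those of the analogous partition lemmas in \cite{correa2017causal,correa2018generalized}) and prove the two residual independencies explicitly; the collider analysis you sketch in your last paragraph is pointed in the right direction but is applied to the wrong decomposition.
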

	
\begin{enumerate}[\textbullet]
	\item
	$\tb{Z}^{Y,1}_{nd} =  \{ Z \mid Z \in \tb{Z} \setminus De_{\tb{X}}$ and $ (Z \independent \tb{Y} \mid \tb{X}, \tb{R}_{\tb{W}}, S)_{G_{\overline{X} }} \}$
	\item
	$\tb{Z}^{X,1}_{nd} =  \{ Z \mid  Z \in \tb{Z} \setminus De_{\tb{X}} \setminus \tb{Z}^{Y,1}_{nd}$ and $  ( Z \independent \tb{X} \mid \tb{Z}^{Y,1}_{nd}, \tb{R}_{\tb{W}}, S)_{G_{\overline{\tb{X}(\tb{R}_{\tb{W}}, S)} }} \}$
	
	\item
	
	$\tb{Z}^{Y}_{d} =  \{ Z \mid Z \in \tb{Z} \cap \ De_{\tb{X}}$ and $  ( Z \independent \tb{Y} \mid \tb{X}, \tb{Z}^{Y,1}_{nd} , \tb{Z}^{X,1}_{nd} ,\tb{R}_{\tb{W}}, S  )_{G_{\overline{X} }} \}$
	
	\item 
	$\tb{Z}^{X}_{d} =  \{ \tb{Z} \cap  De_{\tb{X}} \setminus \tb{Z}^{Y}_{d} \}$

	\item
	$\tb{Z}^{Y,2}_{nd} =  \{ Z \mid Z \in \tb{Z} \setminus \ De_{\tb{x}} \setminus \tb{Z}^{Y,1}_{nd} \setminus \tb{Z}^{X,1}_{nd}$ and $  ( Z \independent \tb{Y} \mid \tb{X} ,\tb{Z}^{Y,1}_{nd} , \tb{Z}^{X,1}_{nd} ,\tb{Z}^{Y}_{d} , \tb{Z}^{X}_{d}, \tb{R}_{\tb{W}}, S)_{G_{\overline{X} }} \}$	
	\item
	$\tb{Z}^{X,2}_{nd} = \tb{Z} \setminus De_{\tb{X}} \setminus \tb{Z}^{Y,1}_{nd} \setminus \tb{Z}^{X,1}_{nd}\setminus \tb{Z}^{Y,2}_{nd}$	
\end{enumerate}

Based on this partitioning, the following independencies can be conclude: \\$(\tb{Z}^{X}_{d} \independent \tb{X} \mid \tb{Z}^{Y}_{d},\tb{Z}^{Y,1}_{nd} ,\tb{Z}^{X,1}_{nd} ,\tb{R}_{\tb{W}}, S )_{G_{\overline{X(\tb{Z}^{Y}_{d} ,\tb{R}_{\tb{W}}, S)  }} } $ and $( \tb{Z}^{X,2}_{nd} \independent \tb{X} \mid \tb{Z} \setminus \tb{Z}^{X,2}_{nd},\tb{R}_{\tb{W}}, S)_{G_{\overline{X(\tb{Z}^{Y}_{d},\tb{Z}^{X}_{d} ,\tb{R}_{\tb{W}},S) } }}$.\\
\\\tb{Proof, Part 1}. To prove this independency holds: $( \tb{Z}^{X,2}_{nd} \independent \tb{X} \mid \tb{Z} \setminus \tb{Z}^{X,2}_{nd}, \tb{R}_{\tb{W}},S)_{G_{\overline{X(\tb{Z}^{Y}_{d},\tb{Z}^{X}_{d} ,\tb{R}_{\tb{W}},S)} }}$, we assume, by contradiction, that this assumption is not true. Therefore, there should be an open path between $Z'\in \tb{Z} \setminus \ De_{\tb{X}} \setminus\tb{Z}^{Y,1}_{nd}\setminus \tb{Z}^{X,1}_{nd} \setminus \tb{Z}^{Y,2}_{nd}$ and $ X \in \tb{X}$. We name this path $q$ in the graph $G_{\overline{X}}$. Since  $Z\textprime$ does not belong to $\tb{Z}^{Y,1}_{nd}$ and based on the definition of $\tb{Z}^{Y,1}_{nd}$, there exists an open path between $Y \in \tb{Y}$ and $Z\textprime$. We call this path $p$. The only collider that is allowed to exist in path $p$ is $Z\textprime$. $P$ cannot have any variable as a collider in $\{\tb{R}_{\tb{W}},S \}$ due to condition (c) that requires the d-separation between $Y$ and $\{\tb{R}_{\tb{W}},S \}$ for a given $X$. The variable $Z\textprime$ is not in $\tb{Z}^{Y,2}_{nd}$ based on the definition of $\tb{Z}^{X,2}_{nd}$. Therefore, $p$ does not contain any covariate in $\tb{Z}^{Y,1}_{nd},\tb{Z}^{X,1}_{nd},\tb{Z}^{Y}_{d},\tb{Z}^{X}_{d}$ or $\tb{Z}^{Y,2}_{nd}$; Otherwise, these sets close $p$ and lead $Z\textprime$ belongs to $\tb{Z}^{Y,2}_{nd}$ as per the fact that $p$ does not have any colliders. We have two situations: $X$ is or is not the ancestor of $\{\tb{R}_{\tb{W}},S\}$. In the first scenario, the arrow in path $q$ needs to come out of $X$. The definition of $Z\textprime$ necessitates that $Z\textprime$ is not a descendant of $X$. Therefore, there will be colliders in $q$. Due to the assumption that $q$ is open, these colliders must be ancestors of $\{\tb{R}_{\tb{W}},S\}$. This is in contradiction with the assumption that $X$ is not an ancestor of the variables in $\{\tb{R}_{\tb{W}},S\}$. \\
For the case that arrows coming into $X$, consider the joint path $p$ and $q$. In this path, $X$ should be an ancestor of $\{\tb{R}_{\tb{W}},S\}$ which is in contradiction to condition (d). If $Z\textprime$ is a collider in the joint path, we will have a non-causal open path which is against condition (b). If the arrows come out of $X$ in path $q$, due to the fact that  $Z\textprime$ is non-descendant of $X$, we need to have a collider in $q$. Based on our assumption $q$ is open. Therefore, the collider belongs to $\{\tb{R}_{\tb{W}},S\}$, otherwise $Z\textprime$ would be in the set $\tb{Z}^{X,1}_{nd}$. Having $ W \in \{\tb{R}_{\tb{W}},S\}$ as a collider in q necessitate $Z\textprime$ to be a collider by itself since we need to close the open path from $W$ to $Y$ based on condition (c). However, then conditioning on $Z\textprime$ will open the non-causal path from $\tb{X}$ to $\tb{Y}$. Therefore, the assumption of the existence of such $Z\textprime$ is invalid. \\ 
\\\tb{Proof, Part 2}. To prove this independency, 
$(\tb{Z}^{X}_{d} \independent \tb{X} \mid\tb{Z}^{Y}_{d},\tb{Z}^{Y,1}_{nd},\tb{Z}^{X,1}_{nd} , \tb{R}_{\tb{W}},S )_{G_{\overline{\tb{X}(\tb{Z}^{Y}_{d} ,\tb{R}_{\tb{W}},S)}} }$, we consider two cases of $\tb{Z}^{X}_{d} \ne 0$ and $\tb{Z}^{X}_{d} = 0$. For the first case, by contradiction, assume the independency is not true. Therefore, there exists an open path $q$ between $X$ and $Z\textprime \in \tb{Z}^{X}_{d}$, while the rest of $\tb{Z}^{Y}_{d},\tb{Z}^{Y,1}_{nd},\tb{Z}^{X,1}_{nd} , \tb{R}_{\tb{W}},S$ are observed. Since $Z\textprime$ belongs to $\tb{Z}^{X}_{d}$, all variables in path $q$ must be descendant of $X$. We know that $Z\textprime \notin \tb{Z}^{Y}_{d}$ based on the definition $\tb{Z}^{X}_{d}$. Therefore, there exists an open path $p$ from $Z\textprime$ to $Y$ while the variables $\tb{X}, \tb{Z}^{Y,1}_{nd}, \tb{Z}^{X,1}_{nd},\tb{Z}^{Y}_{d},\tb{R}_{\tb{W}},S$ are observed. There is no variable belongs to $\tb{X}$ in path $p$ based on the condition (b).  

Consider the junction of paths $q$ and $p$. Path $p$ cannot be directed since this junction path will be a proper causal path with some nodes from $Z\textprime$ on it. This is against condition (a). Therefore, $p$ needs to have colliders on it. 
Based on condition (b), $Z'$ cannot be collider, unless the path $q$ be closed by $\tb{Z}^{X,1}_{nd}$ and $\tb{Z}^{X,1}_{nd}$ which, based on their definition, is not possible to have them on $q$. 
If  $Z\textprime$ is not collider in $p$, there needs to be another collider variable $W \in \{\tb{R}_{\tb{W}},S\} \cup\tb{Z}^{Y,1}_{nd}\cup \tb{Z}^{X,1}_{nd} \cup \tb{Z}^{Y}_{d}$. None of these three sets can be collider. $\{\tb{R}_{\tb{W}},S\}$ cannot be collider because of condition (c). $\tb{Z}^{Y}_{d}$ cannot be collider since it is independent of $\tb{Y}$. Lastly, the two sets $\tb{Z}^{Y,1}_{nd}$  or $\tb{Z}^{X,1}_{nd}$ are descendant of $\tb{X}$. Therefore, they cannot be used as $W$. Since there is no valid variable to be as collider in path $p$, our assumption of existence of path $q$ is not a legitimate assumption. 
\\
\\
\tb{Theorem} \tb{\ref{thm-MSadj-com}} \tb{(MS-Adjustment)}  A set \tb{Z} is a ms-adjustment set for recovering causal effect of \tb{X} on \tb{Y} by the ms-adjustment formula in Definition~\ref{eq-ms-adj} if and only if it satisfies the ms-adjustment criterion in Definition~\ref{defMScriterion}.\\ 
\\\tb{Proof (if)}:
Based on lemma 1, a valid adjustment set \tb{Z} can be partitioned into the $\{\tb{Z}^{Y,1}_{nd}, \tb{Z}^{X,1}_{nd}, \tb{Z}^{Y}_{d}, \tb{Z}^{X}_{d},\tb{Z}^{Y,2}_{nd}, \tb{Z}^{X,2}_{nd} \}$. Based on this fact, the casual effect of \tb{X} on \tb{Y} can be computed as follows:
\\

According to condition (c), $(\tb{Y} \independent \{\tb{R}_{\tb{W}},S\}\mid \tb{X})_{G_{\overline{X}}} $ and $\{\tb{R}_{\tb{W}},S\}$ can be inserted into the following expression:

\begin{flalign}
&P(\tb{y} \mid do(\tb{x})) = P(\tb{y} \mid do(\tb{x}) , \tb{R}_{\tb{W}} = 1,S = 1) &
\end{flalign}

$\tb{Z}^{Y,1}_{nd}$ is independent of $\tb{Y}$. Therefore, it can be added to the first factor. Introducing the second factor with summation over 	$\tb{Z}^{Y,1}_{nd}$  values is valid. 

\begin{flalign}
&P(\tb{y} \mid do(\tb{x}))= \sum_{\tb{Z}^{Y,1}_{nd}}{} P(\tb{y} \mid do(\tb{x}) ,\tb{Z}^{Y,1}_{nd}, \tb{R}_{\tb{W}} = 1,S=1) P(\tb{Z}^{Y,1}_{nd} \mid \tb{R}_{\tb{W}} = 1,S=1)&   
\end{flalign}

By conditioning on $\tb{Z}^{X,1}_{nd}$ in the first factor, we get the following expression:

\begin{flalign}
\begin{split}
&P(\tb{y} \mid do(\tb{x}))= \sum_{\tb{Z}^{Y,1}_{nd} ,\tb{Z}^{X,1}_{nd} }{} P(\tb{y} \mid do(\tb{x}) ,\tb{Z}^{Y,1}_{nd},\tb{Z}^{X,1}_{nd}, \tb{R}_{\tb{W}}= 1,S=1) P(\tb{Z}^{X,1}_{nd} \mid do(\tb{x}),\tb{Z}^{Y,1}_{nd}, \tb{R}_{\tb{W}} = 1,S=1)\\
&P(\tb{Z}^{Y,1}_{nd} \mid \tb{R}_{\tb{W}} = 1,S=1)
\end{split}
\end{flalign}

In the second factor we can remove do(\tb{x}) based on the fact that $(\tb{Z}^{X,1}_{nd} \independent \tb{X}‌\mid\tb{Z}^{Y,1}_{nd}, \tb{R}_{\tb{W}},S)_{G_{\overline{X(\tb{R}_{\tb{W}},S)}}} $ ( $G_{\overline{\tb{X}(\tb{R}_{\tb{W}},S)}} = G_{\overline{\tb{X}(\tb{Z}^{Y,1}_{nd}, \tb{R}_{\tb{W}},S)}}$ since $\tb{Z}^{Y,1}_{nd}$ is independent of \tb{X}), we can use rule 3 of the do-calculus and remove do(\tb{x}). Taking advantage of the chain rule, factors two and three can be joined.  
\begin{flalign}
&P(\tb{y} \mid do(\tb{x}))=\sum_{\tb{Z}^{Y,1}_{nd} ,\tb{Z}^{X,1}_{nd} }{} P(\tb{y} \mid do(\tb{x}) ,\tb{Z}^{Y,1}_{nd},\tb{Z}^{X,1}_{nd}, \tb{R}_{\tb{W}} = 1,S=1) P(\tb{Z}^{Y,1}_{nd} \tb{Z}^{X,1}_{nd}\mid \tb{R}_{\tb{W}} = 1,S=1)&
\end{flalign}

 $\tb{Z}^{Y}_{d}$ is independent of \tb{Y}, so we can insert it in the first factor. $\tb{Z}^{Y,1}_{nd}$ can be inserted in the second factor and summed out on all its possible values. 

\begin{flalign}
&P(\tb{y} \mid do(\tb{x}))=\sum_{\tb{Z}^{Y,1}_{nd} ,\tb{Z}^{X,1}_{nd},\tb{Z}^{Y}_{d}}{} 
P(\tb{y} \mid do(\tb{x})  ,\tb{Z}^{Y,1}_{nd},\tb{Z}^{X,1}_{nd},\tb{Z}^{Y}_{d}, \tb{R}_{\tb{W}} = 1,S=1)
P(\tb{Z}^{Y,1}_{nd},\tb{Z}^{X,1}_{nd},\tb{Z}^{Y}_{d}\mid \tb{R}_{\tb{W}} = 1,S=1)& 
\end{flalign}
Since $\tb{Z}^{X}_{d}$ is not independent of Y, conditioning on it leads to:

\begin{flalign}
\begin{split}
&P(\tb{y} \mid do(\tb{x}))=\sum_{\tb{Z}^{Y,1}_{nd} ,\tb{Z}^{X,1}_{nd},\tb{Z}^{Y}_{d},\tb{Z}^{X}_{d}}{} 
P(\tb{y} \mid do(\tb{x}) ,\tb{Z}^{Y,1}_{nd},\tb{Z}^{X,1}_{nd},\tb{Z}^{Y}_{d},\tb{Z}^{X}_{d}, \tb{R}_{\tb{W}} = 1,S=1) 
P(\tb{Z}^{X}_{d} \mid  do(\tb{x}),\tb{Z}^{X,1}_{nd} ,\tb{Z}^{X,1}_{nd},\tb{Z}^{Y}_{d},\tb{R}_{\tb{W}} = 1,S=1)\\
&\times P(\tb{Z}^{Y,1}_{nd},\tb{Z}^{X,1}_{nd},\tb{Z}^{Y}_{d}\mid \tb{R}_{\tb{W}} = 1,S=1) 
\end{split}
\end{flalign}

do(\tb{x}) in the second factor can be removed by using rule 3 of the do-calculus, since the following independency: $(\tb{Z}^{X}_{d} \independent \tb{X} \mid\tb{Z}^{Y}_{d},\tb{Z}^{Y,1}_{nd},\tb{Z}^{X,1}_{nd}, \tb{R}_{\tb{W}} = 1,S=1)_{G_{\overline{\tb{X}(\tb{Z}^{Y}_{d}, \tb{R}_{\tb{W}},S)}}} $ holds. Then applying the chain rule on the factors two and three leads to the following expression. 

\begin{flalign}
\begin{split}
&P(\tb{y} \mid do(\tb{x}))=\sum_{\tb{Z}^{Y,1}_{nd} ,\tb{Z}^{X,1}_{nd}, 
	\tb{Z}^{Y}_{d},\tb{Z}^{X}_{d}}{} 
P(\tb{y} \mid do(\tb{x}) ,\tb{Z}^{Y,1}_{nd},\tb{Z}^{X,1}_{nd},\tb{Z}^{Y}_{d},\tb{Z}^{X}_{d}, \tb{R}_{\tb{W}} = 1,S=1) 
P(\tb{Z}^{Y,1}_{nd},\tb{Z}^{X,1}_{nd},\tb{Z}^{Y}_{d} ,\tb{Z}^{X}_{d}\mid \tb{R}_{\tb{W}} = 1,S=1)  
\end{split}
\end{flalign}

 We use this independency:$ (\tb{Z}^{Y,2}_{nd} \independent \tb{Y} \mid \tb{X},\tb{Z}^{Y,1}_{nd},\tb{Z}^{X,1}_{nd},\tb{Z}^{Y}_{d},\tb{Z}^{X}_{d}, \tb{R}_{\tb{W}} = 1,S=1)_{G_{\overline{X}}}$, and insert $\tb{Z}^{Y,2}_{nd}$ into the first factor. In the next step we add $\tb{Z}^{Y,2}_{nd}$ to the second factor and put a summation of it. 

\begin{flalign}
\begin{split}
& P(\tb{y} \mid do(\tb{x}))=\sum_{\tb{Z}^{Y,1}_{nd} ,\tb{Z}^{X,1}_{nd}, 
	\tb{Z}^{Y}_{d},\tb{Z}^{X}_{d},\tb{Z}^{Y,2}_{nd} }{} 
P(\tb{y}\mid do(\tb{x}), \tb{Z}^{Y,1}_{nd},\tb{Z}^{X,1}_{nd},\tb{Z}^{Y}_{d}, \tb{Z}^{X}_{d}, \tb{Z}^{Y,2}_{d} , \tb{R}_{\tb{W}} = 1,S=1 )\\& \times
P(\tb{Z}^{Y,1}_{nd},\tb{Z}^{X,1}_{nd},\tb{Z}^{Y}_{d},\tb{Z}^{X}_{d},\tb{Z}^{Y,2}_{d} \mid \tb{R}_{\tb{W}} = 1,S=1)
\end{split}
\end{flalign}

We condition on $\tb{Z}^{X,2}_{nd}$ in the first factor:

\begin{equation}
\begin{split}
&P(\tb{y} \mid do(\tb{x}))=\sum_{Z}{} 
P(\tb{y} \mid do(\tb{x}), \tb{Z}, \tb{R}_{\tb{W}} = 1,S=1 )
P(\tb{Z}^{X,2}_{nd} \mid do(\tb{x}), \tb{Z}^{Y,1}_{nd},\tb{Z}^{X,1}_{nd},\tb{Z}^{Y}_{d},\tb{Z}^{X}_{d},\tb{Z}^{Y,2}_{d}, \tb{R}_{\tb{W}} = 1,S=1)  \\    
&\times P(\tb{Z}^{Y,1}_{nd},\tb{Z}^{X,1}_{nd},\tb{Z}^{Y}_{d},\tb{Z}^{X}_{d},\tb{Z}^{Y,2}_{d} \mid \tb{R}_{\tb{W}} = 1,S=1) 
\end{split}
\end{equation}

By getting help from rule 3 of do-calculus and using this independency $ (\tb{Z}^{X,2}_{nd} \independent \tb{X} \mid \tb{Z}\setminus \tb{Z}^{X,2}_{nd}, \tb{R}_{\tb{W}} = 1,S=1)_{G_{\overline{\tb{X}(\tb{Z}^{Y}_{d},\tb{Z}^{X}_{d}, \tb{R})}}}$ do(\tb{x}) is removed from the second factor:

\begin{flalign}
&P(\tb{y} \mid do(\tb{x}))=\sum_{\tb{Z}}{}
P(\tb{y} \mid do(\tb{x}),\tb{z},\tb{R}_{\tb{W}} = 1,S=1)
P( \tb{z} \mid \tb{R}_{\tb{W}} = 1,S=1 )&
\end{flalign}

Based on conditions (a , b) we have $ (\tb{Y} \independent \tb{X} \mid \tb{Z}, \tb{R}_{\tb{W}} = 1,S=1)_{G_{\underline{X}}}$
\begin{flalign}
&P(\tb{y} \mid do(\tb{x}))=\sum_{\tb{Z}}{}
P(\tb{y} \mid \tb{x}, \tb{z}, \tb{R}_{\tb{W}} = 1,S=1 )
P( \tb{z} \mid \tb{R}_{\tb{W}} = 1,S=1  )&
\end{flalign}

\tb{Proof (only if):} In this part, we prove that if any of the criterion in MS-adjustment criterion is not true, there will be a graph $G$ that for a given set of treatment and outcome (\tb{X},\tb{Y}), the causal effect of $P(\tb{y} \mid do(\tb{x}))$ is not recoverable. The condition (b) in MS-adjustment criterion is the extended version of the condition (b) in adjustment set. The only difference is $\tb{R} \cup S$ is observed rather than only $S$. Therefore, we prove  besides \tb{Z}, $\tb{R} \cup S$ are required to block all non-causal paths. By contradiction, assume this is not the case. Therefore, there should be a non-causal path $q$ between $X$ and $Y$, that is closed by observed \tb{Z}, and gets open when there is a condition on $\tb{R} \cup S$. In order to demonstrate that the graph $G$ with this non-causal path is non-recoverable, we consider two models $M_1$ and $M_2$ both compatible with the graph $G$. We assign $P_1$ as a probability distubtion corresponding to $M_1$ and $P_2$ for $M_2$. $M_1$ and $M_2$ agree on probability distribution under selection, and MNAR biases and are disagree on the causal effect of the set of treatment on the set of outcome.

\begin{equation}
P_{1}(\tb{v} \mid\tb{R}^{\tb{v}}= 1, S = 1 )= P_{2}(\tb{v} \mid\tb{R}^{\tb{v}}= 1, S =1)
\label{models-agree}
\end{equation}
\begin{equation}
P_{1}(\tb{y} \mid do(\tb{x}) ) \ne P_{2}(\tb{y} \mid do(\tb{x}))
\end{equation}

 We construct $M_1$ in a way to be compatible with the graph $G_{\overline{\tb{R}_W, S}}$, separating all $\tb{R}_W,S$ from their parents, $(\tb{V} \independent \tb{R}_W \cup S)_{M_1}$  ,and $M_2$ compatible with the graph $G$: 
\begin{equation}
P_{1}(\tb{v} \mid \tb{R}^{\tb{v}}= 1, S = 1 )= P_{1}(\tb{v} \mid\tb{R}^{\tb{v}} \setminus \{\tb{R}_{\tb{W}} = 1, S =1\})
\end{equation}

 The causal effect query needs to be recoverable for any parametrization of probability distributions $P_1$, $P_2$. We construct $P_2$ in a way that equation \ref{models-agree} holds. 

Without loss of generality, we are considering the path between $Y' \in \tb{Y}$ and $X' \in \tb{X}$ that condition (b) of ms-adjustment criterion does not satisfy in it. Therefore, our desired model will have all the variables in the rest of the graph d-separated from the variables in the path. We have:

\begin{align}
&P(\tb{y} \mid do(\tb{x\textprime})) = \sum_{\tb{z}}{} P(\tb{y} \mid \tb{x\textprime},\tb{z},\tb{R}_W=1, S=1)P(\tb{z} \mid \tb{R}_W=1 , S=1) \\&
= \prod_{\tb{Y}}\sum_{\tb{z}} P(\tb{y} \mid \tb{x\textprime},\tb{z} ,\tb{R}_W=1 , S=1)P(\tb{z} \mid \tb{R}_W=1 , S=1) \\&
= ( \prod_{\tb{Y} \setminus Y'}P(\tb{y})) \sum_{\tb{z}}{} P(\tb{y}\textprime \mid \tb{x\textprime},\tb{z}, \tb{R}_W=1 , S=1)P(\tb{z} \mid \tb{R}_W=1 , S=1) \\&
= \gamma \sum_{\tb{z}} P(\tb{y}\textprime | \tb{x\textprime},\tb{z} ,\tb{R}_W=1 , S=1)P(\tb{z }\mid \tb{R}_W=1 , S=1)
\end{align}

$\gamma$ in above expression indicates product of marginal distribution $\tb{Y} \setminus {Y'}$. \\

The open non-causal path between $X\textprime$ and $Y\textprime$ that is blocked by \tb{Z} but opened with $\tb{R\textprime} \subseteq \tb{R}_{\tb{W}}=1 , S=1$, needs \tb{R\textprime} to be colliders. Fig.~\ref{fig-condB1} shows a general case for when the set \tb{R\textprime} has size 1. By a small change we will get Fig.~\ref{fig-condB2} which shows the general case for when \tb{R\textprime} has a size greater than one.

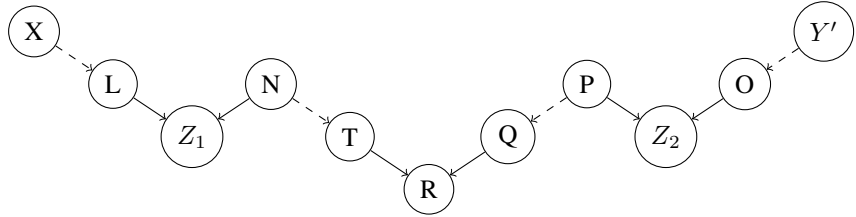
\begin{figure}[H] 
	\begin{center}
		\begin{tabular}{ccc}
		\begin{tikzpicture}[scale = 0.7]
		\node[circle,draw] (X) at (1,6) {X};
		\node[circle,draw] (L) at (2.5,5) {L};
		\node[circle,draw] (Z1) at (4,4) {$Z_1$};
		\node[circle,draw] (N) at (5.5,5) {N};
		\node[circle,draw] (T) at (7,4) {T};
		\node[circle,draw] (R) at (8.5,3) {R};
		\node[circle,draw] (Q) at (10,4) {Q};
		\node[circle,draw] (P) at (11.5,5) {P};
		\node[circle,draw] (Z2) at (13,4) {$Z_2$};
		\node[circle,draw] (O) at (14.5,5) {O};
		\node[circle,draw] (Y) at (16,6) {$Y'$};
		
		\draw[dashed, ->] (X)-- (L);
		\draw[->] (L)-- (Z1);
		\draw[->] (N)-- (Z1);
		\draw[dashed,->] (N)-- (T);
		\draw[->] (T)-- (R);
		\draw[->] (Q)-- (R);
		\draw[dashed, ->] (P)-- (Q);
		\draw[->] (P)-- (Z2);
		\draw[->] (O)-- (Z2);
		\draw[dashed, ->] (Y)-- (O);
		
		\end{tikzpicture}
		\end{tabular}
		\caption{This graph indicates an open non-causal path between $X$ and $Y$ with  conditioning on $\tb{R}$ and $\tb{Z}$. The path from $N$ to $T$ and from $P$ to $Q$ can be substituted by a path with any number of $Z \in \tb{Z}$. Dotted edges refer to chains of nodes. }
		\label{fig-condB1}
	\end{center}
\end{figure}  

	\tb{case 1:} There is only one collider belongs to \tb{R\textprime}. The proof for this part is as same as the selection bias \cite{correa2018generalized}. Therefore, we omit repeating it. 

\begin{figure}
	\begin{center}
		\begin{tabular}{ccc}
	\begin{tikzpicture}[scale = 0.8]
\node[circle,draw] (X) at (0,8) {X};
\node[circle,draw] (L0) at (1,6.5) {$L_0$};
\node[circle,draw] (Z1) at (2,5) {$Z_1$};
\node[circle,draw] (N) at (3,6) {$N_i$};
\node[circle,draw] (Li) at (4,5) {$L_i$};
\node[circle,draw] (Zi) at (5,4) {$Z'_i$};
\node[circle,draw] (Bi) at (6,5.5) {$B_i$};
\node[circle,draw] (Ti) at (7,4) {$T_i$};
\node[circle,draw] (Ri) at (8,2) {$R_i\mid S$};
\node[circle,draw] (Qi) at (9,4) {$Q_i$};
\node[circle,draw] (Pi) at (10,5.5) {$P_i$};
\node[circle,draw] (Z2i) at (11,4) {$Z''_i$};
\node[circle,draw] (Oi) at (12,5.5) {$O_i$};
\node[circle,draw] (Ni2) at (13,7) {$N_{i+1}$};
\node[circle,draw] (T) at (13.5,5.5) {T};
\node[circle,draw] (R) at (14,4) {$R\mid S$};
\node[circle,draw] (Q) at (15,5) {Q};
\node[circle,draw] (P) at (16,6) {P};
\node[circle,draw] (Z2) at (17,5) {$Z_2$};
\node[circle,draw] (O) at (18,6.5) {O};
\node[circle,draw] (Y) at (19,8) {$Y'$};

\draw[dashed, ->] (X)-- (L0);
\draw[->] (L0)-- (Z1);
\draw[->] (N)-- (Z1);
\draw[dashed,->] (N)-- (Li);
\draw[->] (Li)-- (Zi);
\draw[->] (Bi)-- (Zi);
\draw[dashed, ->] (Bi)-- (Ti);
\draw[->] (Ti)-- (Ri);
\draw[->] (Qi)-- (Ri);
\draw[dashed, ->] (Pi)-- (Qi);
\draw[->] (Pi)-- (Z2i);
\draw[->] (Oi)-- (Z2i);
\draw[dashed,->] (Ni2) -- (Oi);
\draw[dashed,->] (Ni2) -- (T);
\draw[->] (T)-- (R);
\draw[->] (Q)-- (R);
\draw[dashed, ->] (P)-- (Q);
\draw[->] (P)-- (Z2);
\draw[->] (O)-- (Z2);
\draw[dashed, ->] (Y)-- (O);

\end{tikzpicture}
		\end{tabular}
\caption{The path from $N_i$ to $N_{i+1}$ can recursively be substituted by more of the path of the same kind to include arbitrary number of $\tb{R}$ variables. $R_i\mid S$ and $R\mid S$ indicate that the variables can belong to either $\tb{R}_{\tb{W}}$ or $S$ by considering the fact that there is only one $S$ in the path.} 
\label{fig-condB2} 
\end{center}
\end{figure}
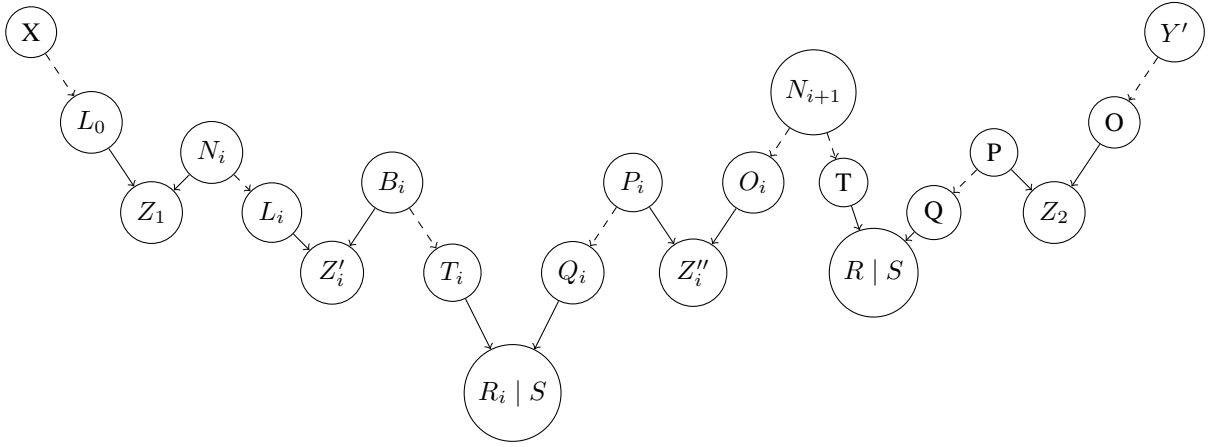  

\tb{case 2:}
The set \tb{R\textprime} might have the size greater than one. Fig.~\ref{fig-condB2} expresses a graphical representation for this situation. To prove this case, we provide a parametrization for the path from $N_i$ to $N_{i+1}$. The rest of the proof will be similar to case 1.

We assign $P_1(N_{i+1}) = P_1(P_i) = P_1(B_i) = P_1(N_i) = 1/2, P(O_i \mid N_{i+1}) = 1/2 +‌\epsilon_{5}/2, P(O_i \mid  \overline{N_{i+1}}) = 1/2 -‌\epsilon_{5}/2, P(L_i \mid N_{i}) = 1/2 +‌\epsilon_{6}/2, P(L_i \mid \overline{N_{i}}) = 1/2 -‌\epsilon_{6}/2, P(T_i \mid B_{i}) = 1/2 +‌\epsilon_{7}/2, P(T_i \mid \overline{B_{i}}) = 1/2 -‌\epsilon_{7}/2, P(Q_i \mid P_{i}) = 1/2 +‌\epsilon_{8}/2, P(Q_i \mid \overline{P_{i}}) = 1/2 -‌\epsilon_{8}/2 , P(z''_i \mid P_i, O_i) =P(z''_i \mid \overline{P_i}, O_i)=P(z''_i \mid \overline{O_i}, P_i)=P(z''_i \mid \overline{P_i},\overline{ O_i}) = 1/2 , P(R_iS \mid T_i, Q_i) =P(R_iS \mid \overline{T_i}, Q_i)=P(R_iS \mid \overline{T_i}, Q_i)=P(R_iS \mid \overline{T_i},\overline{Q_i}) = 1/2,  P(Z'_i \mid B_i, L_i) =P(Z'_i \mid \overline{B_i}, L_i)=P( Z'_i \mid \overline{B_i}, L_i)=P(Z_i \mid \overline{B_i},\overline{L_i}) = 1/2$, Where $\epsilon_{i} = (\frac{1}{5}^{k_{i}})$ , $k_{5}$ is the length of the path $N_{i+1}$ to $O_i$, $k_{6}$ is the length of the path from $N_i$ to $L_i$,$k_{7}$ is the length of the path from $B_i$ to $T_i$, and $k_{8}$ is  the length of the path from $P_i$ to $Q_i$. This parametrization provides the same values for $Q_1$ and $Q_2$ as case 1. \\

Now we evaluate the necessity of condition (c). Fig.~\ref{fig-31} and 
Fig.~\ref{fig-32} show all the cases violating condition (c). Note that in these  figures, $R_1, R_2,R_3, R_i \in \tb{R}_{\tb{W}}$, and, by mentioning $R_i\mid S$, we are referring to have either S or $R_i$ violated condition (c). The proof for cases 1 to 6 is similar as \cite{correa2018generalized}. Cases of 7, 10, 11, and 12 are extended versions of case 2, and case 8, 13, and 14 are extended versions of case 3. It is clear that by adding more edges to case 5 we can obtain case 9. We can conclude these extended versions are not recoverable, since if recoverability is impossible in a graph, adding more edges does not change recoverability status.

\begin{figure} 
	\begin{center}
		\begin{tabular}{cc}
				\begin{subfigure}[normal]{0.4\linewidth}
				\begin{tikzpicture}[scale = 0.5]
				\node[circle,draw] (X) at (10,6) {X};
				\node[circle,draw] (Y) at (12,6) {$Y'$};
				\node[circle,draw] (Z) at (13,4) {$Z$};
				\node[circle,draw] (T) at (14,2) {T};
				\node[circle,draw] (R) at (16,0) {$R_1 \mid S$};
				
				\draw[dashed, ->] (Y)-- (Z);
				\draw[dashed,->] (Z)--(T);
				\draw[->] (T)--(R);
				\end{tikzpicture}
				\caption{case 2} 
			\end{subfigure}
			 &  	\begin{subfigure}[normal]{0.4\linewidth}
			 	\begin{tikzpicture} [scale=0.5]
			 	\node[circle,draw] (X) at (10,2) {X};
			 	\node[circle,draw] (Y) at (13,4) {$Y'$};
			 	\node[circle,draw] (R) at (13,0) {$R_1 \mid S$};
			 	
			 	\draw[dashed, ->] (X)-- (Y);
			 	\draw[->] (Y)--(R);
			 	
			 	\end{tikzpicture}%
			 	\caption{case 1}  
			 \end{subfigure}\\
		 \begin{subfigure}[normal]{0.4\linewidth}
		 	\begin{tikzpicture}[scale = 0.5]
		 	\node[circle,draw] (X) at (10,6) {X};
		 	\node[circle,draw] (Y) at (12,6) {$Y'$};
		 	\node[circle,draw] (Z) at (13,4) {$Z$};
		 	\node[circle,draw] (T) at (14,2) {T};
		 	\node[circle,draw] (R) at (16,0) {$R_1 \mid S$};
		 	
		 	\draw[dashed, ->] (Y)-- (Z);
		 	\draw[dashed,->] (Z)--(T);
		 	\draw[->] (T)--(R);
		 	\end{tikzpicture}
		 	\caption{case 3}  
		 \end{subfigure}  &
		\begin{subfigure}[normal]{0.4\linewidth}
		\begin{tikzpicture}[scale = 0.5]
		\node[circle,draw] (X) at (2,2) {X};
		\node[circle,draw] (Y) at (5,2) {$Y'$};
		\node[circle,draw] (Q) at (6.5,3) {Q};
		\node[circle,draw] (N) at (8,5) {N};
		\node[circle,draw] (T) at (8,3) {T};
		\node[circle,draw] (R) at (8,0) {$R_1 \mid S$};
		
		\draw[dashed, ->] (X)-- (Y);
		\draw[dashed, ->] (N)-- (Q);
		\draw[dashed, ->] (N)-- (T);
		\draw[->] (Q)--(Y);
		\draw[ ->] (T)-- (R);
		\end{tikzpicture}
		\caption{case 4}  
	\end{subfigure}   \\
		\begin{subfigure}[normal]{0.4\textwidth}
		\begin{tikzpicture}[scale = 0.5]
		\node[circle,draw] (X) at (2,2) {X};
		\node[circle,draw] (Y) at (5,2) {$Y'$};
		\node[circle,draw] (Q) at (6.5,3) {Q};
		\node[circle,draw] (Z) at (8,5) {$Z$};
		\node[circle,draw] (T) at (8,3) {T};
		\node[circle,draw] (R) at (8,0) {$R_1 \mid S$};
		
		\draw[dashed, ->] (X)-- (Y);
		\draw[dashed, ->] (Z)-- (Q);
		\draw[dashed, ->] (Z)-- (T);
		\draw[->] (Q)--(Y);
		\draw[ ->] (T)-- (R);
		\end{tikzpicture}
		\caption{case 5} 
		\end{subfigure}
		&
		\begin{subfigure}[normal]{0.4\textwidth}
		\begin{tikzpicture}[scale = 0.5]
		\node[circle,draw] (X) at (2,2) {X};
		\node[circle,draw] (Y) at (5,2) {$Y'$};
		\node[circle,draw] (R) at (7,0) {$R_1 \mid S$};
		
		\draw[dashed, ->] (X)-- (Y);
		\draw[dashed, <->] (Y)to[out=+20,in=+70] (R);
		\end{tikzpicture}
		\caption{case 6} \label{fig:M6}  
		\end{subfigure} \\
		\begin{subfigure}[normal]{0.3\textwidth}
		\begin{tikzpicture}[scale = 0.5]
		\node[circle,draw] (X) at (0,2) {X};
		\node[circle,draw] (Y) at (2,2) {$Y'$};
		\node[circle,draw] (T) at (4,1) {T};
		\node[circle,draw] (R1) at (6,0) {$R_1$};
		\node[circle,draw] (R2) at (8,2) {$R_2$};
		
		\draw[->, dashed] (X)-- (Y);	
		\draw[->, dashed] (Y)-- (T);
		\draw[->] (T)-- (R1);
		\draw[thick,dashed, ->] (R2)-- (R1);
		\end{tikzpicture}
		\caption{case 7}  
		\end{subfigure}
	 	&
		\begin{subfigure}[normal]{0.4\textwidth}
		\begin{tikzpicture}[scale = 0.5]
		\node[circle,draw] (X) at (0,6) {X};
		\node[circle,draw] (Y) at (2,6) {$Y'$};
		\node[circle,draw] (Z) at (2.5,4) {$Z_i$};
		\node[circle,draw] (T) at (4,3) {$T$};
		\node[circle,draw] (R1) at (6,3) {$R_1$};
		\node[circle,draw] (R2) at (8,4) {$R_2$};
		
		\draw[->, dashed] (Y)-- (Z);
		\draw[->, dashed] (Z)-- (T);
		\draw[->] (T)-- (R1);
		\draw[thick,dashed,->] (R2)-- (R1);
		\end{tikzpicture}
		\caption{case 8}  
		\end{subfigure} \\
	\begin{subfigure}[normal]{0.4\textwidth}
		\begin{tikzpicture}[scale = 0.5]
		\node[circle,draw] (X) at (0,2) {X};
		\node[circle,draw] (Y) at (2,2) {$Y'$};
		\node[circle,draw] (L) at (4,2) {L};
		
		\node[circle,draw] (Z) at (5,4) {Z};
		\node[circle,draw] (Q) at (6.5,2) {Q};	
		\node[circle,draw] (R1) at (8,0) {$R_1$};
		\node[circle,draw] (R2) at (10,2) {$R_2$};
		
		\draw[->, dashed] (X)-- (Y);	
		\draw[->, dashed] (Z)-- (L);
		\draw[->] (L)-- (Y);
		\draw[->, dashed] (Z)-- (Q);
		\draw[->] (Q)-- (R1);
		\draw[thick,dashed, ->] (R2)-- (R1);
		\end{tikzpicture}
		\caption{case 9}  
	\end{subfigure} & 
\begin{subfigure}[normal]{0.4\textwidth}
	\begin{tikzpicture}[scale = 0.5]
	\centering
	\node[circle,draw] (X) at (0,4) {X};
	\node[circle,draw] (Y) at (2,4) {$Y'$};
	\node[circle,draw] (Q) at (3.5,2) {Q};
	\node[circle,draw] (R1) at (5,0) {$R_1 \mid S$};
	\node[circle,draw] (L) at (6,2) {L};	
	\node[circle,draw] (T) at (7,4) {T};
	\node[circle,draw] (P) at (8,2) {P};	
	\node[circle,draw] (R2) at (9,0) {$R_2$};
	\node[circle,draw] (R3) at (11,2) {$R_3$};
	
	\draw[->, dashed] (X)-- (Y);
	\draw[->, dashed] (Y)-- (Q);
	\draw[->] (Q)-- (R1);
	\draw[->] (L)-- (R1);
	\draw[dashed,->] (T)-- (L);
	\draw[dashed,->] (T)-- (P);
	\draw[->] (P)-- (R2);
	\draw[thick,dashed,->] (R3)-- (R2);
	
	\end{tikzpicture}
	\caption{case 10} 
\end{subfigure}\\

\end{tabular}
\caption{ All cases condition (c) are violated.  \label{fig-31}}
\end{center}
\end{figure}
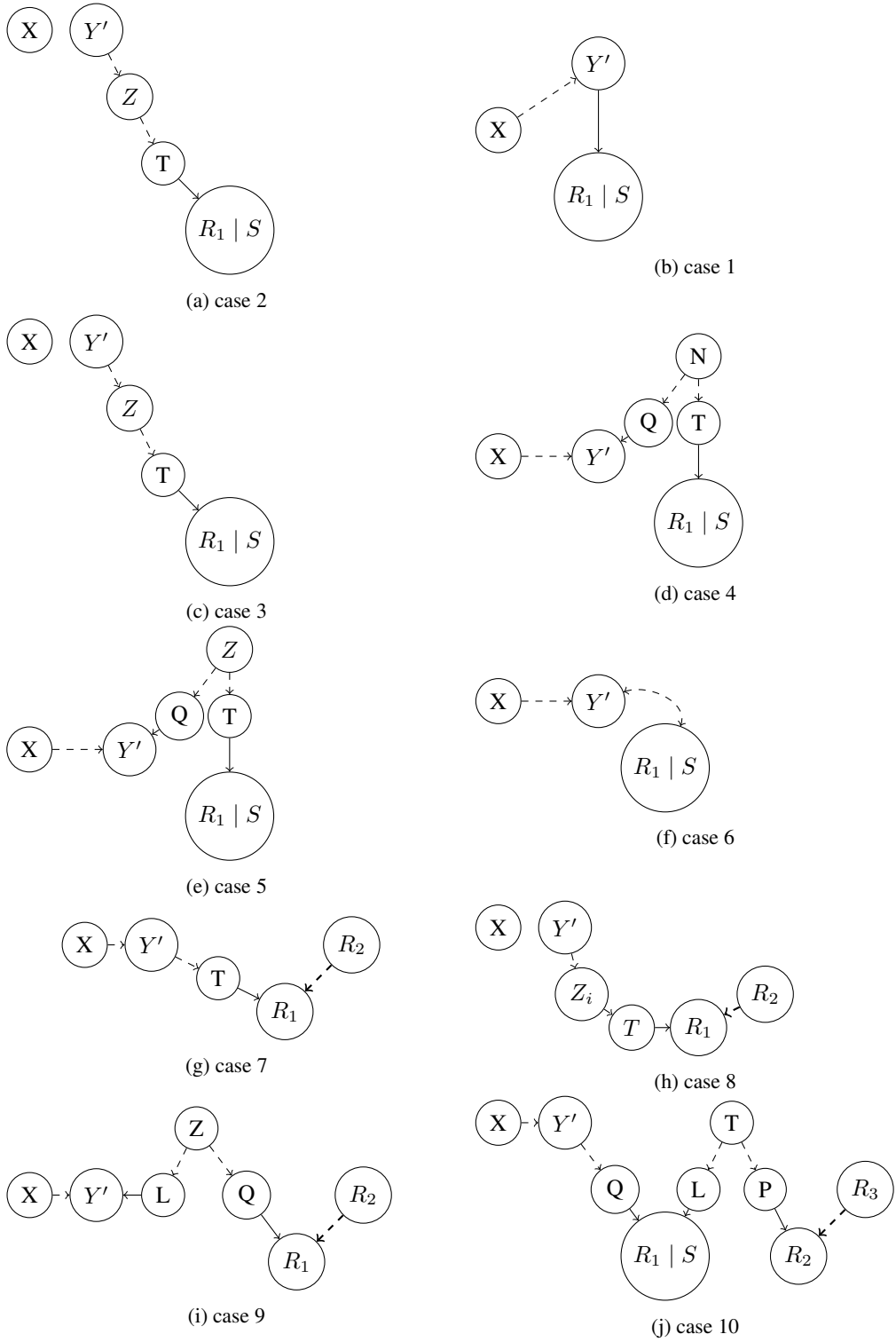

\begin{figure} 
\begin{center}
\begin{tabular}{cc}

\begin{subfigure}[normal]{0.5\textwidth}
	\begin{tikzpicture}[scale = 0.5]
	\node[circle,draw] (X) at (0,2) {X};
	\node[circle,draw] (Y) at (2,4) {$Y'$};
	\node[circle,draw] (Q) at (2,1) {$Q$};
	\node[circle,draw] (R1) at (5,0) {$R_1 \mid S$};
	\node[circle,draw] (L) at (6,2) {L};	
	\node[circle,draw] (Z) at (7,4) {Z};
	\node[circle,draw] (P) at (8,2) {P};
	\node[circle,draw] (R2) at (9,0) {$R_2$};
	\node[circle,draw] (R3) at (11,2) {$R_3$};
	
	\draw[->, dashed] (X)-- (Y);	
	\draw[dashed, ->] (Y)-- (Q);
	\draw[ ->] (Q)-- (R1);
	\draw[dashed,->] (Z)-- (L);
	\draw[->] (L)-- (R1);
	\draw[dashed,->] (Z)-- (P);
	\draw[->] (P)-- (R2);
	\draw[thick, dashed,->] (R3)-- (R2);
	
	\end{tikzpicture}
	\caption{case 11} 
\end{subfigure}
&
\begin{subfigure}[normal]{0.5\textwidth}
	\begin{tikzpicture}[scale = 0.5]
	\centering
	\node[circle,draw] (X) at (0,2) {X};
	\node[circle,draw] (Y) at (1,4) {$Y'$};
	\node[circle,draw] (Q) at (1,0) {$Z \mid Q$};
	\node[circle,draw] (R2) at (4.5,0) {$R_2 \mid S$};
	\node[circle,draw] (P) at (5,3) {P};
	\node[circle,draw] (Z) at (6,5) {$Z$};
	\node[circle,draw] (T) at (7.3,3) {T};
	\node[circle,draw] (R1) at (8,0) {$R_1 \mid S$};
	
	\draw[dashed, ->] (X)-- (Y);
	\draw[dashed, -] (Z)-- (T);
	\draw[dashed, ->] (Z)-- (P);
	\draw[->] (P)-- (R2);
	\draw[dashed, ->] (Y) --(Q);
	\draw[->] (Q) --(R2);
	\draw[ ->] (T)-- (R1);
	\end{tikzpicture}
	\caption{case 12}   
\end{subfigure}\\

\begin{subfigure}[normal]{0.5\textwidth}
	\begin{tikzpicture}[scale = 0.5]
	\node[circle,draw] (X) at (0,2) {X};
	\node[circle,draw] (Y) at (2,4) {$Y'$};
	\node[circle,draw] (Q) at (2,1) {$Z $};
	\node[circle,draw] (R1) at (5,0) {$R_1 \mid S$};
	\node[circle,draw] (L) at (6,2) {L};	
	\node[circle,draw] (Z) at (7,4) {Z};
	\node[circle,draw] (P) at (8,2) {P};
	\node[circle,draw] (R2) at (9,0) {$R_2$};
	\node[circle,draw] (R3) at (11,2) {$R_3$};
	\
	
	\draw[dashed, ->] (Y)-- (Q);
	\draw[ ->] (Q)-- (R1);
	\draw[dashed,->] (Z)-- (L);
	\draw[->] (L)-- (R1);
	\draw[dashed,->] (Z)-- (P);
	\draw[->] (P)-- (R2);
	\draw[thick, dashed,->] (R3)-- (R2);
	\end{tikzpicture}
	\caption{case 13}
\end{subfigure}
&
\begin{subfigure}[normal]{0.5\textwidth}
	\begin{tikzpicture}[scale = 0.5]
	\node[circle,draw] (X) at (0,2) {X};
	\node[circle,draw] (Y) at (1,4) {$Y'$};
	\node[circle,draw] (Q) at (1,0) {$Z$};
	\node[circle,draw] (R2) at (4.5,0) {$R_2 \mid S$};
	\node[circle,draw] (P) at (5,3) {P};
	\node[circle,draw] (Z) at (6,5) {$Z$};
	\node[circle,draw] (T) at (7.3,3) {T};
	\node[circle,draw] (R1) at (8,0) {$R_1 \mid S$};

	\draw[dashed, -] (Z)-- (T);
	\draw[dashed, ->] (Z)-- (P);
	\draw[->] (P)-- (R2);
	\draw[dashed, ->] (Y) --(Q);
	\draw[->] (Q) --(R2);
	\draw[ ->] (T)-- (R1);
	\end{tikzpicture}
	\caption{case 14}
\end{subfigure}

		\end{tabular}
		\caption{All cases condition (c) violated.  \label{fig-32}}
	\end{center}
\end{figure}

In this part we are evaluating whether the condition (d) is necessary or not. To condition (d) be violated, there should be a back-door path between $X \in \tb{X}\cap An(\tb{R}_{\tb{W}})$ and $Y\textprime \in \tb{Y}$. We name this path $p$. Based on condition (b), this path should be blocked by some $Z\textprime \in \tb{Z}$. There are two scenarios. The first one is, we have a direct causal path from $X$ to $Y\textprime$ and the second one is, lack of existence of that type of path, both cases are demonstrated in Fig.~\ref{fig-Cond}. The proof for these cases are similar to selection bias one and is provided in \cite{correa2018generalized}.

\begin{figure} 
	\begin{center}
		\begin{tabular}{ccc}
	\begin{subfigure}[normal]{0.5\textwidth}
	\centering
	\begin{tikzpicture}[scale = 0.5]
	\node[circle,draw] (X) at (4,4) {X};
	\node[circle,draw] (Y) at (8,4) {Y\textprime};
	\node[circle,draw] (Z0) at (6,6) {$Z\textprime$};
	\node[circle,draw] (R) at (0,4) {$R \mid S$};
	
	\draw[dashed, ->] (Z0)-- (X);
	\draw[dashed, ->] (Y)-- (Z0);
	\draw[dashed, ->] (X)-- (R);
	\end{tikzpicture}
	\caption{case 1} \label{fig:M2}  
\end{subfigure}
 &
 
	\begin{subfigure}[normal]{0.5\textwidth}
	\centering	
	\begin{tikzpicture}[scale = 0.5]
	\node[circle,draw] (X) at (4,4) {X};
	\node[circle,draw] (Y) at (8,4) {Y\textprime};
	\node[circle,draw] (Z) at (6,6) {Z\textprime};
	\node[circle,draw] (R) at (0,4) {$R \mid S$};
	
	\draw[dashed,->] (X) -- (Y);
	\draw[dashed,->] (Z) -- (X);
	\draw[dashed,->] (Z) -- (Y);
	\draw[dashed,->] (X)-- (R);
	\end{tikzpicture}
	\caption{case 2} \label{fig:M1}  
\end{subfigure}\\

\end{tabular}
\caption{Cases considered for the necessity of condition (d).   \label{fig-Cond}}
	\end{center}
\end{figure}
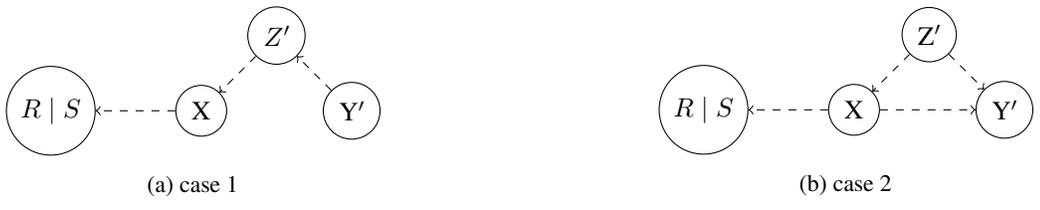

\noindent\\\tb{Theorem} \tb{\ref{thm-Madj-com}} [M-Adjustment]. A set \tb{Z} is a m-adjustment set for recovering causal effect of \tb{X} on \tb{Y} by the m-adjustment formula in Def.~\ref{defMadjustment} if and only if it satisfies the M-adjustment criterion in Def.~\ref{defMcriterion}.  \\
\tb{Proof}: The proof of this theorem is almost similar to the proof of theorem.~\ref{thm-MSadj-com} with the difference that here we have $\tb{R}_{\tb{W}}$ instead of $\tb{R}_{\tb{W}} \cup S$. \\

\end{document}